\documentclass{article}

\usepackage{microtype}
\usepackage{graphicx}
\usepackage{subcaption}
\usepackage{booktabs} 
\usepackage{enumitem}
\usepackage{soul}
\usepackage{wrapfig}

\usepackage{hyperref}

\usepackage[accepted]{icml2026}

\usepackage{amsmath}
\usepackage{amssymb}
\usepackage{mathtools}
\usepackage{amsthm}

\usepackage[capitalize,noabbrev]{cleveref}

\theoremstyle{plain}
\newtheorem{theorem}{Theorem}[section]
\newtheorem{proposition}[theorem]{Proposition}
\newtheorem{lemma}[theorem]{Lemma}
\newtheorem{corollary}[theorem]{Corollary}
\theoremstyle{definition}
\newtheorem{definition}[theorem]{Definition}
\newtheorem{assumption}[theorem]{Assumption}
\theoremstyle{remark}
\newtheorem{remark}[theorem]{Remark}

\usepackage{amsmath,amsfonts,bm}
\usepackage{mathtools}
\usepackage{cancel}

\def\eqref#1{equation~\ref{#1}}

\def\1{\bm{1}}

\DeclareMathAlphabet{\mathsfit}{\encodingdefault}{\sfdefault}{m}{sl}
\SetMathAlphabet{\mathsfit}{bold}{\encodingdefault}{\sfdefault}{bx}{n}

\newcommand{\E}{\mathbb{E}}

\DeclareMathOperator*{\argmax}{arg\,max}

\usepackage[textsize=tiny]{todonotes}

\icmltitlerunning{Learning to Perceive the World Through Control}

\begin{document}

\twocolumn[
\icmltitle{Learning to Perceive the World Through Control: \\Empowerment-Based Representation Learning}



  \icmlsetsymbol{equal}{*}

  \begin{icmlauthorlist}
    \icmlauthor{Mahsa Bastankhah}{yyy}
    \icmlauthor{Sophie Broderick}{yyy}
    \icmlauthor{Benjamin Eysenbach}{yyy}
  \end{icmlauthorlist}

  \icmlaffiliation{yyy}{Princeton University, USA}

  \icmlcorrespondingauthor{Mahsa Bastankhah}{mb6458@princeton.edu}

  \icmlkeywords{Empowerment, Mutual Information Skill Learning, Representation Learning, Causal Model Learning, RL pre-training}

  \vskip 0.3in
]



\printAffiliationsAndNotice{}  

\begin{abstract}


In many practical reinforcement learning (RL) environments, observations are far higher-dimensional than the variables that matter for control. 
In this work, we ask: can we learn representations that capture only control-relevant features of the environment without relying on any reward function?
We study this question through the \emph{empowerment} objective, which maximizes an agent’s influence over the environment and is widely used for unsupervised skill learning.
We show that empowerment agents optimized by variational lower bound induce two distinct representations --- forward and backward --- that capture complementary aspects of the state, and both of which are invariant to control-irrelevant features. Thus, empowerment maximization leads agents to learn an implicit, \emph{control-centric} model of the world.
Our analysis highlights the importance of learning representations through interaction rather than from passive datasets: interaction aimed at maximizing control is essential for learning useful invariance properties, a perspective that aligns closely with the causal learning literature.

\vspace{0.5em}
\noindent\footnotesize\textbf{Project website:}
\href{https://mahsa-bastankhah.github.io/MISL}
{mahsa-bastankhah.github.io/MISL}
\end{abstract}

\section{Introduction}



Learned representations have proven crucial to the success of Reinforcement Learning (RL) systems in applications from backgammon~\citep{tesauro1994td} to Go~\citep{schrittwieser2020mastering} to Grand Turismo~\citep{wurman2022outracing}. However, Unlike vision and language models trained on passive datasets, RL agents must collect their own data through interaction, and it remains unclear what kind of interaction leads to representations that generalize to any control task~\citep{ilyas2019adversarial}.
Learning representations with theoretical gaurantees typically requires access to a reward function (e.g., bisimulation; \citealt{ferns2011bisimulation,zhang2020learning}) or a dataset of sufficient coverage and breadth~\citep{du2019provably, wang2022denoised, ICML22-wang, zhang2020invariant}. 
%
But reward-based representations may discard features that are irrelevant for the current task but essential for control in other tasks \citep{zhang2020learning, zhang2020invariant}. And methods that rely on a fixed dataset, inherit the biases of that data~\citep{touati2021learning}. 
\citep{lamb2023guaranteed, efroni2022provably}.
In this paper, we instead ask: \emph{how can an agent learn an intrinsic representation of its environment, independent of any task and any collected data?}

The causal learning literature emphasizes that agents must actively intervene in the environment to rule out spurious correlations and identify causal structures essential for generalization~\citep{Pearl_2009, 10.5555/3202377}. 
In this paper, we will study how empowerment, an objective predicated on interaction,  is a suitable objective for driving invariant representation learning.
Empowerment is the channel capacity between the policy (skill) and the resulting future outcome and it quantifies how many distinct policies can be reliably distinguished from observing their future outcomes
\cite{gregor2016variational, Klyubin05empowerment}. Prior work has shown this objective drives skill learning and exploration~\cite{gregor2016variational, Klyubin05empowerment, eysenbach2018diversityneedlearningskills, ICLR2025_MISLFly, park2024metrascalableunsupervisedrl}. We will use the terms empowerment and mutual information skill learning (MISL) interchangeably in this paper.

To maximize empowerment, the agent must learn a sort of \emph{world model} that is sufficient for predicting which actions lead to a certain outcome from an initial state. Prior work has referred to this type of model as an \emph{inverse} dynamics model~\citep{du2019provably, ghosh2018learning}. This model allows the agent to identify and discard policies that produce redundant outcomes, instead keeping policies that produce diverse outcomes \cite{eysenbach2018diversityneedlearningskills, ICLR2025_MISLFly, park2024metrascalableunsupervisedrl}. In this work, we study the state representations implicitly learned by this world model.
%

Our main conceptual building block is to relate empowerment maximization to two representations, \emph{forward} and \emph{backward} representations.
%
We show that these representations capture complementary aspects of state: the forward representation encodes what outcomes can be influenced, while the backward representation encodes how the state can be reached by different policies (Section~\ref{subsec:theory-aliasing}). 

Using these representations, we will study the interplay between data collection and representation learning in empowerment. We show that empowerment under certain assumptions induces policies that \textit{(i)} ignore control-irrelevant features; \textit{(ii)} maximize coverage of controllable outcomes; and \textit{(iii)} are invariant to action-interface changes \footnote{If two states differ only in the action interface (a relabeling of actions) but induce the same outcomes, then MISL learns interface-invariant skills: it takes the corresponding action in each state. Details in Theorem~\ref{theorem:forward-alias-similar-future-control}.
}.
These properties respectively yield representations that are \textit{(i)} invariant to control-irrelevant features;
and \textit{(ii)} preserve all control-relevant structure\footnote{Under certain assumptions specified in proposition~\ref{proposition:deterministic-identifiability}}; and \textit{(iii)} invariant to action relabeling, respectively (Sections~\ref{subsec:theory-invariance-uncontrollable},~\ref{subsec:theory-identifiability} and~\ref{subsec:theory-aliasing}). 
Our analysis highlights the central role of data collection; because empowerment policies maximize control over the environment, the representations  they induce have these invariance properties.

Our work removes the common assumption in prior control-focused representation learning methods of access to either a reward function~\citep{ferns2011bisimulation, zhang2020learning} or expert datasets~\citep{du2019provably, wang2022denoised, ICML22-wang, zhang2020invariant}. We theoretically characterize the invariance properties of empowerment-based representations and empirically demonstrate their usefulness for reward maximization in high-dimensional environments.

In particular, our contributions are as follows:
\begin{enumerate}[leftmargin=*, itemsep=0pt, topsep=1pt, parsep=0pt, partopsep=0pt]
    \item We introduce MISL forward and backward representations as a conceptual tool for understanding the state aliasing of empowerment-maximizing agents (Sections~\ref{sec:introducing-representations} and ~\ref{subsec:theory-aliasing}). 
    \item We prove that MISL policies and representations admit solutions invariant to control-irrelevant features of the environment and to the change in action interface (Section~\ref{subsec:theory-invariance-uncontrollable}, Section~\ref{subsec:theory-aliasing}). 
    \item We characterize conditions under which MISL representations preserve all control-relevant features, and show that this yields a strictly more generalizable, reward-agnostic analogue of bisimulation (Section~\ref{subsec:theory-identifiability}).
    \item We provide didactic tabular experiments illustrating the forward--backward asymmetry of MISL representations and demonstrating that they capture all controllable features (Sections~\ref{sec:experiments} and Appendix~\ref{app:identifiability}).
    \item We validate the noise invariance and control utility of MISL representations in high-dimensional and pixel-based control tasks, and show that they enable substantially higher downstream reward maximization performance under severe noise (Section~\ref{sec:experiments}).
\end{enumerate}

\section{Related Works}
\textbf{Invariant representation learning in reinforcement learning}
A class of methods consists of model-based approaches. For example, \citet{wang2022denoised} use a VAE to disentangle controllable, uncontrollable, and reward-relevant factors, while \citet{ICML22-wang} learn causal dynamics via forward models and conditional independence tests. A common limitation of these methods is that they require learning a full environment model before irrelevant features can be removed, which can hinder scalability.
Inverse-dynamics–based methods learn representations by predicting actions between state pairs \cite{lamb2023guaranteed, efroni2022provably}. Although they guarantee invariance to uncontrollable features under sufficient coverage, they assume access to data collected by policies that are invariant to uncontrollable features. In contrast, we show that optimizing the empowerment objective naturally yields policies with these properties.

\textbf{Empowerment and mutual information skill learning}
Empowerment, defined as the channel capacity between actions and future states, was introduced as a principle for adaptive behavior ~\citep{Klyubin05empowerment} and later incorporated into reinforcement learning via variational objectives that induce diverse skills ~\citep{gregor2016variational,eysenbach2018diversityneedlearningskills, park2024metrascalableunsupervisedrl, successorfeature, park2022lipschitzconstrained}. While these methods focus on behavioral diversity, the representations induced by MISL have received limited theoretical analysis. Recent work studies properties of these representations under particular implementations and parameterizations ~\citep{reizinger2025skilllearningpolicydiversity} or their invariance to temporally uncorrelated features~\citep{levy2025representation}. In contrast, we provide a method-agnostic characterization of MISL representations and characterize their properties with respect to control.
Prior work has empirically observed that empowerment-trained policies are invariant to noise \cite{hansen2021entropic, gregor2016variational, mohamed2015variational}. However to the best of our knowledge we provide the first formal characterization of these invariance properties.

\section{Preliminaries}

We consider infinite-horizon Markov decision processes (MDPs) with state space $\mathcal{S}$, action space $\mathcal{A}$, and transition dynamics $p(s_{t+1}\mid s_t,a_t)$ in a fully observable setting. The policy is denoted by $\pi(a\mid s)$, and $\gamma\in(0,1)$ is the discount factor.
The discounted occupancy measure $d^\pi_\gamma(s\mid s_0)$ is the distribution of the state $S_T$ obtained by starting from $s_0$, following $\pi$, and sampling time $T\sim\mathrm{Geom}(1-\gamma)$; equivalently,
\(
d_\gamma^\pi(s\mid s_0)\;\coloneqq\;(1-\gamma)\sum_{t=1}^{\infty}\gamma^{t-1}\,\Pr_\pi(S_t=s\mid S_0=s_0).
\)

\textbf{Unsupervised skill learning}
The \emph{empowerment} of an initial state $s_0$ is defined as the channel capacity between a latent policy variable $Z$ (also called a skill), which indexes a family of policies $\pi:\mathcal{S}\times\mathcal{Z}\to\Delta(\mathcal{A})$, and a future state $S^+$ sampled from the discounted occupancy measure induced from $s_0$:
\vspace{-0.2em}
\begin{equation}
\mathcal{E}_{\gamma}(s_0)
\;=\;
\max_{p(z \mid s_0)} I(Z ; S^+ \mid s_0),
\qquad
S^+ \sim d^{\pi_z}_\gamma(\cdot \mid s_0).
\label{eq:empowerment_def_occupancy}
\end{equation}
Here, with a slight abuse of notation, we write $\pi_z$ to denote the policy indexed by skill $z$.
Some prior works instead use the $n$-step state distribution for a fixed horizon $n$ \cite{levy2025representation}:
\vspace{-0.5em}
\begin{equation}
\mathcal{E}_n(s_0)
\;=\;
\max_{p(z \mid s_0)} I(Z ; S_n \mid s_0).
\label{eq:empowerment_def}
\end{equation}
Solving the empowerment optimization problem results in an optimal distribution over the skills that we denote by $p^*(z|s_0)$. Intuitively, empowerment measures the maximum number of distinguishable behaviors that can be executed from $s_0$, where behaviors are considered distinct if they induce reliably different terminal states~\citep[Appendix A]{mohamed2015variational}. 

Most prior work \cite{gregor2016variational, achiam2018variational, hansen2019fast, sharma2019dynamics, eysenbach2018diversityneedlearningskills, ICLR2025_MISLFly, park2022lipschitzconstrained}
optimizes this objective via a variational lower bound~\citep{barber2004algorithm} (see Appendix~\ref{app:empowerment-variational-lower-bound}):
\begin{equation}
\max_{p(z \mid s_0)} I(Z; S^+ \mid s_0)
\;\ge\;
\max_{p(z\mid s_0),\,\theta}
\mathbb{E}_{z,s^+}
\left[
\log \frac{q_{\theta}(z \mid s_0, s^+)}{p(z \mid s_0)}
\right],
\label{eq:variational-short}
\end{equation}
This bound is tight when the discriminator is Bayes-optimal:
\begin{equation}
q_{\theta^*}(z \mid s_0, s^+)
=
\frac{p(z \mid s_0)\,d^{\pi_z}_{\gamma}(s^+ \mid  s_0)}{p(s^+ \mid s_0)}.
\label{eq:bayes-discriminator}
\end{equation}

In this work, we assume an expressive discriminator that attains the Bayes-optimal solution. In practice, direct optimization over distributions on policies is intractable.
Therefore, prior methods
\cite{eysenbach2018diversityneedlearningskills, ICLR2025_MISLFly, park2022lipschitzconstrained}
fix a prior $p(z \mid s_0)$ and instead optimize a skill-conditioned policy
$\pi_\omega(a \mid s, z)$, which induces an occupancy measure 
$d^{\pi_\omega(.\mid z)}_\gamma(s^+ \mid s_0)$.
This yields the practical objective
\vspace{-0.5em}
\begin{equation}
\max_{\omega,\theta}
\mathbb{E}_{z}\mathbb{E}_{s^+ \sim d_\gamma^{\pi_w(.\mid z)}}
\left[
\log \frac{q_{\theta}(z \mid s_0, s^+)}{p(z \mid s_0)}
\right],
\label{eq:empowerment-optimization-practice}
\end{equation}
\vspace{-0.7em}

In this work, we follow \citet{eysenbach2022the} and adopt the theoretically well-studied empowerment objective in Equation~\ref{eq:empowerment_def_occupancy} for our analysis. 


\section{MISL Forward and Backward Representations}\label{sec:introducing-representations}
In this section, we characterize the representations learned by MISL. We will show that although the discriminator $q_{\theta}(z\mid s_0,s^+)$ is typically
introduced as an auxiliary model for estimating and optimizing the mutual information
objective,  it can be viewed as a standalone representation-learning mechanism.
In particular, the representations it learns from the initial and future states exhibit
meaningful and useful structure.
We consider the following factorization for the discriminator:
\vspace{-0.5em}
\begin{equation}\label{eq:optimal-posterior-decomposition}
q_{\theta}(z \mid s_0, s^+)
=
q\!\left(z \mid \phi(s_0), \psi(s^+)\right),
\end{equation}
where $\phi:\mathcal{S}\to\mathbb{R}^d$ and $\psi:\mathcal{S}\to\mathbb{R}^d$ define the
\emph{forward} and \emph{backward} MISL representations, respectively. We adopt this terminology because, as shown later, $\phi(\cdot)$
clusters states by \emph{similar future}, whereas $\psi(\cdot)$ clusters states by
\emph{similar past}. Note that this factorization doesn't preclude us from representing any posterior distribution if $\phi , \psi$ are expressive enough.
We want these representations to retain exactly the information needed to predict the skill. So we define the notion of \emph{minimal} representations:

\begin{definition}[Minimal representations]\label{definition:minimal-repr}
Let the posterior over skills be $p(z \mid s_0, s^+)$, and let
$q(z \mid \phi(s_0), \psi(s^+))$ denote its estimate, where
$\phi:\mathcal{S}\to\mathbb{R}^d$ and $\psi:\mathcal{S}\to\mathbb{R}^d$
are learned state representations. We call $(\phi,\psi)$ a pair of
\emph{minimal representations} if:
\begin{enumerate}[leftmargin=*, itemsep=0pt, topsep=1pt, parsep=0pt, partopsep=0pt]
    \item \textbf{Sufficiency:} the representations preserve all information needed to predict the skill:
    \vspace{-0.5em}
    \[
    q(z \mid \phi(s_0), \psi(s^+)) = p(z \mid s_0, s^+), 
    \qquad \forall\, z, s_0, s^+.
    \]
    \item \textbf{Minimality:} two states share the same representation
    \emph{if and only if} they induce the same posterior over skills. That is,
\vspace{-0.5em}
\begin{align*}
p(z \mid s_0, s^+) &= p(z \mid \hat{s}_0, s^+) 
\quad \forall\, z, s^+ \\
&\iff\quad 
\phi(s_0)=\phi(\hat{s}_0)
\end{align*}
\vspace{-0.5em}
    and symmetrically for $\psi(.)$.
\end{enumerate}
\end{definition}
    \vspace{-0.3em}
Intuitively, these representations should capture exactly the parts of the state that are predictive of the skill and discard the rest. 
In our theoretical analysis, we assume that the discriminator is sufficiently regularized—e.g., via information bottleneck–style regularization—to induce minimal representations \cite{tishby99information, alemi2017deep}.
However, appendix~\ref{app:repr-minimality-without-regularization} discusses why existing MISL methods satisfy this property even without explicit information-bottleneck regularization, and Appendix~\ref{app:repr-minimality-without-regularization} provides empirical support for this claim.
Because we assume minimal representations, any part of the state that is not predictive of the skill is ignored by the representation. Formally:
\begin{definition}[Representation Invariance]\label{remark:posterior-invariance-implies-representational-invariance}
Let the state be decomposed into $s=(x,e)\in\mathcal{X}\times\mathcal{E}$. 
If the posterior is independent of the feature $e$, i.e.,
\mbox{$
p(z \mid (x,e), s^+) =  p(z\mid(x,\hat e),s^+)
\quad \forall\, z, s^+, x, e, \hat e,
$}
then the MISL forward representation is invariant to \mbox{$e$:
\(
\phi(x,e) = \phi(x,\hat e), \; \forall\, x,e,\hat e.
\)}
A symmetric statement holds for the backward representation $\psi(\cdot)$.
\end{definition}
\vspace{-0.5em}
Conversely, If two states lead to different posteriors, they must have different representations. 
\begin{remark}\label{remark:posterior-separation-results-in-variance-in-repr}
If there exist $s^+$ and $z$ such that
\mbox{\(
p(\nobreak z \nobreak\mid s_0, s^+) \neq p(z \mid \hat{s}_0, s^+),
\)}
then $\phi(s_0) \neq \phi(\hat{s}_0)$. A symmetric statement holds for $\psi(\cdot)$.
\end{remark}


\section{Theoretical Results}\label{sec:theory}
We first characterize in Subsection~\ref{subsec:theory-aliasing} how MISL forward and backward representations alias states and highlighting a fundamental asymmetry between them. We then
relate these representations to existing representation-learning frameworks. Next, in Subsection~\ref{subsec:theory-invariance-uncontrollable} we show that both MISL forward and backward representations are invariant to control-irrelevant state features.
Finally, in Subsection~\ref{subsec:theory-identifiability}, we characterize the conditions under which MISL representations capture all control-relevant features.

\subsection{Aliasing in MISL representations}\label{subsec:theory-aliasing}
\textbf{MISL forward representations encode control over the future}
The forward representation captures how states differ in the actions available to them and the outcomes of those actions. 
For example, two maze states at the same location but with different wall colors allow the same actions and transitions and thus share the same forward representation (Figure~\ref{fig:forward-mdp}, states $B$ and $C$).

\begin{theorem}
\label{theorem:forward-alias-similar-future-control}
Let $s_0$ and $\hat{s}_0$ be two states. Suppose that for every action
$a \in \mathcal{A}(s_0)$ \footnote{$\mathcal{A}(s)$ is the set of actions available to the agent at state $s$.} there exists an action
$\hat a \in \mathcal{A}(\hat s_0)$ such that
\(
p(s^+ \mid s_0, a) = p(s^+ \mid \hat s_0, \hat a),
\)
and vice versa, Then:
\begin{enumerate}[leftmargin=*, itemsep=0pt, topsep=1pt, parsep=0pt, partopsep=0pt]
   \item any policy selected by MISL induces the same action distribution in the two states:
   \vspace{-0.7em}
   \[p^*(z\mid s_0) > 0 \implies \pi(a\mid s_0 , z) =  \pi(\hat{a}\mid \hat{s}_0 , z)\,;\]
    \item under the minimal representation assumption (Definition~\ref{definition:minimal-repr}), the MISL forward representations coincide:
    \vspace{-0.7em}
    \[
    \phi(s_0) = \phi(\hat s_0).
    \]

\end{enumerate}
    \vspace{-0.7em}
\end{theorem}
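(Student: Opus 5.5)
The plan is to prove part 2 by reducing it to a statement about posteriors over skills, and to obtain that statement from part 1 (policy invariance). By the minimality condition of Definition~\ref{definition:minimal-repr}, $\phi(s_0)=\phi(\hat s_0)$ follows once
\[
p(z\mid s_0,s^+)=p(z\mid \hat s_0,s^+)\qquad \forall\, z,s^+ .
\]
By the Bayes-optimal form \eqref{eq:bayes-discriminator}, this posterior is determined by the prior $p^*(z\mid s_0)$ and the occupancies $d_\gamma^{\pi_z}(s^+\mid s_0)$. It therefore suffices to show that the empowerment problem at $\hat s_0$ admits an optimal solution with the same skill prior and, for every skill, the same occupancy measure as at $s_0$.

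\textbf{Step 1: transport policies between the two states.} Use the hypothesis to build a matching map $a\mapsto\hat a$ from $\mathcal{A}(s_0)$ to $\mathcal{A}(\hat s_0)$ and back. For any skill-conditioned policy, define its transported version by putting on $\hat a$ the mass that $\pi(\cdot\mid s_0,z)$ puts on $a$, and leaving the policy unchanged at all other states.

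Write the occupancy as a one-step decomposition,
\[
d^{\pi_z}_\gamma(s^+\mid s_0)=\sum_a \pi(a\mid s_0,z)\,p(s^+\mid s_0,a),
\]
where $p(s^+\mid s_0,a)$ is the discounted future-state distribution after taking $a$ and then following $\pi_z$. This is how I read the hypothesis $p(s^+\mid s_0,a)=p(s^+\mid\hat s_0,\hat a)$. With this reading, transported policies induce identical occupancy measures from $s_0$ and from $\hat s_0$.

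\textbf{Step 2: equal empowerment and equal optimizers.} The set of achievable occupancy families $\{d^{\pi_z}_\gamma(\cdot\mid s_0)\}_z$ coincides with the set achievable from $\hat s_0$, using the "vice versa" direction for the reverse inclusion. Hence the mutual information objective, viewed as a function of $p(z)$ and the conditional distributions over $s^+$, has the same feasible set at the two starting states. The two states therefore share the same maximizers. Selecting corresponding optimizers, every skill with $p^*(z\mid s_0)>0$ uses, at $\hat s_0$, the transported action distribution. This is part 1, read as $\pi(a\mid s_0,z)=\pi(\hat a\mid \hat s_0,z)$ under the matching. Substituting equal priors and equal occupancies into \eqref{eq:bayes-discriminator} gives equal posteriors, and minimality then yields $\phi(s_0)=\phi(\hat s_0)$.

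\textbf{Main obstacle.} The difficulty is the non-uniqueness of optimizers and of the matching. Several actions may share an outcome distribution, and the empowerment maximizer need not be unique. Part 1 therefore only holds for a canonical, consistent choice of solution, so I would phrase it as "there exists an optimal solution such that", or aggregate over each equivalence class of actions with identical outcomes.

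The continuation policy after the first step is shared between the two starting points. This needs care when $s_0$ or $\hat s_0$ can be revisited. I would handle it by the same transport argument applied at every visit, which is consistent because the hypothesis is symmetric. Alternatively, I would treat the condition as a statement about the full continuation under the given skill policy.
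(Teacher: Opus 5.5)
Your reduction is the right one. By minimality, part~2 follows once a single skill family $\{\pi_z\}$ with a single prior is optimal at both $s_0$ and $\hat s_0$ and satisfies $d^{\pi_z}_\gamma(\cdot\mid s_0)=d^{\pi_z}_\gamma(\cdot\mid \hat s_0)$ on its support. Under the one-step reading of the hypothesis (which is how the paper uses it), every transported, i.e.\ interface-consistent, policy has this property. You are also right that part~1 can only hold in the form ``there exists an optimum'': an asymmetric policy that never reaches $\hat s_0$ from $s_0$ has the same occupancy from $s_0$ as its transport, so it can sit at a selected vertex.

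The gap is the sentence ``selecting corresponding optimizers, every skill with $p^*(z\mid s_0)>0$ uses, at $\hat s_0$, the transported action distribution.'' It is asserted, not proved, and it is the real content of the theorem. Transport overwrites $\pi_z(\cdot\mid \hat s_0)$. So whenever $\hat s_0$ is reachable from $s_0$ under $\pi_z$, in general $d^{\tilde\pi_z}_\gamma(\cdot\mid s_0)\neq d^{\pi_z}_\gamma(\cdot\mid s_0)$, and nothing you say shows that an optimizer at $s_0$ is mapped to an optimizer. For instance, let $a_1,\hat a_1$ lead deterministically to $\hat s_0$ and $a_2,\hat a_2$ to $s_0$. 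The policy playing $a_1$ at $s_0$ and $\hat a_2$ at $\hat s_0$ has occupancy $\tfrac{1}{1+\gamma}\delta_{\hat s_0}+\tfrac{\gamma}{1+\gamma}\delta_{s_0}$ from $s_0$, whereas its transport has $\delta_{\hat s_0}$.

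Transporting ``at every visit'' just produces $\tilde\pi_z$ again; the problem is visits to the \emph{other} state, not revisits. For the same reason, Step~1 does not give the equality of achievable occupancy sets you invoke in Step~2. That would need a first-step-only, nonstationary transport followed by a stationarization result such as Altman's Theorem~3.1. Your alternative, policy-dependent reading of the hypothesis only helps a nonstationary policy (transported first action, original $\pi_z$ afterwards), which is not a Markov skill $\pi(a\mid s,z)$.

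What is missing is a lemma that interface-consistent policies suffice at the extreme points that empowerment uses. The paper gets it from an excursion decomposition over returns to $\{s_0,\hat s_0\}$. The occupancy of an asymmetric deterministic policy ($a_1$ at $s_0$, $\hat a_2$ at $\hat s_0$) is a convex combination of the occupancies $d^1,d^2$ of the symmetric policies $\pi_{z_1},\pi_{z_2}$. Lemma~\ref{lemma:vertices-empowerment} then lets the optimum be supported on symmetric skills, whose skill--outcome channel is identical from both starting states.

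Phrased for your argument: $\pi_{z_1}$ is exactly the transport of the asymmetric policy, and its weight in the mixture is strictly positive because the first excursion starts at $s_0$. So if the asymmetric policy's occupancy is a vertex, extremality forces $d^1$ to equal it. That is, transport preserves the occupancy from $s_0$ for every skill that can carry mass.

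An alternative route also gives the set equality. Every vertex $v$ of the achievable polytope from $s_0$ uniquely maximizes some linear functional $\langle r,\cdot\rangle$. With matched one-step transitions, the Bellman equation gives equal optimal values at $s_0$ and $\hat s_0$ and matched greedy actions. Hence a deterministic symmetric optimal policy realizes $v$ from both states. With either lemma in place, your Step~2 and the Bayes-rule/minimality conclusion go through.
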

Proof in Appendix~\ref{app:forward-backward-aliasing-proof}.

\textbf{Implication.}
Theorem~\ref{theorem:forward-alias-similar-future-control} shows MISL policies have invariance to changes in the
\emph{action interface}. 
For example, consider two versions of the same game that differ only in the action interface: in $s_0$ pressing
``A'' makes the agent jump and pressing ``B'' makes it crouch, while in $\hat s_0$ the labels are swapped.
The underlying dynamics are identical---only the interface mapping changes---so a skill such as ``jump'' should
transfer by simply executing the corresponding interface action.

Theorem~\ref{theorem:forward-alias-similar-future-control} shows MISL has exactly this consistency: it learns skills (e.g., ``jump'' and ``crouch'') that remain the same across interface changes. This property is extremely useful for generalization as the agent doesn't need to relearn when the interface is changed. Moreover,
Theorem~\ref{theorem:forward-alias-similar-future-control} shows that MISL also learns identical forward representations in these states. The forward representation ignores superficial interface differences and
groups states by their controllability structure. We empirically verify this property in Appendix~\ref{App:action-relabelling-exp}.

\textbf{Advantage over forward representations in SR} This invariance is not guaranteed for representation learning methods such as successor representations (SR),
which learn \emph{on-policy} representations~\cite{successorfeature}. SR features depend on the behavior policy
used to collect data and do not, by themselves, prescribe how that policy should be chosen. As a result, if
the data-collection policy is not invariant to an interface change, the learned representation will not be
invariant either. This creates a circular dependence: learning an invariant representation requires
collecting data with an invariant policy. MISL breaks this circle by prescribing a policy objective that
naturally yields interface-invariant optimal behavior.

\textbf{Geometric intuition}
The geometric interpretation of empowerment by \citet{eysenbach2022the} shows that the set of achievable state occupancies across skills forms a convex polytope in the state simplex, and MISL assigns probability only to the vertex policies of this polytope
\footnote{Refer to Lemma~\ref{lemma:vertices-empowerment}}. In simple terms, MISL prefers ``extreme'' ways of
affecting the future, not policies that blend the outcomes of other policies.
Here, taking different (non-matching) actions in $s_0$ and $\hat s_0$ makes the skill behave like a blend of
two different skills, so its future occupancy is a convex combination of theirs. So such policies will never be chosen by MISL.
Since MISL policies take the same action at $s_0$ and $\hat s_0$, the two
states induce exactly the same skill-outcome channel: for each skill $z$, $d^{\pi_z}_\gamma(s^+\mid s_0)$ and $d^{\pi_z}_\gamma(s^+\mid \hat s_0)$ are the same. Consequently, the induced posterior over skills is identical as well.

\begin{figure}[h!]
    \centering
    \begin{subfigure}{0.48\linewidth}
        \centering
        \includegraphics[width=\linewidth]{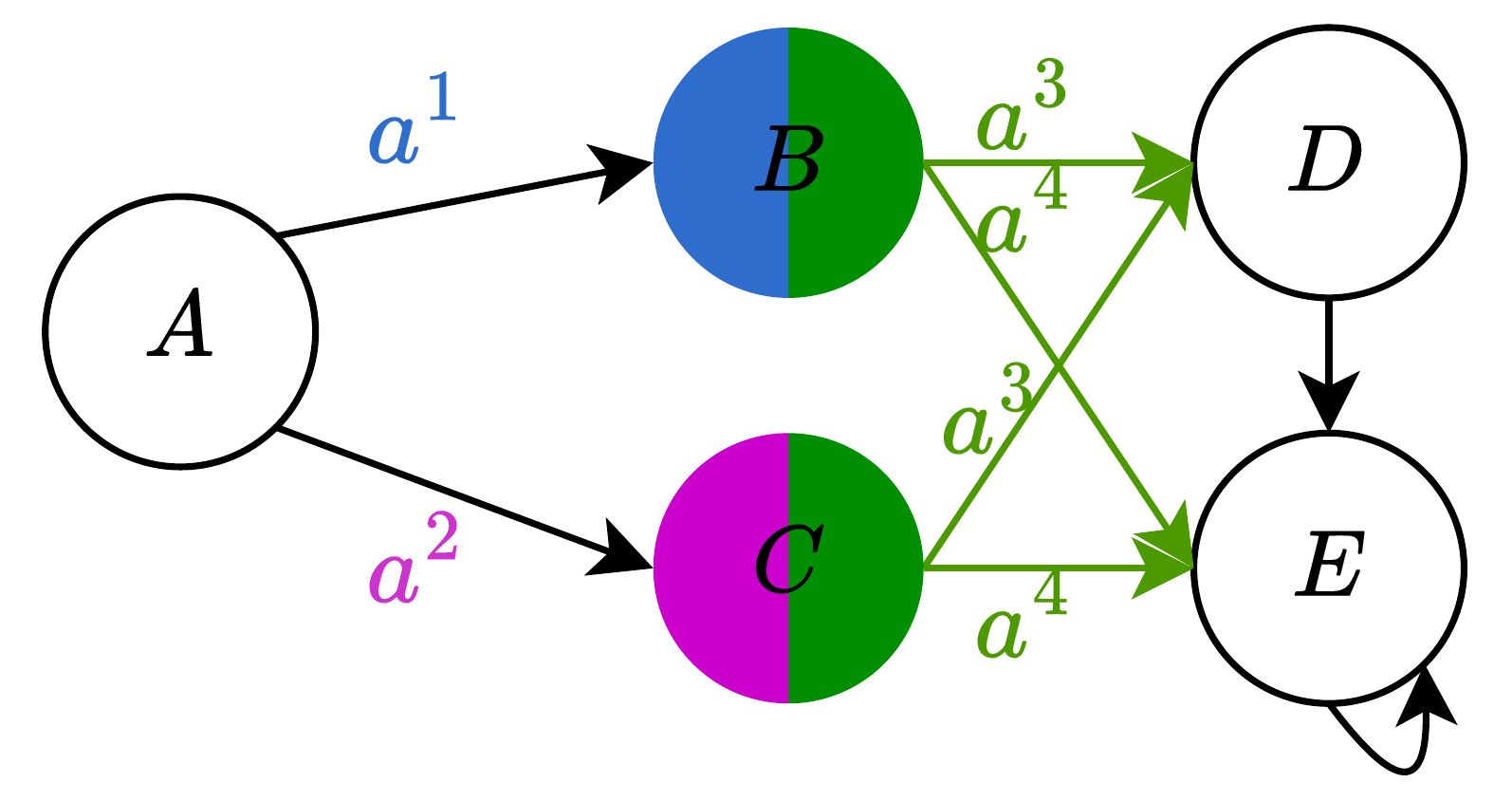}
        \vspace{1em}
        \caption{$B$ and $C$ have similar dynamics in the future hence they have the same forward representations but not the same backward representations.}
        \label{fig:forward-mdp}
    \end{subfigure}
    \hfill
    \begin{subfigure}{0.48\linewidth}
        \centering
        \includegraphics[width=\linewidth]{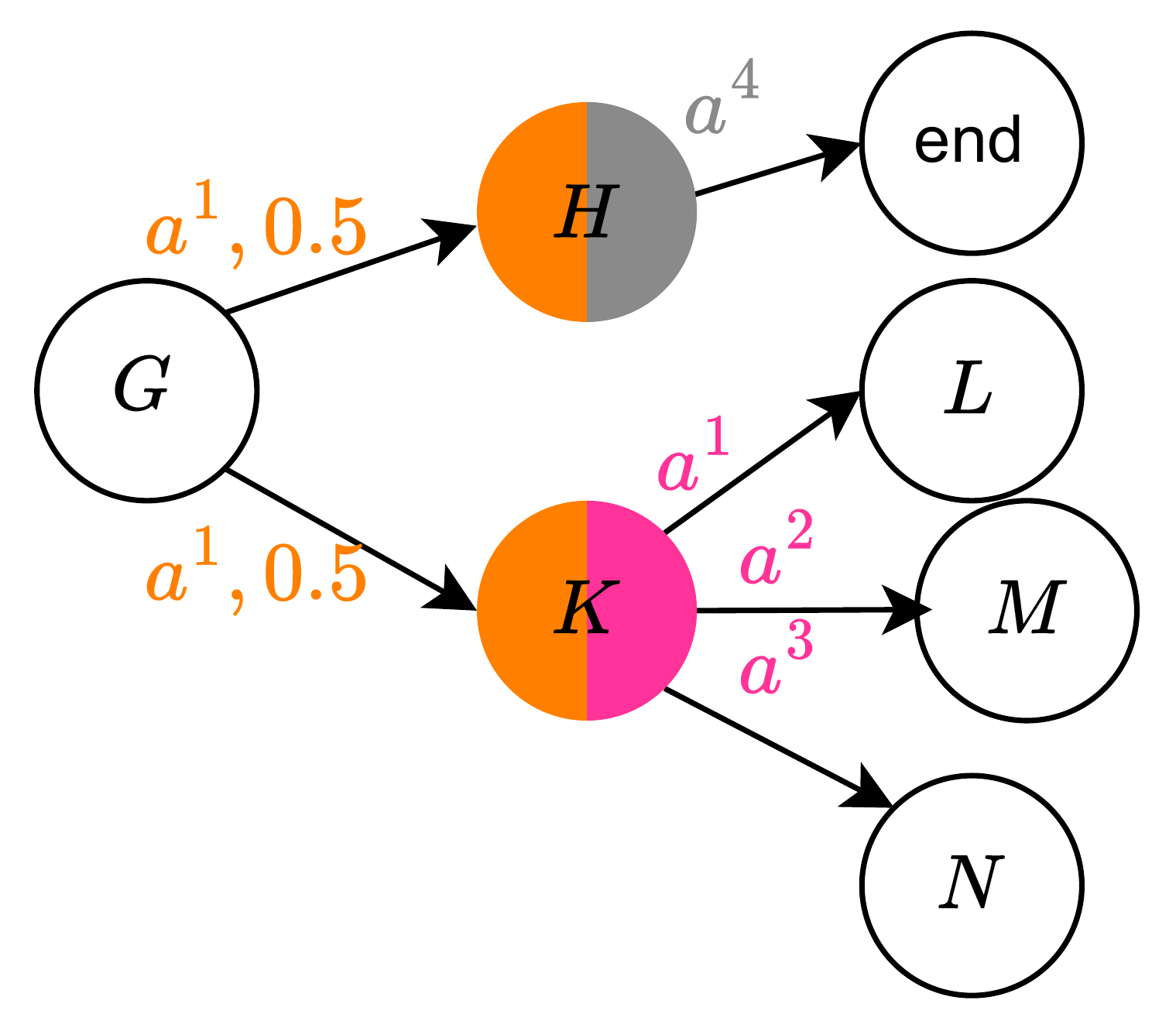}
        \caption{$H$ and $K$ are similarly reachable hence have the same backward representations but not the same forward representations.\\}
        \label{fig:backward-mdp}
    \end{subfigure}
    \caption{Forward and backward representations are asymmetric}
    \label{fig:repr-asymmetry}
    \vspace{-1em}
\end{figure}

\textbf{MISL backward representations cluster states by how they are reached}
While forward MISL representations characterize \emph{what an agent can do next},
backward MISL representations characterize \emph{how a state was reached}.
Intuitively, if two states $s^+$ and $\hat{s}^+$ can be reached
in indistinguishable ways—so that observing either state provides the same evidence
about which policy was executed to reach there—then they should share the same backward
representation.

\begin{proposition}[Backward aliasing]\label{proposition:backward-aliasing}
Suppose two states $s^+$ and $\hat{s}^+$ satisfy the following condition:
for every initial state $s_0$, either
(i) Both $s^+$ and $\hat{s}^+$ are unreachable from $s_0$, or
(ii) There exists a constant $\alpha(s_0) > 0$ such that
    \vspace{-0.1em}
    \[
    d^\pi_\gamma(s^+ \mid s_0) = \alpha(s_0)\, d^\pi_\gamma(\hat{s}^+ \mid s_0)
    \quad \forall\, \pi \in \Pi .
    \]
Then, under the minimal representation assumption, the MISL backward representations
alias these states:\mbox{
\(
\psi(s^+) = \psi(\hat{s}^+).
\)}
\end{proposition}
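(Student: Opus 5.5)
The plan is to reduce the claim to the minimality clause of Definition~\ref{definition:minimal-repr}, applied to the backward representation. That clause says $\psi(s^+)=\psi(\hat s^+)$ holds exactly when $p(z\mid s_0,s^+)=p(z\mid s_0,\hat s^+)$ for every $z$ and every $s_0$. So it suffices to show that the Bayes-optimal posterior in Equation~\ref{eq:bayes-discriminator} assigns the same skill distribution to $s^+$ and to $\hat s^+$, for each fixed initial state $s_0$. The argument then splits along the two cases of the hypothesis.

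\textbf{Case (ii).} Fix $s_0$ with $d^\pi_\gamma(s^+\mid s_0)=\alpha(s_0)\,d^\pi_\gamma(\hat s^+\mid s_0)$ for all $\pi\in\Pi$.
\begin{enumerate}[leftmargin=*, itemsep=0pt, topsep=1pt, parsep=0pt, partopsep=0pt]
\item Apply the identity to every skill policy $\pi_z$ in the support of $p^*(z\mid s_0)$. This gives $d^{\pi_z}_\gamma(s^+\mid s_0)=\alpha(s_0)\,d^{\pi_z}_\gamma(\hat s^+\mid s_0)$ for each such $z$.
\item Marginalize over $z$. The marginal $p(s^+\mid s_0)=\sum_z p(z\mid s_0)\,d^{\pi_z}_\gamma(s^+\mid s_0)$ then equals $\alpha(s_0)\,p(\hat s^+\mid s_0)$.
\item Substitute both relations into Equation~\ref{eq:bayes-discriminator}. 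The factor $\alpha(s_0)$ appears in the numerator and the denominator and cancels, giving $p(z\mid s_0,s^+)=p(z\mid s_0,\hat s^+)$ for all $z$.
\item Check that the posterior is well defined. If $p(\hat s^+\mid s_0)>0$, then $p(s^+\mid s_0)>0$ as well, because $\alpha>0$. If both marginals vanish under the chosen skill distribution, we fall into the degenerate situation handled below.
\end{enumerate}

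\textbf{Case (i), and the degenerate part of case (ii).} Here $s^+$ and $\hat s^+$ have zero probability under every policy from $s_0$, so the posterior given $(s_0,s^+)$ is undefined. I would adopt the convention that the posterior at an unreachable pair equals the prior $p(z\mid s_0)$; this is what a regularized discriminator outputs, since no data constrain it there. With that convention, both $s^+$ and $\hat s^+$ receive the same posterior, namely the prior.

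This is the main obstacle: the statement needs such a convention to make sense, and I would state it explicitly. The alternative is to restrict Definition~\ref{definition:minimal-repr} to pairs in the support of the data distribution, which gives the same conclusion.

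\textbf{Conclusion.} Equality of the posteriors then holds for all $s_0$ and $z$. By the backward analogue of Definition~\ref{remark:posterior-invariance-implies-representational-invariance}, or directly by the ``if'' direction of minimality for $\psi$, we obtain $\psi(s^+)=\psi(\hat s^+)$.

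One point needs care. The constant $\alpha$ must not depend on $z$, which is guaranteed because the hypothesis holds uniformly over all $\pi\in\Pi$. It may depend on $s_0$, which is harmless because the posterior comparison is made for each fixed $s_0$.
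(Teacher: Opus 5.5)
Your proposal is correct and follows the paper's proof essentially step for step: for each fixed $s_0$ you substitute $d^{\pi_z}_\gamma(s^+\mid s_0)=\alpha(s_0)\,d^{\pi_z}_\gamma(\hat s^+\mid s_0)$ into the Bayes-optimal posterior, cancel $\alpha(s_0)$, and conclude $\psi(s^+)=\psi(\hat s^+)$ from the minimality clause for $\psi$. Your explicit convention for the unreachable (zero-marginal) case is slightly more careful than the paper, which only remarks that the posterior is undefined on the same support for both states.
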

    \vspace{-0.5em}
Importantly, states that are aliased by the backward representation may have very
different forward representations.
For example, consider two states corresponding to the same location on a map that is next to a door; in one state the door is locked (state $H$ in Fig.~\ref{fig:backward-mdp}), while in the other it is open (state $K$ in Fig.~\ref{fig:backward-mdp}), the door is open with probability $0.5$. From any initial state, both states can be reached by the same action sequences, resulting in identical backward representations. However, the open door state ($K$) allows much greater control over future outcomes than the locked-door state, and therefore has a different forward representation.

Interestingly, to the best of our knowledge, prior work has not identified this fundamental asymmetry in MISL representations and instead employs symmetric parameterizations for the forward and backward representations \cite{park2024metrascalableunsupervisedrl, ICLR2025_MISLFly}. In Section~\ref{subsec:experiment-asymmetry}, we empirically verify this asymmetry in practice and show that enforcing symmetric representations will decrease the skill and outcome mutual information.

Backward representations are particularly well suited for representing goals. If two goal states are reached by the same policies, then from the perspective of policy training they are equivalent and should be represented similarly. This idea closely matches the notion of \emph{actionable representations} introduced by \citet{ghosh2018learning}. While \citet{ghosh2018learning} obtain such representations by explicitly training and comparing goal-reaching policies for every pair of states, MISL backward representations recover this structure implicitly. In Appendix~\ref{app:backward-norm1}, we show that the $\ell_1$ distance between MISL backward representations has a clear interpretation as a measure of reachability difference, making it especially useful for goal embedding.

\subsection{Invariance to control-irrelevant features}\label{subsec:theory-invariance-uncontrollable}

In this section, we 
analyze MISL representations in MDPs whose states contain control-irrelevant components, showing that they are invariant to control-irrelevant factors. 
\begin{definition}\label{def:control-features-categories}
Consider an MDP with state $s_t=(y_t,w_t,e_t)$ whose transition dynamics factorize as
    \vspace{-0.1em}
\begin{align}\label{eq:factorization}
p(s_{t+1}\mid s_t,a_t)
&= p(y_{t+1}\mid y_t,w_t,a_t)\notag \\
&\quad \times p(w_{t+1}\mid w_t)\,
         p(e_{t+1}\mid e_t).
\end{align}

\begin{itemize}[leftmargin=*, itemsep=0pt, topsep=1pt, parsep=0pt, partopsep=0pt]
\item $y_t$ is \emph{controllable}, since its dynamics depend on the agent’s action.
\item $w_t$ and $e_t$ are \emph{uncontrollable}, as their dynamics are action-independent.
\item Among uncontrollable features, $w_t$ is \emph{control-relevant} because it affects the dynamics of $y_t$, whereas $e_t$ is \emph{control-irrelevant}.
\end{itemize}
\end{definition}
    \vspace{-0.7em}
Refer to Figure~\ref{fig:Gm-main}. For notational convenience, we group all the control-relevant features as
\(
x_t \coloneqq (y_t,w_t).
\)
Throughout, when we refer to $x_t$ as the \emph{control-relevant} features and $e_t$ as the \emph{control-irrelevant} features. We assume a fully observable setting, so all components of $s_t$ are observed by the agent. In the main body of the paper, we analyze the factorization in Equation~\ref{eq:factorization}. However, all results also extend to the more general case where \(w_t\) is additionally a parent of \(e_{t+1}\). In that case, the transition factorization becomes
\begin{align*}\label{eq:factorization-complex}
p(s_{t+1}\mid s_t,a_t)
&= p(y_{t+1}\mid y_t,w_t,a_t)\notag \\
&\quad \times p(w_{t+1}\mid w_t)\,
         p(e_{t+1}\mid e_t, w_t).
\end{align*}
All theoretical results still hold under a mild additional assumption. Since the analysis is more involved, we defer the full discussion and the details of the proof in this case to Appendix~\ref{app:theory-confounding-w}.

Next, given the state factorization in Definition~\ref{def:control-features-categories}, we ask whether policies optimized with the empowerment objective need to attend to all components \(y,w,e\). Since MISL policies maximize the diversity of reachable future outcomes, the key question is whether conditioning actions on the control-irrelevant component \(e\) allows the agent to reach a broader range of future state distributions. Showing that this is not the case will allow us to prove that maximizing empowerment naturally leads to representations that are invariant to these control-irrelevant features:

\begin{theorem}
\label{theorem:policy-empowerment-e-invariance}
Consider an MDP as in Definition~\ref{def:control-features-categories}.

There exists at least one empowerment-maximizing skill distribution
\vspace{-0.5em}
\begin{align*}
    p^*(z \mid s_0) \in \argmax_{p(z \mid s_0)} I(Z ; S^+ \mid S_0=s_0)
\end{align*}
\vspace{-0.1em}
composed of policies that are invariant to the
control-irrelevant feature $e$:
\begin{align*}
& p^*(z \mid s_0) > 0 \implies \\
& \qquad 
\pi(a \mid (x,e), z) = \pi(a \mid (x,\hat e), z)
\quad \forall\, x,e,\hat e,a,a.
\end{align*}

\end{theorem}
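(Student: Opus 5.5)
Given any empowerment-maximizing skill distribution $p^*(z\mid s_0)$, we will build another one with the same mutual information whose policies ignore $e$. Since $p^*$ attains the maximum, the new distribution also attains it, which gives existence.

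\textbf{Step 1: decompose the mutual information.} Write $S^+=(X^+,E^+)$. Under the factorization in Equation~\ref{eq:factorization}, the process $e_t$ evolves autonomously from $e_0$ and is independent of the $x$-process and of the actions, given $s_0$. This holds for \emph{any} policy, even one that conditions on $e_t$. The reason is that $e_{t+1}$ depends only on $e_t$, and actions only affect $y$. For a general policy, however, $X^+$ and $E^+$ can be correlated through the agent's choices. By the chain rule,
\[
I(Z;S^+\mid s_0)=I(Z;X^+\mid s_0)+I(Z;E^+\mid X^+,s_0).
\]
The time index $T\sim\mathrm{Geom}(1-\gamma)$ is shared by $X^+$ and $E^+$, so we cannot simply drop the second term. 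Instead we bound the total by $I(Z;X^+,E^+,T\mid s_0)$ and expand
\[
I(Z;X^+,E^+,T\mid s_0)=I(Z;T,E^+\mid s_0)+I(Z;X^+\mid T,E^+,s_0).
\]
The first term is $0$, since $T$ and $E^+$ are independent of $Z$. This reduces the problem to the conditional channel $Z\to X^+$ given $(T,E_T)$.

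\textbf{Step 2: replace each policy by an $e$-averaged one.} For each skill $z$, define an $e$-invariant policy $\tilde\pi(a\mid x,z)$. The natural choice is a deterministic selection rather than an average. Consider the induced $x$-dynamics as an MDP on $x$ alone; this is possible because $x$ evolves independently of $e$ once the policy ignores $e$. Geometrically, following Lemma~\ref{lemma:vertices-empowerment}, the set of achievable joint occupancies over $s^+$ is a polytope. Its vertices are attained by deterministic Markov policies, and the optimal $p^*$ can be taken supported on vertices.

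The plan is to show that every vertex policy can be taken $e$-invariant. Equivalently, at each state, any action choice that depends on $e$ yields an occupancy that is a convex combination of occupancies of $e$-invariant deterministic policies. The argument runs as follows. Because $e$ evolves autonomously and independently, the joint occupancy of an $e$-dependent policy equals a mixture over realized $e$-trajectories. Each such trajectory corresponds to a time-varying policy on $x$. Stationary $e$-invariant policies then dominate via a standard occupancy-measure realization argument: any $x$-marginal occupancy in the factored MDP is realized by a stationary Markov policy on $x$.

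\textbf{Step 3: transfer the mutual information.} We then show that the mutual information of the $e$-invariant skill family is at least that of the original. Use the achievable-occupancy polytope for $p(s^+\mid s_0)$. Empowerment is the channel capacity over this polytope, and capacity depends only on the set of extreme points. So it suffices to show every extreme point is realized by an $e$-invariant policy. By independence, the joint occupancy has the form
\[
d(x^+,e^+)=\sum_t(1-\gamma)\gamma^{t-1}p_t(x^+)q_t(e^+),
\]
where $q_t$ is fixed. We argue extremality forces $p_t$ to come from a deterministic $x$-policy.

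\textbf{Main obstacle.} The hard part is the time coupling in Step 3. Stationary $e$-invariant policies generate only particular sequences $(p_t)_t$. An $e$-dependent policy can effectively use $e_t$ as a clock or as a randomization source, yielding per-time marginals that no stationary $x$-policy produces. To handle this, I would first try to show that such a policy's joint occupancy decomposes as a convex combination over the values of $e$-trajectory-dependent choices. Each component is a deterministic time-varying $x$-policy, and extreme points would require these to be non-mixed. If the time-varying issue persists, I would fall back on augmenting with the finite-horizon version $\mathcal{E}_n$, where the argument is cleaner since $T$ is fixed. Then I would check whether the discounted case is handled by viewing $(x,e)$ through the $X^+$ channel only.
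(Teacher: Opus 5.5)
Your proposal does not prove the statement, and the step it leaves open cannot be closed. The decisive claim in Steps 2--3 is that every extreme point of the set of achievable joint occupancies over $s^+=(x^+,e^+)$ is realized by an $e$-invariant policy. You assert this but do not prove it, and the tools you cite do not give it. The form $d(x^+,e^+)=\sum_t(1-\gamma)\gamma^{t-1}p_t(x^+)q_t(e^+)$ holds only for $e$-invariant policies; an $e$-dependent policy correlates $X_t$ with $E_t$ at a fixed $t$. The occupancy-realization argument (Altman's Theorem~3.1) preserves only the aggregate $\sum_t(1-\gamma)\gamma^{t-1}p_t$, but the objective sees the individual $p_t$ as soon as $E^+$ carries information about $T$. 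Your Step~1 relaxation to $I(Z;X^+\mid T,E^+,s_0)$ is only an upper bound, and it is a time-indexed quantity that stationary $x$-policies generally cannot attain.

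The obstacle you flag at the end is in fact fatal: the statement is false as written. Take $y\in\mathbb{Z}_3$, $a\in\{\pm1\}$, $y_{t+1}=y_t+a_t$, no $w$, a clock $e_{t+1}=e_t+1$, and $s_0=(0,0)$. Then $E^+=T$, so $I(Z;S^+\mid s_0)=\sum_{t\ge1}(1-\gamma)\gamma^{t-1}I(Z;Y_t)\le(1-\gamma)\log 2+\gamma\log 3$.
\begin{itemize}
\item The bound is attained by $e$-dependent skills. Take four Markov skills that read time from $e$ to choose $(y_1,y_2)\in\{(1,0),(1,2),(2,0),(2,1)\}$ and then play $+1$ forever, with weights $\tfrac16,\tfrac13,\tfrac16,\tfrac13$. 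Then $Y_1$ is uniform on $\{1,2\}$, and every $Y_t$ with $t\ge2$ is a deterministic function of $Z$ that is uniform on $\mathbb{Z}_3$.
\item No $e$-invariant mixture attains it. Doing so would require $H(Y_t\mid Z)=0$ for all $t$, so each supported skill follows the orbit of a deterministic map $f:\mathbb{Z}_3\to\{\pm1\}$. Such an orbit from $0$ revisits $0$ only at multiples of $2$ or of $3$, so $Y_5\neq0$ almost surely, contradicting uniformity of $Y_5$. Compactness turns this into a strict gap.
\item Your finite-horizon fallback fails the same way. With $e_t=\min(t,5)$ we get $\mathcal{E}_5=\log 3$, and matching it needs a skill with $Y_5=0$ almost surely. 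No stationary $y$-policy achieves that, even a stochastic one: a first-return argument at $0$ rules it out.
\end{itemize}

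The paper takes a different route. It uses $I(E^+;Z\mid s_0)=0$ to reduce to $I(X^+;Z\mid E^+,s_0)$ and picks the endpoint $e^*$ maximizing $I(X^+;Z\mid e^+,s_0)$. It then replaces each skill by the nonstationary $e$-invariant policy $\pi'(a\mid x,t)=\E_{e_t\sim p(e_t\mid e^+=e^*,e_0)}[\pi(a\mid x,e_t,z)]$, asserts that its $x$-occupancy equals the $e^*$-conditioned occupancy of $\pi_z$, and invokes Altman's theorem for stationarity. That asserted equality is exactly where your time-coupling concern bites. The conditioned occupancy is expanded with the unconditional geometric weights and with $\Pr(X_{t-1}=x\mid e^*,s_0)$ pulled out of the action average. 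But conditioning on $E^+=e^*$ reweights $T$ and correlates $X_{t-1}$ with $e_{t-1}$. In the clock example $\pi'$ is just the time-varying copy of $\pi_z$, whose discounted occupancy is not the point mass at $y_{t^*}$. The appendix's claim that $X^+\perp E^+\mid Z,x_0$ for $e$-invariant policies is likewise broken by the shared $T$, as you noticed.

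So the paper's proof has no ingredient that would repair yours; an extra hypothesis is needed. With i.i.d.\ noise ($e_{t+1}\sim\mu$ independently of $e_t$), $E^+$ is independent of $(Z,T,X^+)$, so the objective is $I(Z;X^+\mid s_0)$. Under $\pi_z$, $X^+$ then has the occupancy of the $x$-policy that uses $\pi(\cdot\mid x_0,e_0,z)$ at $t=0$ and $\E_{e\sim\mu}[\pi(\cdot\mid x,e,z)]$ afterwards. Altman's theorem then yields an $e$-invariant stationary skill with the same value.
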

    \vspace{-0.7em}
Proof in Appendix~\ref{app:subsec:invariant-results}.

\textbf{Proof sketch.}
To obtain this result, we first show that the uncontrollable component $e^+$ cannot contribute to the mutual information objective, since it carries no information about the executed policy. As a result, empowerment depends only on the mutual information between the controllable state component $x^+$ and the skill $z$.
We then show that conditioning actions on $e$ also does not help in increasing the mutual information between $z$ and $x^+$. Intuitively, reacting to $e$ cannot increase the diversity of future $x^+$ states: any policy that depends on $e$ can be matched by a (possibly nonstationary) policy that depends only on $x^+$. By Theorem~3.1 of \citet{Altman1999CMDP}, nonstationary policies do not enlarge the set of achievable discounted occupancy measures. Therefore, conditioning actions on the noise component $e$ cannot increase the mutual information. In practice, the set of $e$-invariant policies is only one subset of the optimal solutions: 
Theorem~\ref{theorem:policy-empowerment-e-invariance} does not rule out optimal policies that depend on $e$. 
We therefore impose a policy minimality assumption to ensure the selected optimum is noise-invariant.

\begin{figure}[t]
\centering
\includegraphics[width=0.4\linewidth]{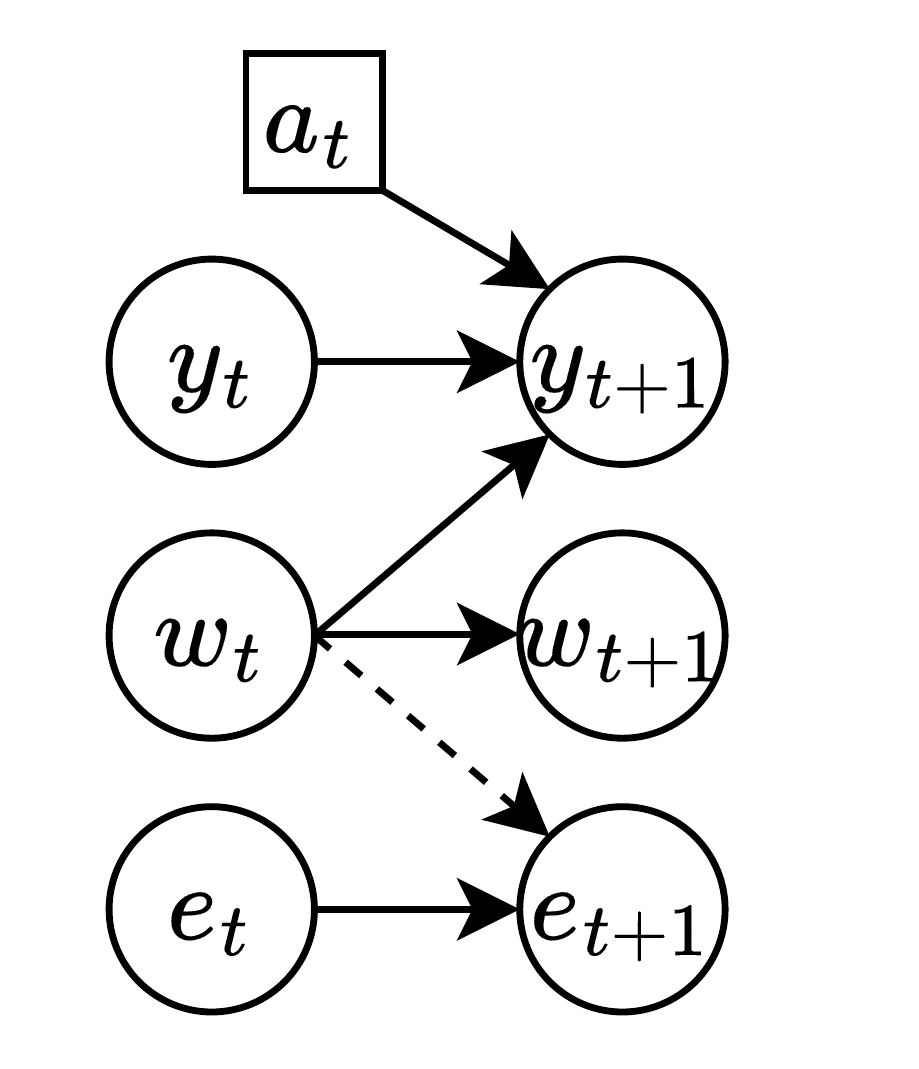}
\caption{$y$ is controllable; $w$ is uncontrollable but control-relevant; $e$ is uncontrollable and control-irrelevant. If the dashed line exists, $y$ and $e$ share a confounding parent. Minimal empowerment-based representations and policies ignore $e$.}
    \label{fig:Gm-main}
  \vspace{-0.8em}
\end{figure}

    
\begin{definition}[Policy minimality]\label{def:minimal-policy}
Fix a discounted occupancy kernel $d_\gamma(\cdot\mid s_0)$. Define the equivalence class

\[
\Pi_d \;\coloneqq\; \Bigl\{\pi:\ d^\pi_\gamma(s^+\mid s_0)=d_\gamma(s^+\mid s_0)\ \ \forall s_0,s^+\Bigr\}.
\]

A \emph{minimal policy} for $\Pi_d$ is any $\hat\pi_d\in\Pi_d$ that minimizes the state-action mutual information:
\(
\hat\pi_d \in \arg\min_{\pi\in\Pi_d} I(S;A).
\)
\end{definition}

While theoretically we need this assumption, in practice as illustrated in our experiments in Section~\ref{subsec:experiment-noise}, we observe noise invariance in MISL policies without any explicit regularization.

\begin{corollary}\label{corollary:invariance-repr}
    Consider an MDP as in Definition~\ref{def:control-features-categories} with an irreducible noise process\footnote{Irreducible noise process is a stochastic process that for any $e^+,e$, $\;\sum_{t}\gamma^t P(E_t=e^+|e_0=e)>0$. Or in other words, every state is reachable from another state.}, under the minimal representation assumption~\ref{definition:minimal-repr} and the minimal policy assumption~\ref{def:minimal-policy}, both forward and backward MISL representations become invariant to $e_t$.
\end{corollary}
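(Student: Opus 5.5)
The plan is to reduce both invariance claims to invariance of the Bayes-optimal posterior $p(z\mid s_0,s^+)$ in Equation~\ref{eq:bayes-discriminator}, and then invoke Definition~\ref{remark:posterior-invariance-implies-representational-invariance} under the minimal representation assumption. By Theorem~\ref{theorem:policy-empowerment-e-invariance}, an empowerment-maximizing skill distribution exists whose support consists of $e$-invariant policies. I would next argue that the minimal policy assumption (Definition~\ref{def:minimal-policy}) selects such policies. Every occupancy kernel attained by some policy is attained by an $e$-invariant one (this is the Altman-style argument behind Theorem~\ref{theorem:policy-empowerment-e-invariance}). Moreover, replacing $\pi(a\mid x,e,z)$ by its $e$-marginalized version can only decrease $I(S;A)$, because $I(S;A)=I(X;A)+I(E;A\mid X)$ and the second term vanishes. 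Hence the $I(S;A)$-minimizer in each class $\Pi_d$ can be taken to satisfy $\pi(a\mid (x,e),z)=\pi(a\mid x,z)$. I expect the hard part here to be ensuring that the marginalized policy stays in the same class $\Pi_d$ and that $I(X;A)$ is not increased. If this direct marginalization fails, I would instead fall back on the construction from the proof of Theorem~\ref{theorem:policy-empowerment-e-invariance}.

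\textbf{Occupancy factorization.} With $e$-invariant policies and the factorization in Equation~\ref{eq:factorization}, the process $(x_t)$ evolves autonomously of $(e_t)$, and $e_t$ follows its own chain. Hence for each $T$ the joint law is
\[
\Pr(X_T=x^+,E_T=e^+\mid x_0,e_0,z)=\Pr(X_T=x^+\mid x_0,z)\,\Pr(E_T=e^+\mid e_0),
\]
and mixing over $T\sim\mathrm{Geom}(1-\gamma)$ gives
\[
d^{\pi_z}_\gamma((x^+,e^+)\mid (x_0,e_0))=\sum_t c_t\, P_t^z(x^+\mid x_0)\, Q_t(e^+\mid e_0).
\]
This is not a product of marginals, because the time index couples the two factors.

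\textbf{Forward invariance.} Here $s_0=(x_0,e_0)$ varies in $e_0$. The optimal $p^*(z\mid s_0)$ depends only on $x_0$, since by the first step of Theorem~\ref{theorem:policy-empowerment-e-invariance} the objective reduces to $I(Z;X^+\mid x_0)$. The posterior is
\[
p(z\mid s_0,s^+)\propto p^*(z\mid x_0)\sum_t c_t P^z_t(x^+\mid x_0)\,Q_t(e^+\mid e_0).
\]
The main obstacle is that the $e$-weights $Q_t(e^+\mid e_0)$ depend on $e_0$, so naively $e^+$ leaks timing information. I would handle this by working with the variational objective actually optimized (Equation~\ref{eq:empowerment-optimization-practice}) and arguing via the first step of Theorem~\ref{theorem:policy-empowerment-e-invariance} that the relevant Bayes posterior is $p(z\mid x_0,x^+)$, i.e.\ that $e^+$ carries no information about $z$ given $x$. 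Failing that, I would appeal to the appendix treatment, where irreducibility is used to make the noise factor uninformative.

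\textbf{Backward invariance.} For $\psi$, I would fix $s_0$ and compare $s^+=(x^+,e^+)$ with $(x^+,\hat e^+)$. Here irreducibility is exactly what is needed: it ensures that $Q(e^+\mid e_0)>0$ for all $e^+$. Case (i) of Proposition~\ref{proposition:backward-aliasing} then never separates the two states, and both posteriors are well defined and equal to $p(z\mid x_0,x^+)$. Proposition~\ref{proposition:backward-aliasing}'s proportionality condition, with $\alpha(s_0)$ the ratio of the noise weights, gives $\psi(x^+,e^+)=\psi(x^+,\hat e^+)$. The same factorization issue as in the forward step arises here, and I would resolve it in the same way.
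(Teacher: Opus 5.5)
Your skeleton is the paper's own. You obtain $e$-invariant optimal skills with $p^*(z\mid s_0)=p^*(z\mid x_0)$ from Theorem~\ref{theorem:policy-empowerment-e-invariance} plus policy minimality, plug them into the Bayes posterior, cancel the noise factor, and conclude by minimality of $(\phi,\psi)$. The paper performs the cancellation by asserting $d^{\pi_z}_\gamma(s^+\mid s_0)=d^{\pi_z}_\gamma(x^+\mid x_0)\,d_\gamma(e^+\mid e_0)$. That is exactly the product form you rightly reject: both factors are evaluated at the same geometric time $T$, so $e^+$ acts as a clock. Your proposal leaves this crux unresolved, and neither fallback resolves it.

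\begin{itemize}
\item The first step of the theorem's argument gives only $I(E^+;Z\mid s_0)=0$, which is marginal independence. It does not give $E^+\perp Z\mid X^+,s_0$: once $X^+$ is observed, whatever $E^+$ reveals about $T$ becomes information about $Z$.
\item Irreducibility only guarantees that every $e^+$ is reachable, so the posteriors are defined. It does not make $Q_t(e^+\mid e_0)$ independent of $t$.
\item Moving to the variational objective changes nothing, since the discriminator is assumed Bayes-optimal.
\end{itemize}

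\noindent The same coupling also invalidates your claim that the objective reduces to $I(Z;X^+\mid x_0)$.

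For the discounted objective the gap cannot be closed, because the statement itself fails. Let $e_{t+1}=1-e_t$; this is irreducible in the footnote's sense, and a slightly randomized flip behaves the same way. Let the start $x_0$ have one action leading to $a$ and one leading to $b$, while every action at $a$ leads to $b$ and vice versa.

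From $s_0=(x_0,0)$ the only vertex behaviours are $z_1$ ($a$ first) and $z_2$ ($b$ first). Both are realizable by $e$-invariant policies. Their full-state occupancies are supported on $\{(a,1),(b,0)\}$ and $\{(b,1),(a,0)\}$ respectively. Hence every optimal prior puts mass $\tfrac12$ on each behaviour class, with value $\log 2$. By contrast, $\max I(Z;X^+\mid x_0)=\log 2-h\bigl(\tfrac{1}{1+\gamma}\bigr)<\log 2$, where $h$ is the binary entropy.

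Now consider the posteriors. At $(a,1)$ the posterior is concentrated on $a$-first skills, and at $(a,0)$ on $b$-first skills. So $\psi(a,1)\neq\psi(a,0)$ by Remark~\ref{remark:posterior-separation-results-in-variance-in-repr}. Likewise, with $e$-invariant skills, $p(z_1\mid (x_0,0),(a,1))=1\neq 0=p(z_1\mid (x_0,1),(a,1))$, so $\phi(x_0,0)\neq\phi(x_0,1)$.

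The cancellation, yours and the paper's, becomes valid only under a hypothesis that makes the noise factor genuinely separable. One option is the $n$-step objective of Equation~\ref{eq:empowerment_def}, where $P(X_n=x^+,E_n=e^+\mid s_0,z)=P(X_n=x^+\mid x_0,z)\,P(E_n=e^+\mid e_0)$. Another is noise whose $t$-step kernel $Q_t(\cdot\mid e_0)$ is the same for all $t\ge 1$, e.g.\ i.i.d.\ noise. You would need to add such an assumption.

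Your minimality step also fails as sketched. $\Pi_d$ fixes the whole joint kernel over $(x^+,e^+)$, and an $e$-dependent policy correlates the two. Its $e$-marginalization therefore generally leaves $\Pi_d$, and minimizing $I(S;A)$ within a class cannot by itself produce $e$-invariance. You must start from the $e$-invariant optimum supplied by the theorem, which the paper simply posits.
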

Since according to Theorem~\ref{theorem:policy-empowerment-e-invariance} all minimal MISL policies choose actions independently of $e$, we can show that, given an initial state $(x_0,e_0)$ and a final state $(x^+,e^+)$, the optimal posterior used to infer which skill was executed does not need to use the noise component $e$. As a result, the posterior—and hence the learned representations—are invariant to the noise.
\begin{figure}
    \centering\includegraphics[width=0.8\linewidth]{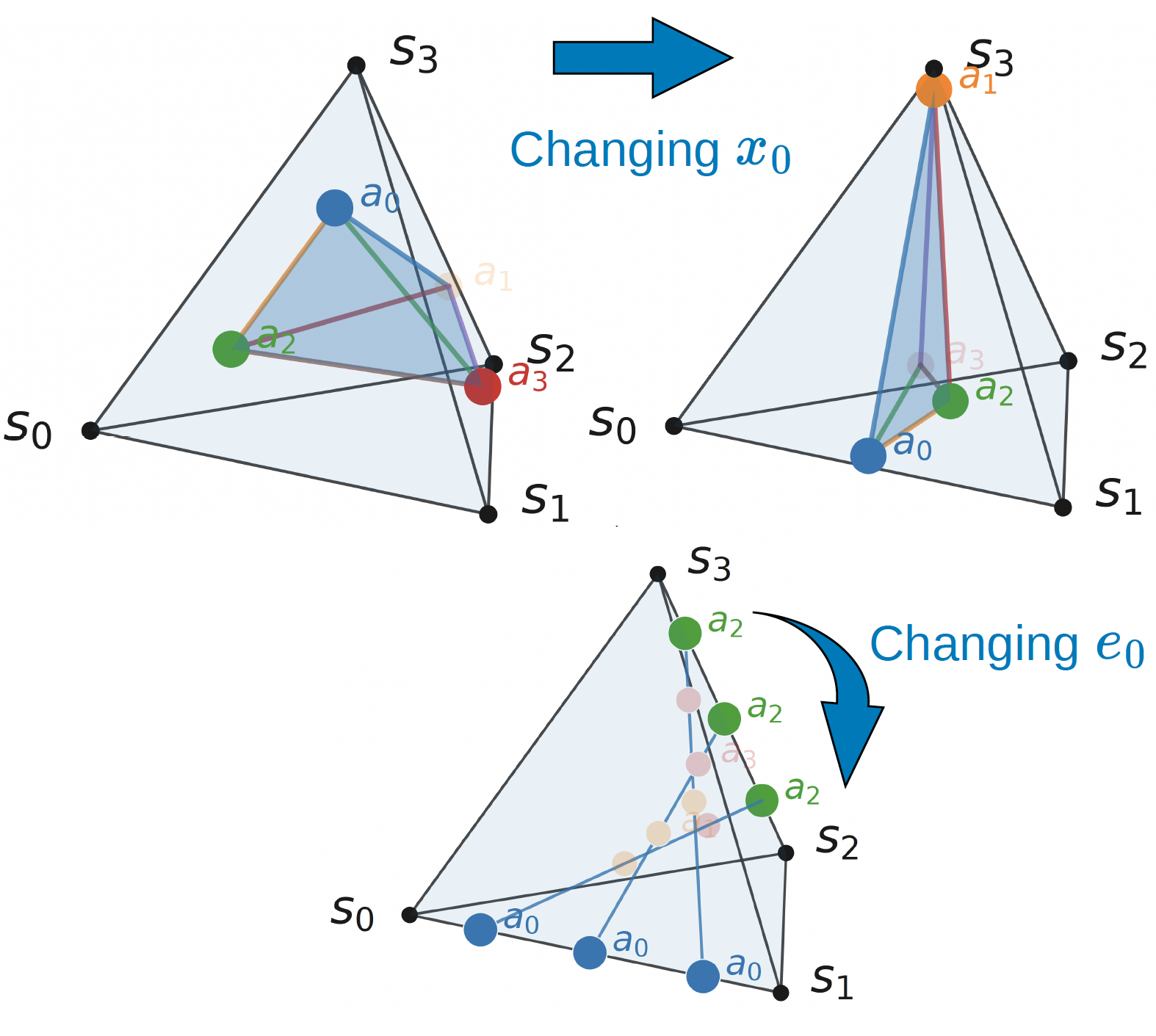}
    \caption{Initial states with the same $x$ but different $e$ produce similar achievable future-state distributions, differing only by rotation and translation (bottom), while states with different $x$ produce fundamentally different polytopes (top).}
    \label{fig:simplex}
    \vspace{-0.9em}
\end{figure}

\textbf{Geometric intuition.}
We use the geometric perspective of \citet{eysenbach2022the} to build intuition for empowerment's invariance to $e$.
In this view, for a fixed start state the set of achievable next-state distributions across actions forms a \emph{convex polytope} in the probability simplex.
Empowerment selects the \emph{extreme points} of this polytope---the actions whose induced distributions are maximally separated (saturated dots in Figure~\ref{fig:simplex}).
Figure~\ref{fig:simplex} (top) illustrates the achievable occupancy polytope in a 4-state MDP for two different start states that differ in their control-relevant feature $x$.
Because these states have different controllable futures, their available next-state distributions (dots) form different polytopes, and empowerment selects different extreme actions in each case. Therefore the choice of policy and the skill posterior varies by changing $x_0$.
On the other hand, Figure~\ref{fig:simplex} (bottom) shows a different MDP with states
$s_0=(x,e)$, $s_1=(x,e')$, $s_2=(x',e)$, $s_3=(x',e')$.
Since actions can affect only $x$, the achievable next-state distributions lie on a \emph{line segment}.
Changing the initial $e$ does not alter this line in an essential way: it only shifts/rotates the same segment by moving mass between states that have a common $x$ part i.e., $s_0 , s_1$ and $s_2 , s_3$.
Consequently, empowerment selects the same extreme actions (here $a_0$ and $a_2$) invariant of $e_0$ and the posteriors become invariant to $e_0$ as well.
\subsection{Capturing the control-relevant state features}\label{subsec:theory-identifiability}
Representations in reinforcement learning are most useful when they discard control-irrelevant features while preserving all control-relevant ones. 
By capturing all control-relevant features, we mean that the representation distinguishes any two states that differ in their controllable part $y$ or in their uncontrollable but control-relevant part $w$ (Definition~\ref{def:control-features-categories}).

As shown in Section~\ref{subsec:theory-aliasing}, forward or backward representations alone are insufficient: two states may share identical backward representations while having different forward dynamics, and vice versa. Unfortunately, in the absence of additional structural assumptions, there is no guarantee that combining forward and backward representations is sufficient to capture all control-relevant features either.

\begin{proposition}\label{proposition:counter-example-identifiability}
There are MDPs in which forward and backward representations together fail to capture all control-relevant features. (Proof in Appendix~\ref{app:subsec:identifiability}.)
\end{proposition}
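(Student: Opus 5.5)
The plan is to build an explicit small counterexample MDP containing two states $s=(y,w,e)$ and $\hat s=(\hat y,\hat w,e)$. They differ in a control-relevant feature, yet every skill posterior involving them coincides: $p(z\mid s,s^+)=p(z\mid \hat s,s^+)$ for all $z,s^+$, and $p(z\mid s_0,s)=p(z\mid s_0,\hat s)$ for all $z,s_0$. By minimality (Definition~\ref{definition:minimal-repr}), the pair $(\phi,\psi)$ then cannot separate $s$ from $\hat s$, even though they differ in $x=(y,w)$.

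The simplest mechanism exploits the fact that empowerment is measured from a fixed start state and only through the optimal channel $p^*(z\mid s_0)$. I will use a deterministic MDP with a controllable coordinate $y$ taking values in a few positions. The two states share the same outgoing structure up to action relabeling. By Theorem~\ref{theorem:forward-alias-similar-future-control} this gives $\phi(s)=\phi(\hat s)$ directly: for each action at $s$ there is an action at $\hat s$ with identical $p(s^+\mid\cdot,\cdot)$, and vice versa. To make them differ in a control-relevant feature, I let them differ in $y$ and have both transition into the same set of successor states. Concretely, take states $A$ (start), $B$, $C$ and an absorbing sink structure. $B$ and $C$ both have actions leading to the same successors, so they are forward-aliased.

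For the backward side, I need $\psi(B)=\psi(C)$. I will use Proposition~\ref{proposition:backward-aliasing}: arrange that from every initial state either both $B,C$ are unreachable or $d^\pi_\gamma(B\mid s_0)=\alpha(s_0)\,d^\pi_\gamma(C\mid s_0)$ for all $\pi$. The cleanest way is to make $B$ and $C$ reachable only through a single environment-stochastic transition. For example, a particular action from $A$ (and from any predecessor) leads to $B$ or $C$ with probabilities $1/2$ each, and no other transitions enter them. Then every policy's occupancy of $B$ equals that of $C$, so $\alpha\equiv 1$. I must also make $B$ and $C$ unreachable from each other and from their common successors, or else keep the proportionality there. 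Since they have identical successors and no returns, the proportionality holds trivially.

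It remains to check that $B$ and $C$ genuinely differ in a control-relevant feature under Definition~\ref{def:control-features-categories}. I set $B=(y_1,\cdot)$ and $C=(y_2,\cdot)$ with $y_1\neq y_2$. Since actions determine which $y$-value is reached elsewhere (e.g.\ from $A$ other actions reach distinct $y$'s deterministically), $y$ is controllable, and the factorization \eqref{eq:factorization} holds with trivial $w,e$. The main obstacle is verifying that the combined pair $(\phi,\psi)$ aliases them. This needs minimality to force equality, so I will verify that the posteriors coincide for \emph{every} $s_0$ and $s^+$, including $s_0\in\{B,C\}$ and $s^+\in\{B,C\}$, through the invoked results rather than compute $p^*$ explicitly. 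The one subtlety is ensuring $y_1\neq y_2$ is not an artifact, i.e.\ that some reward/task distinguishes them. To settle this, I note they are distinct values of a controllable coordinate, which is exactly what the statement's notion of ``capturing all control-relevant features'' requires to be separated.
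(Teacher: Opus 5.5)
Your construction is valid under the paper's formal reading of ``capturing all control-relevant features'', which requires separating any two states that differ in $y$ or $w$. The backward half is the paper's own mechanism. The paper assumes $p(H\mid s,a)=p(K\mid s,a)$ for all $(s,a)$. That alone gives $d^\pi_\gamma(H\mid s_0)=d^\pi_\gamma(K\mid s_0)$ for every policy and start state, so Proposition~\ref{proposition:backward-aliasing} applies. Your no-return requirement is therefore not needed for this part.

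The routes differ on the forward side. You make $B$ and $C$ twins so that Theorem~\ref{theorem:forward-alias-similar-future-control} applies verbatim. The paper instead makes $H$ and $K$ genuinely differ in one action, $a_0$, with next-state split $0.2/0.8$ versus $0.5/0.5$, while $a_1,a_2$ agree. Theorem~\ref{theorem:forward-alias-similar-future-control} does not cover this case, since no action at $K$ matches $a_0$ at $H$. Forward aliasing comes instead from Lemma~\ref{lemma:vertices-empowerment}: $a_0$ induces a non-extreme next-state distribution at both states, so it gets zero mass under $p^*$, and the channels restricted to $\{a_1,a_2\}$ coincide.

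Your route is more black-box, since you never need to locate the support of $p^*$. But it proves less.
\begin{itemize}
\item With identical action labels, $B$ and $C$ have the same entry probabilities and the same achievable outcome distributions, so they differ only in name. The example then holds only because, with trivial $w,e$, every distinction between states counts as a $y$-distinction. Under that reading the proposition is nearly vacuous: two never-reached start states with identical outgoing dynamics already qualify, via condition (i) of Proposition~\ref{proposition:backward-aliasing}. Also, any bisimulation whose reward does not itself separate $B$ from $C$ would merge them too.
\item With a nontrivial relabeling, the difference becomes operational. However, it is exactly the interface change that the paper, after Theorem~\ref{theorem:forward-alias-similar-future-control}, calls superficial and says forward representations \emph{should} ignore. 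Citing it as a failure cuts against the paper's own interpretation.
\end{itemize}

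The paper's example avoids both objections. $H$ and $K$ have different sets of achievable next-state distributions. The failure is traced to the substantive reason behind the proposition: empowerment only uses extreme policies, so in stochastic MDPs, dynamics differences confined to interior actions are invisible to both representations.
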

However, we show that when the control-relevant dynamics i.e., $x$ dynamics are deterministic and sufficiently connected, backward representations learned using the \(n\)-step empowerment objective (Equation~\ref{eq:empowerment_def}) alone are sufficient to capture all control-relevant features.

\begin{proposition}\label{proposition:deterministic-identifiability}
Consider an MDP in which the control-relevant dynamics ($x$) is deterministic and satisfies the following connectivity condition: for any two states $x^+$ and $\hat x^+$, there exists at least one state $x_0$ from which both $x^+$ and $\hat x^+$ are reachable in $n$ steps. Then the MISL backward representations learned using the $n$-step objective (Eq.~\ref{eq:empowerment_def}) assign distinct representations to any two states with different $x$; that is,
\(
\psi(x^+) \neq \psi(\hat x^+) \quad \text{whenever } x^+ \neq \hat x^+ .
\) (Proof in Appendix~\ref{app:subsec:identifiability})
\end{proposition}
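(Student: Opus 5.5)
By Remark~\ref{remark:posterior-separation-results-in-variance-in-repr}, applied to the backward representation, it suffices to exhibit an initial state $s_0$ and a skill $z$ with
\[
p(z\mid s_0,x^+)\neq p(z\mid s_0,\hat x^+).
\]
By Bayes' rule (Eq.~\ref{eq:bayes-discriminator} with the $n$-step distribution),
\[
p(z\mid s_0,s_n)=\frac{p^*(z\mid s_0)\,p^{\pi_z}(s_n\mid s_0)}{p(s_n\mid s_0)} .
\]
So I need the two posterior vectors over $z$ to differ for some $s_0$. I will take $s_0$ whose control-relevant part is the $x_0$ given by the connectivity condition, so that both $x^+$ and $\hat x^+$ are reachable in exactly $n$ steps from $x_0$. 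By Theorem~\ref{theorem:policy-empowerment-e-invariance} and Corollary~\ref{corollary:invariance-repr}, I can work purely in the $x$-component and ignore $e$.

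The key step is to show that, from $x_0$, the $n$-step empowerment optimum places its skills on deterministic ``vertex'' outcomes. Because the $x$-dynamics are deterministic, each deterministic policy reaches exactly one $x_n$, so the achievable set of $n$-step distributions over $x_n$ is the convex hull of point masses $\delta_{x}$, one for each reachable $x$. This is the whole simplex face on the reachable set $R(x_0)$. By the vertex characterization (Lemma~\ref{lemma:vertices-empowerment}), $p^*$ is supported on vertex policies, i.e.\ on skills $z$ with $p^{\pi_z}(\cdot\mid x_0)=\delta_{x_z}$. Channel capacity for a noiseless channel is $\log|R(x_0)|$, attained by the uniform distribution over all reachable outcomes. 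Hence every reachable $x$ is the endpoint of some skill with positive probability; any optimum missing a reachable point would have capacity strictly below $\log|R(x_0)|$.

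Now compare the posteriors at $x^+$ and $\hat x^+$. Since both lie in $R(x_0)$, there is a skill $z$ with $p^*(z\mid x_0)>0$ and $p^{\pi_z}(x^+\mid x_0)=1$. Then $p(z\mid x_0,x^+)>0$, whereas $p^{\pi_z}(\hat x^+\mid x_0)=0$ gives $p(z\mid x_0,\hat x^+)=0$. The posteriors differ, and the Remark yields $\psi(x^+)\neq\psi(\hat x^+)$.

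I expect the main obstacle to be justifying that the optimum contains no mixed skills, or at least that some positive-mass skill separates the two states. Non-vertex skills with positive mass could in principle exist in degenerate optima, since mixtures of $\delta$'s do not change capacity in some parameterizations. To handle this, I would first invoke Lemma~\ref{lemma:vertices-empowerment} directly. If that lemma only guarantees that a vertex solution exists, I would fall back on the following argument. At capacity, the output distribution $p(x_n\mid x_0)$ is uniform on $R(x_0)$, so $p(x^+\mid x_0)>0$ and $p(\hat x^+\mid x_0)>0$. If the two posteriors over $z$ were equal, then the ratio $p^{\pi_z}(x^+)/p^{\pi_z}(\hat x^+)$ would be constant across skills. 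That would make $x^+$ and $\hat x^+$ merged outputs, and splitting them would strictly increase $I(Z;X_n)$, contradicting optimality. This is a standard capacity argument; the relevant dichotomy is whether an optimal input can make two noiselessly distinguishable outputs indistinguishable.
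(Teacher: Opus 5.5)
Your proposal is correct and follows essentially the same route as the paper. You reduce to the $x$-component, show that some positive-mass skill ends at $x^+$ with probability one, and conclude that its posterior is positive at $x^+$ and zero at $\hat x^+$. The only real difference is the coverage step: you use the noiseless-channel bound $I(Z;X_n)\le H(X_n)\le \log|\mathrm{supp}(X_n)|$, which at capacity $\log|R(x_0)|$ forces every reachable state into the support and also forces $H(X_n\mid Z)=0$, so your fallback makes the vertex lemma unnecessary; the paper instead invokes its KKT-based coverage lemma together with the deterministic-policy lemma.
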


\textbf{Intuition.} Intuitively, MISL learns a sufficiently rich set of skills so that all reachable control-relevant outcomes are realized by different skills (Lemma~\ref{lemma:coverage}). Therefore, for any two distinct future states \(x^+\) and \(\hat{x}^+\), there exist distinct skills \(z\) and \(\hat z\) such that executing \(z\) reaches \(x^+\) at time \(n\), while executing \(\hat z\) reaches \(\hat{x}^+\) at time \(n\).
Moreover, MISL admits deterministic optimal policies (Lemma~\ref{lemma:deterministic-empowerment}).\footnote{Although multiple optimal policies may exist, the solution set always contains at least one deterministic policy.} As a result, the terminal state uniquely identifies the executed skill: observing \(x^+\) at time \(n\) implies that skill \(z\) was run, while observing \(\hat{x}^+\) implies that skill \(\hat z\) was run. Consequently, the posterior distributions over skills differ at \(x^+\) and \(\hat{x}^+\), and therefore these two states have distinct backward representations.

Finally, in Appendix~\ref{app:subsec:identifiability} we show that bisimulation with respect to reward functions that are only a function of the control-relevant part (i.e., $R((x,e)) = R(x)$) ignores control-irrelevant features as well, however, it may as well discard control-relevant features if the reward does not depend on them. As a result, bisimulation-based representations may fail to generalize to new reward functions. In contrast, MISL backward representations under the assumption of Proposition~\ref{proposition:deterministic-identifiability} provably capture all control-relevant features without relying on any reward signal. Consequently, MISL representations form sufficient statistics for optimizing any reward function of the form $R((x,e)) = R(x)$.

\begin{remark}\label{remark:bisim-not-generaliable}
Under the assumptions of Proposition~\ref{proposition:deterministic-identifiability}, MISL backward representations correspond to the most general bisimulation that preserves all control-relevant features. (Appendix~\ref{app:subsec:identifiability})
\end{remark}
\vspace{-0.5em}


\section{Empirical Validations and Experiments}\label{sec:experiments}
In this section, we empirically validate the theoretical claims from
Section~\ref{sec:theory}. We answer the
following research questions:
\begin{figure*}[h]
\centering
\begin{subfigure}[t]{0.24\linewidth}
    \vspace{0pt}
    \centering
    \includegraphics[width=\linewidth]{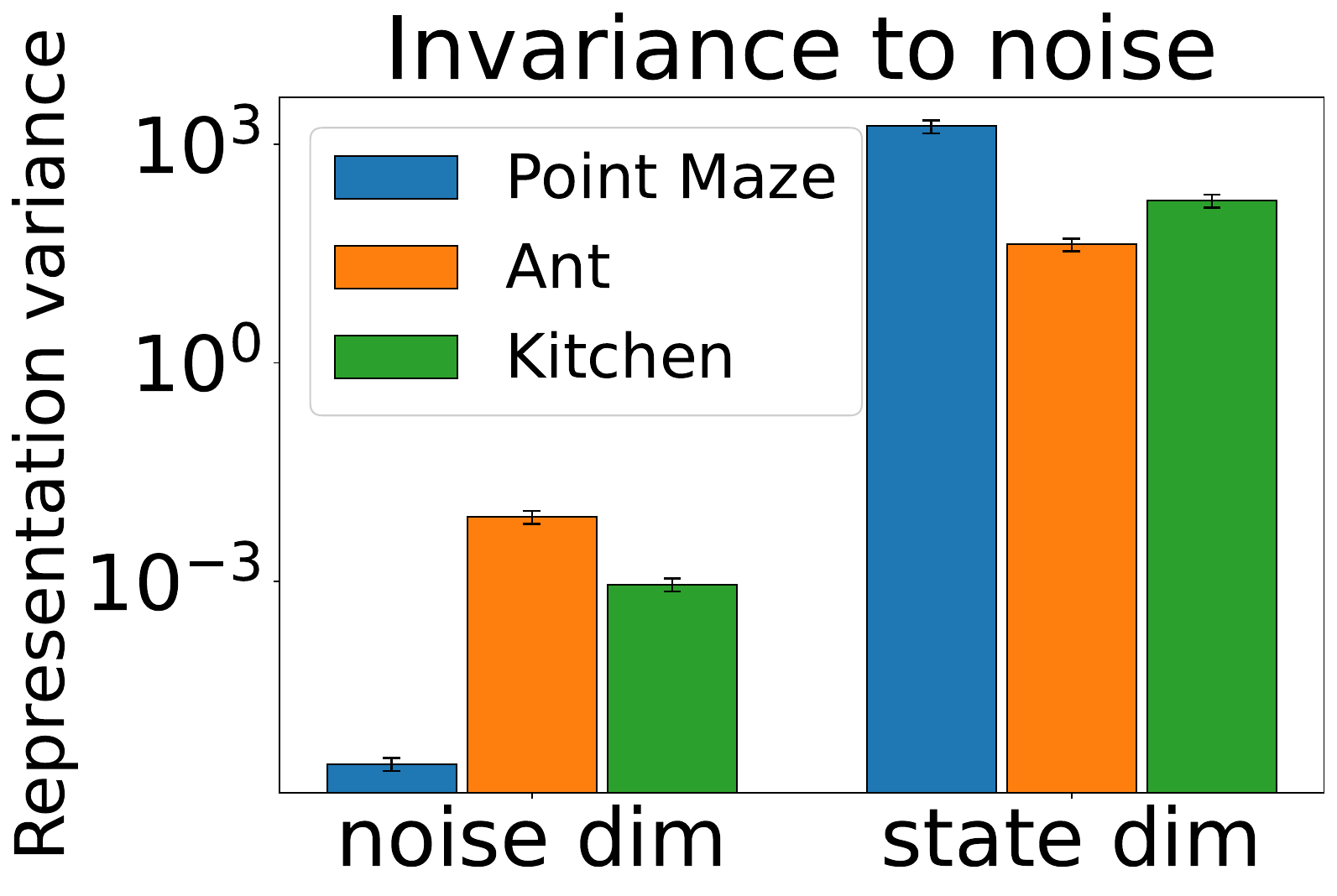}
    \vspace{1.9em}
    \caption{A MISL representation of a state doesn't change by changing  its noise component, and hence is noise invariant.}
    \label{fig:noise-invariance-across-envs}
\end{subfigure}
\hfill
\begin{subfigure}[t]{0.74\linewidth}
    \vspace{0pt}
    \centering
    \includegraphics[width=\linewidth]{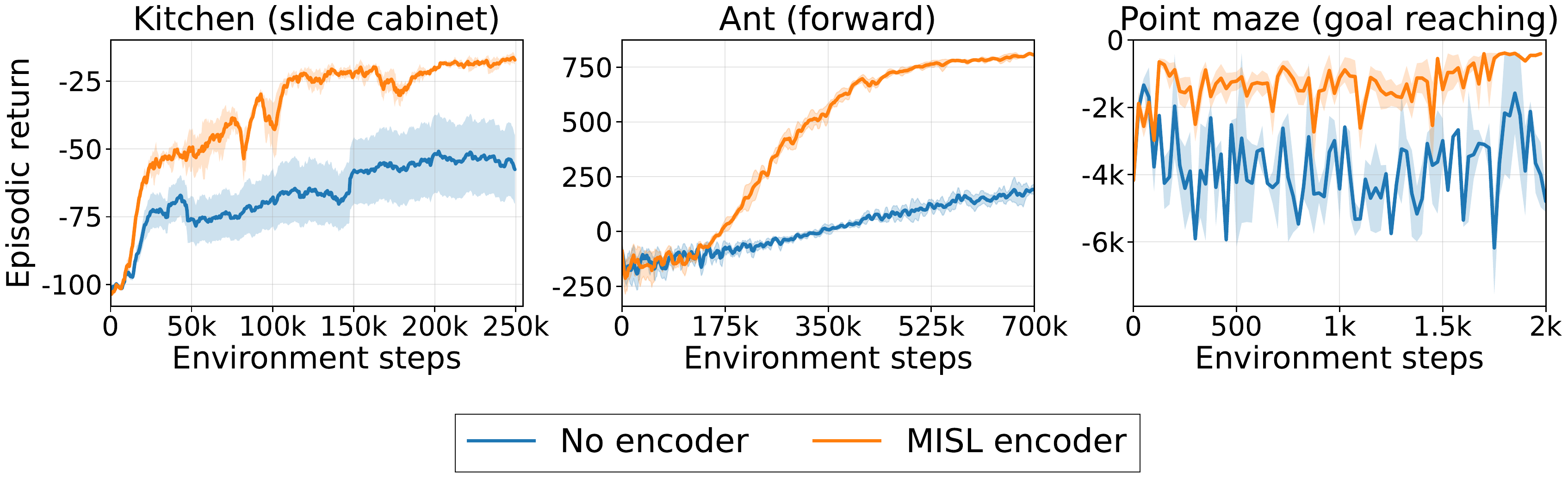}
    \caption{MISL representations improve the downstream task performance in the presence of noise.  }
    \label{fig:reward-maximization}
\end{subfigure}

\caption{MISL representations are noise-invariant and useful for downstream task across environments.}
\label{fig:merged-noise-results}
\vspace{-0.5em}
\end{figure*}
\begin{itemize}[leftmargin=*, itemsep=0.7pt, topsep=1pt]

        \item \textbf{RQ1.} Are MISL representations invariant to control-irrelevant features?
        \item \textbf{RQ2.} Does this invariance make MISL representations useful for downstream tasks in high-dimensional and pixel-based environments in the
        presence of noise?
        \item \textbf{RQ3.} Do representations remain invariant to noise distributions unseen during training? (out of distribution generalization)

    \item \textbf{RQ4}
    Do MISL forward and backward representations exhibit the state aliasing
    and asymmetry predicted in Section~\ref{subsec:theory-aliasing}? and is capturing this asymmetry necessary for estimating the empowerment properly?

    \item \textbf{RQ5} Do MISL backward representations capture all control-relevant features as promised by Proposition~\ref{proposition:deterministic-identifiability}? 

    \item \textbf{RQ6} Do the policy and discriminator networks require additional regularization to learn noise-invariant representations?

\end{itemize}
In Appendix~\ref{app:baseline}, we compare MISL with bisimulation~\citep{ferns2011bisimulation} and AC-state~\citep{lamb2022guaranteed}. Appendix~\ref{app:bisim} shows that bisimulation is sensitive to temporally correlated noise because its objective relies on a latent forward predictor, while MISL remains invariant to noise regardless of its temporal correlation (Figure~\ref{fig:generalization-temporally}). Appendix~\ref{app:ac-state} shows that AC-state representations depend on the training data: if the data is collected by a policy that even partially attends to noise, the learned representations are not noise-invariant. In contrast, MISL requires no pre-collected data.
Furthermore, the appendix includes additional experiments on invariance under action relabeling~\ref{App:action-relabelling-exp}.

\textbf{Experimental setup.}
We evaluate invariance to control-irrelevant features in the \texttt{point-Maze} environment based on Gym ~\citep{pointmaze_farama}, \texttt{Ant} and pixel-based Lexa \texttt{Kitchen} from METRA~\citep{park2024metrascalableunsupervisedrl}.
We augment the original state space (respectively $2$, $29$, and $64\times64\times3$ state dimensions) with additional Guassian noise dimensions (respectively $5$, $10$, and $10\times10\times3$ noise dimensions). These noise dimensions are uncontrollable, correlated over time, and do not affect the environment dynamics. We then study
how MISL representations behave under this augmentation. We use METRA \citep{park2024metrascalableunsupervisedrl} as the MISL algorithm\footnote{Although METRA was
originally introduced with a Wasserstein-distance objective, subsequent work has shown its
equivalence to a mutual information objective~\cite{ICLR2025_MISLFly}.}.

\textbf{\textbf{RQ1.} Are MISL representations invariant to control-irrelevant features?}
We examine how sensitive the learned MISL representation is to changes in individual state dimensions. For each dimension, we fix all other state variables and vary only that dimension, then record how much the learned MISL representation changes on average with varying noise versus state dimensions\footnote{Quantified by computing the trace of the covariance matrix of the representation}. Figure~\ref{fig:noise-invariance-across-envs} reports the results for different environments. We find that the representation variance with respect to control-relevant dimensions is approximately at least $10^3$ times larger than with respect to control-irrelevant dimensions across the three environments.

\textbf{RQ2. Does invariance improve downstream performance in the presence of noise?}
To answer this question, we train a reward maximization agent\footnote{We use Soft Actor Critic (SAC) for Point maze and Kitchen environments and PPO for the Ant environment.} on top of frozen MISL representations. The agent receives the dense environment reward. We compare the return of the agent that uses the raw noisy state versus the agent that uses the frozen MISL representations. As shown in Figures~\ref{fig:reward-maximization}, MISL representations substantially improve performance across all three environments, demonstrating their ability to ignore noise while preserving control-relevant state features.

\textbf{\textbf{RQ3.} Do representations generalize to out-of-distribution noise?}
We evaluate the robustness of the learned representations to \emph{unseen} noise distributions in the \texttt{Point-Maze} environment. At training time, the environment is augmented with 5-dimensional i.i.d. Gaussian noise that is temporally uncorrelated. We then test invariance under noise with temporal correlation and cross-dimensional correlation, varying the strength of correlation. As shown in Figure~\ref{fig:generalization-temporally} and~\ref{fig:cross-dim-generalization}, the representations remain invariant to noise under these unseen distributions.

\begin{figure}[h]
    \centering

    \begin{minipage}[t]{0.43\linewidth}
        \centering
        \includegraphics[width=\linewidth]{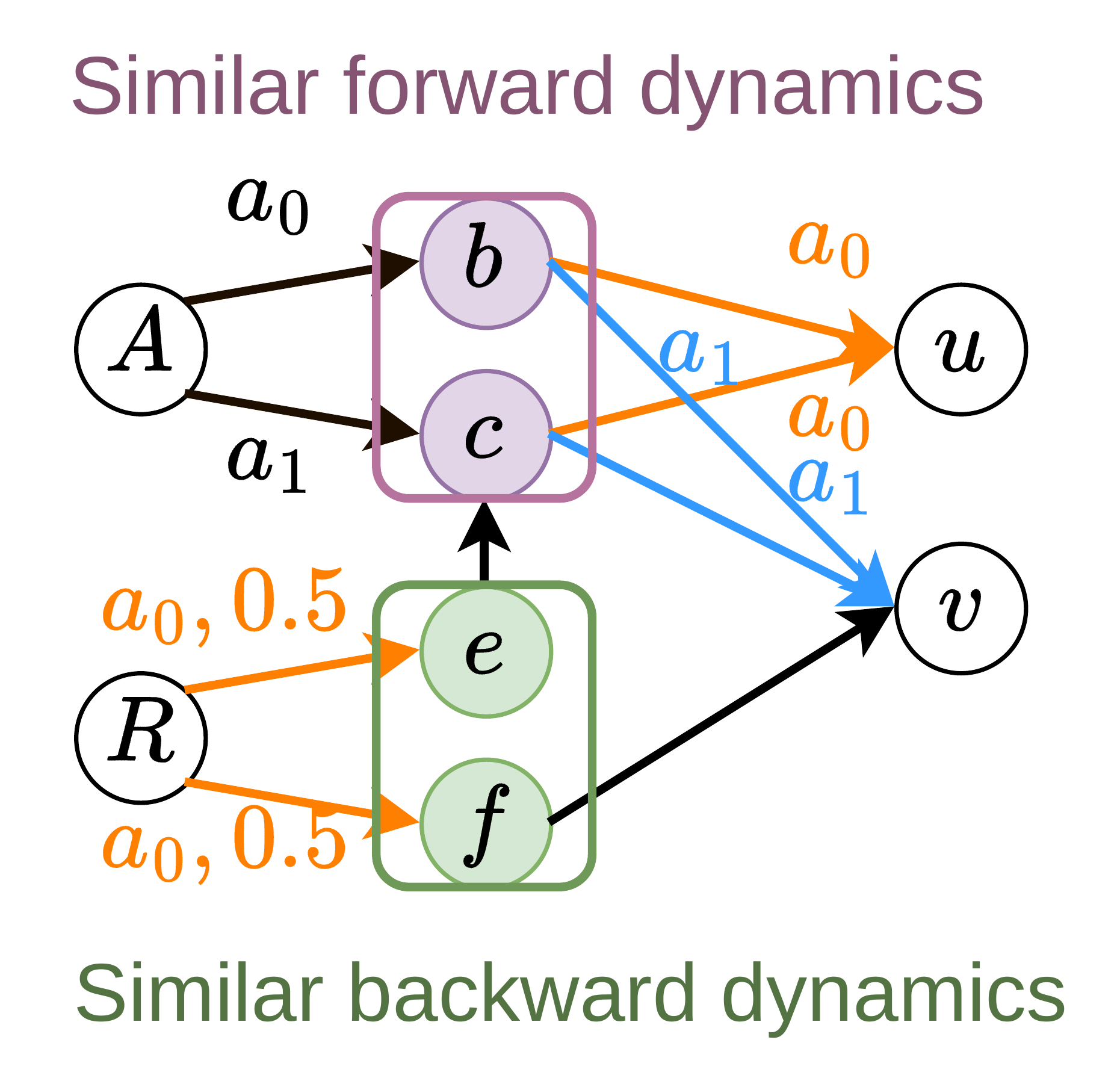}
        \caption{ States $b$ and $c$ share forward dynamics, while $e$ and $f$ share backward dynamics, but not vice versa.} 
        \label{fig:exp1-didactic}
    \end{minipage}
    \hfill
    \begin{minipage}[t]{0.52\linewidth}
        \centering
        \includegraphics[width=\linewidth]{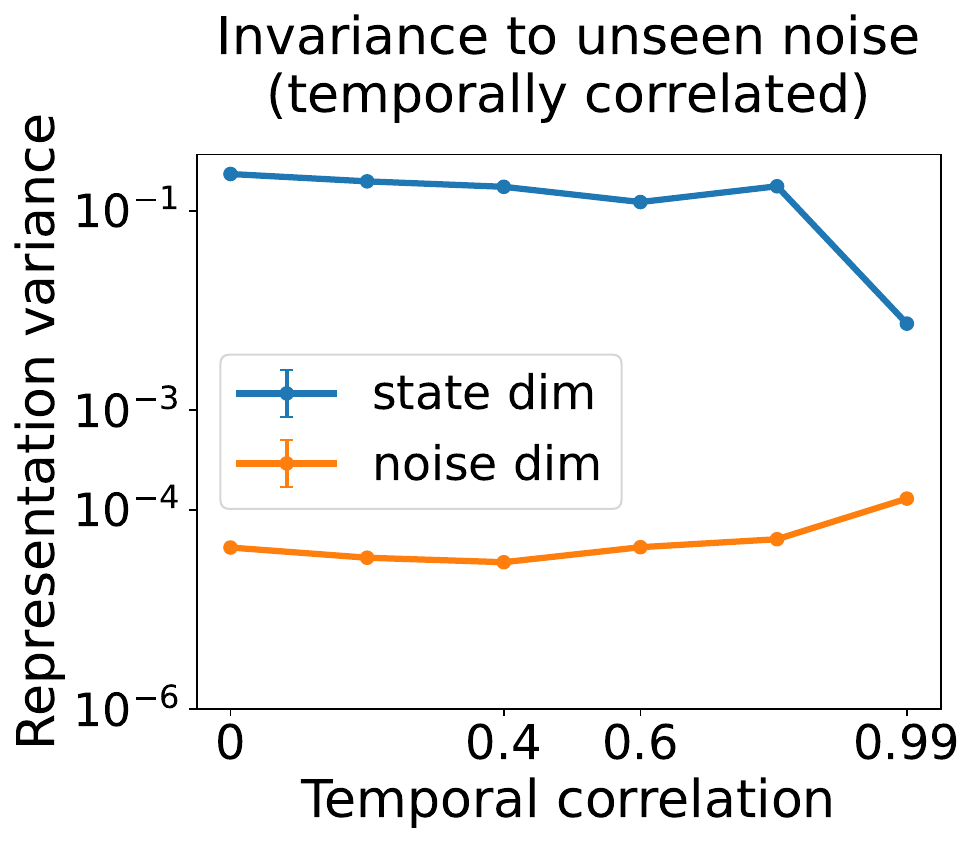}
        
        \caption{MISL representations are invariant to noise distributions unseen during training.}
        \label{fig:generalization-temporally}
    \end{minipage}
    \label{fig:didactic-generalization}
    \vspace{-1em}
\end{figure}
\subsection*{RQ4. Forward and backward representation asymmetry}\label{subsec:experiment-asymmetry}

To illustrate the asymmetry between forward and backward representations, we consider the simple MDP shown in Figure~\ref{fig:exp1-didactic}.
%
We use convex optimization tools to solve for the empowerment-maximizing distribution over skills, and the corresponding posteriors (Eq.~\ref{eq:optimal-posterior-decomposition}).
(anonymous code ~\href{https://anonymous.4open.science/r/MISL_representations-20ED/Asymmetric_FB_Experiment.ipynb}{here}).
We use the average $\ell_1$ distance between posterior vectors as a proxy of representation distance.
For states $b$ and $c$, we find that they have distinct backward (on average $\ell1$ norm of distance is 2) but identical forward representations\footnote{{Identical representations means $\ell_1$ distance of $0$}}. 
Conversely, states $e$ and $f$, have identical backward but distinct forward representations (on average $\ell1$ norm of distance is 1), confirming our theory. 
We further ask would forcing the representations to be symmetry limit empowerment? To test this, 
we compare empowerment at state $A$ with and without enforcing representation symmetry. In particular, since states $b$ and $c$ have identical forward representations, we enforce identical backward representations by constraining
$q(z \mid A,b) = q(z \mid A,c)$.
We observe that imposing a symmetry constraint on the representations reduces the empowerment at state $A$ from $0.69$ to $0$.

\textbf{\textbf{RQ5.} Do MISL representations capture all the control-relevant state features?}
To answer this question, in the \texttt{PointMaze} environment augmented with 5 Gaussian noise dimensions, we train a linear regressor \(A\phi(s)\) to predict the control-relevant coordinates \((x,y)\) from the learned representations. We evaluate prediction quality using the \(R^2\) metric,
\(
R_x^2
=
1-
\frac{
\sum_i (x_i-\hat x_i)^2
}{
\sum_i (x_i-\bar x)^2
},
\)
and similarly for \(R_y^2\), where \(\bar x\) is the mean of the \(x\)-coordinates. An \(R^2\) value of \(1\) corresponds to perfect prediction. As shown in Figure~\ref{fig:identifiability}, both \(R_x^2\) and \(R_y^2\) quickly approach \(1\), demonstrating that the learned representations fully recover the control-relevant variables despite the presence of noise. Further experiments in Appendix~\ref{app:identifiability-exp}.

\begin{figure}[h]
    \centering
    \vspace{-1em}
    \includegraphics[width=0.65\linewidth]{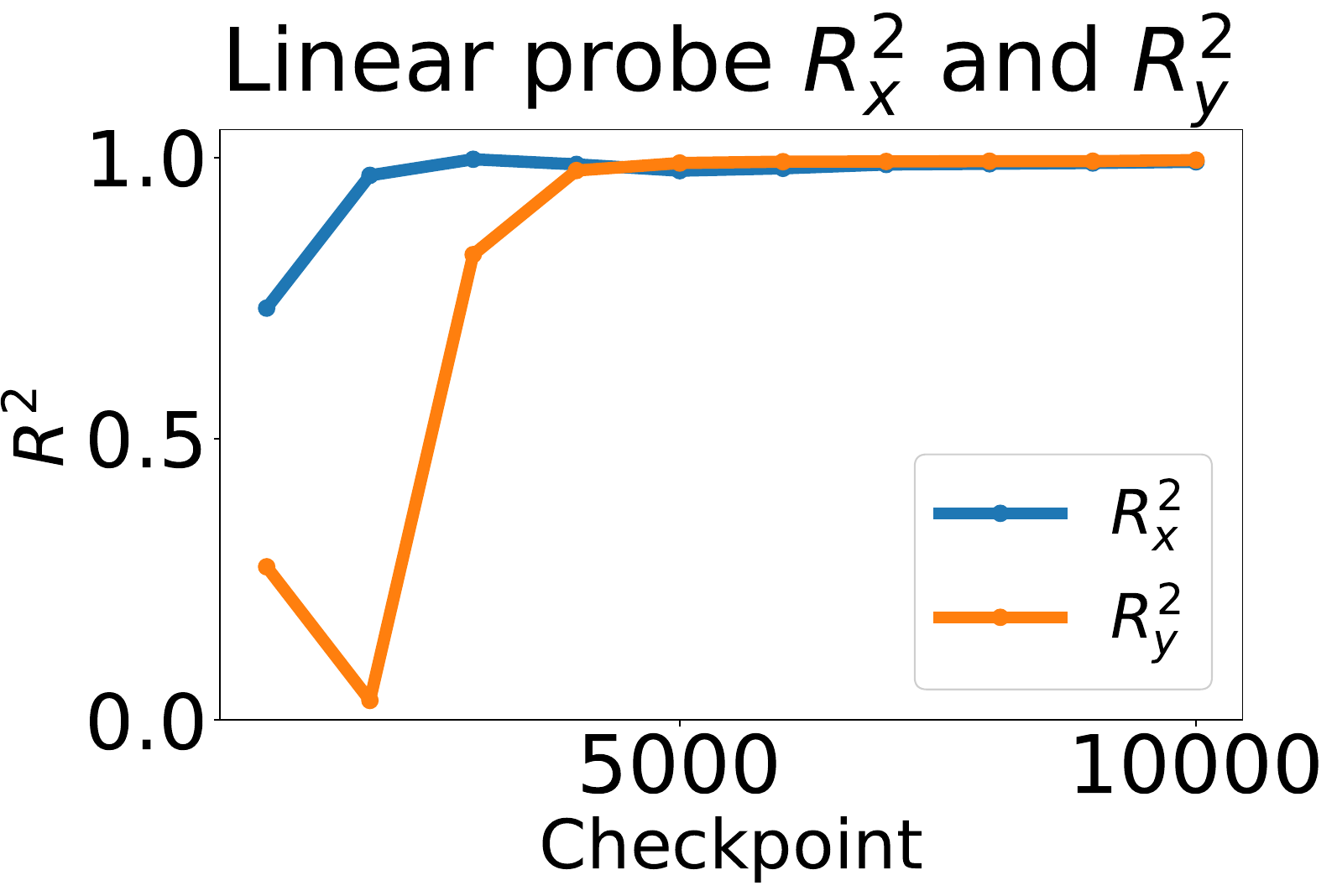}
    \caption{Linear probes on MISL representations can accurately recover the control-relevant features \((x,y)\) in \texttt{PointMaze}.}
    \label{fig:identifiability}
    \vspace{-0.3em}
\end{figure}
\textbf{\textbf{RQ5.} Do the networks require regularization for noise-invariance?}
Theorem~\ref{theorem:policy-empowerment-e-invariance} requires information bottleneck regularization to guarantee noise-invariant policies and representations. However, in practice, we use the implementation of~\cite{park2024metrascalableunsupervisedrl}, which applies no explicit regularization to either the policy or discriminator networks. The default implementation uses a 2-layer MLP policy with hidden width 1024 and a 2-dimensional skill vector in the \texttt{Point-Maze} environment, yet noise invariance still emerges empirically. To investigate whether this effect is due to limited network size acting as an implicit regularizer, we scale each component independently: increasing the policy to a 5-layer MLP with width 1536, the discriminator to an 8-layer MLP with width 2048, and the skill dimension from 2 to 16. We then measure the variance of the learned representations along the 5-dimensional Gaussian noise directions versus the state dimensions. Figure~\ref{fig:network-size} shows that at convergence, the representations remain consistently noise-invariant even at these larger scales, suggesting that explicit regularization may not be necessary in practice.

\begin{figure}
    \centering
    \includegraphics[width=0.8\linewidth]{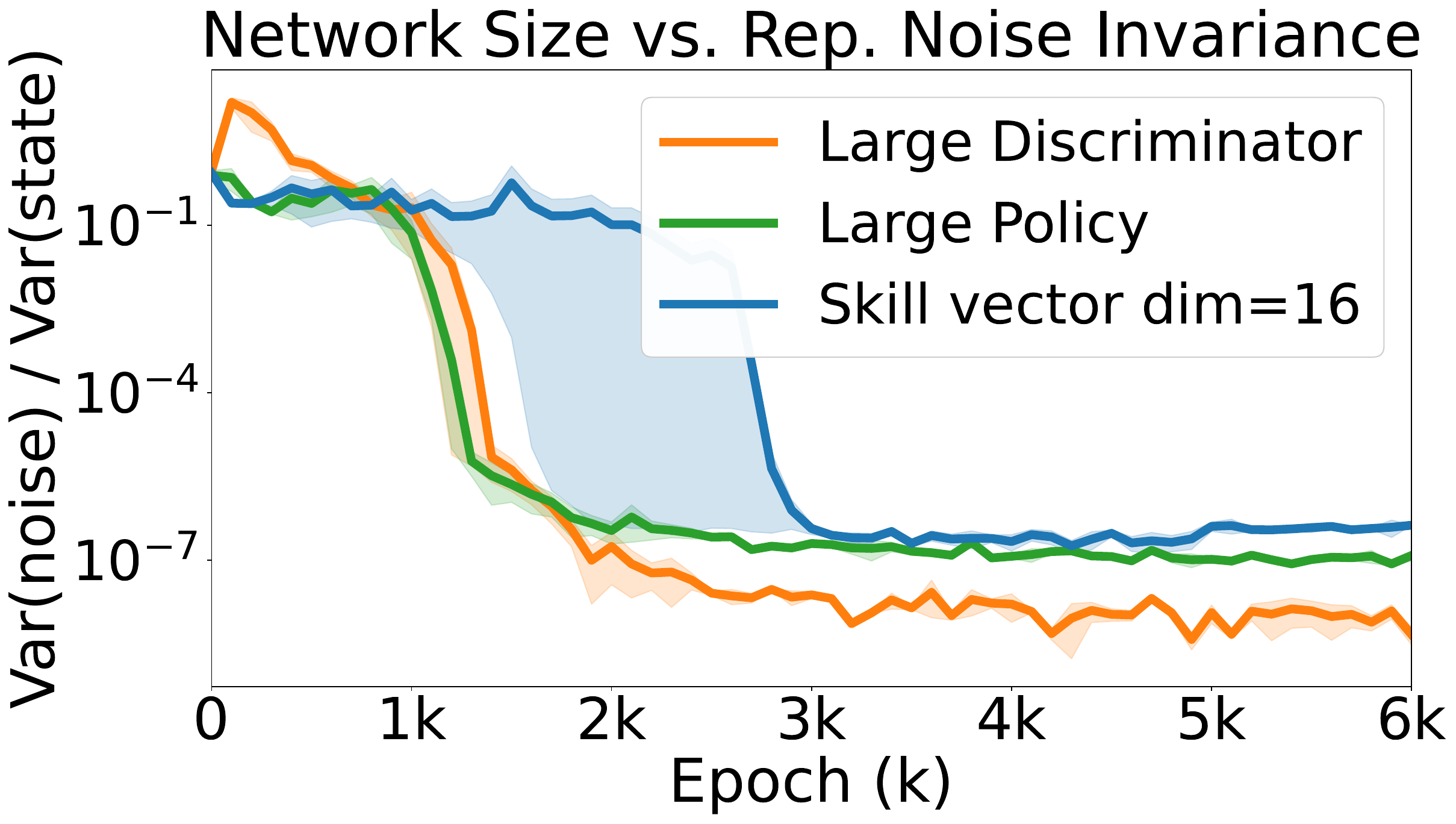}
    \caption{Variance of the representations to the noise relative to the state dimensions \textbf{(lower is better)}. Even as the network sizes increase, the representations remain noise-invariant without requiring explicit regularization.}
    \label{fig:network-size}
    \vspace{-2em}
\end{figure}
\section{Conclusion}
For an agent to understand the world, they should not only be able to predict the future, but also reason about which bits are uncontrollable and hence need not be predicted. Learning such a model, or the representations that support it, requires interaction. The main result of our paper is that representations learned through empowerment --- a training method that requires interaction --- capture only the control-relevant aspects of the environment, and nothing more. Broadly, while representation learning is often viewed as finding patterns in a dataset, or in the solution to a particular task, our analysis casts empowerment as learning to represent some minimal sufficient statistic of the world, before a task is specified. 

\textbf{Limitations.}
Our theoretical analysis suggests a connection between causal learning and empowerment. 
In reinforcement learning, causal learning has traditionally been used to identify state features that are not causally affected by actions and ignore them~\citep{ICML22-wang}. 
Our results indicate that, under certain conditions, empowerment objectives may automatically discard such irrelevant features while capturing all control-relevant ones. 
However, it remains unclear whether these representations are sufficient to recover the full causal structure of the control-relevant features, and if so, to what extent such a causal model would be useful. Moreover, our analysis is restricted to fully observable settings; a theoretical analysis of MISL representations in partially observable settings remains unexplored.

\section*{Impact Statement}


This paper presents work whose goal is to advance the field of Machine
Learning. There are many potential societal consequences of our work, none
which we feel must be specifically highlighted here.


{\footnotesize 
\subsection*{Acknowledgment}
The authors are pleased to acknowledge that the work reported on in this paper was substantially performed using Princeton University’s Research Computing resources.
This material is based upon work supported by the National Science Foundation under Award No. 2441665. Any opinions, findings and conclusions or recommendations expressed in this material are those of the author(s) and do not necessarily reflect the views of the National Science Foundation. This work was supported in part by Google.}
{\footnotesize
\bibliography{references}

@article{hansen2019fast,
  title={Fast task inference with variational intrinsic successor features},
  author={Hansen, Steven and Dabney, Will and Barreto, Andre and Van de Wiele, Tom and Warde-Farley, David and Mnih, Volodymyr},
  journal={arXiv preprint arXiv:1906.05030},
  year={2019}
}

@article{gregor2016variational,
  title={Variational intrinsic control},
  author={Gregor, Karol and Rezende, Danilo Jimenez and Wierstra, Daan},
  journal={arXiv preprint arXiv:1611.07507},
  year={2016}
}

@article{ghosh2018learning,
  title={Learning actionable representations with goal-conditioned policies},
  author={Ghosh, Dibya and Gupta, Abhishek and Levine, Sergey},
  journal={arXiv preprint arXiv:1811.07819},
  year={2018}
}

@inproceedings{du2019provably,
  title={Provably efficient rl with rich observations via latent state decoding},
  author={Du, Simon and Krishnamurthy, Akshay and Jiang, Nan and Agarwal, Alekh and Dudik, Miroslav and Langford, John},
  booktitle={International Conference on Machine Learning},
  pages={1665--1674},
  year={2019},
  organization={PMLR}
}

@article{zhang2020learning,
  title={Learning invariant representations for reinforcement learning without reconstruction},
  author={Zhang, Amy and McAllister, Rowan and Calandra, Roberto and Gal, Yarin and Levine, Sergey},
  journal={arXiv preprint arXiv:2006.10742},
  year={2020}
}

@article{lamb2022guaranteed,
  title={Guaranteed discovery of control-endogenous latent states with multi-step inverse models},
  author={Lamb, Alex and Islam, Riashat and Efroni, Yonathan and Didolkar, Aniket and Misra, Dipendra and Foster, Dylan and Molu, Lekan and Chari, Rajan and Krishnamurthy, Akshay and Langford, John},
  journal={arXiv preprint arXiv:2207.08229},
  year={2022}
}

@article{wang2022denoised,
  title={Denoised mdps: Learning world models better than the world itself},
  author={Wang, Tongzhou and Du, Simon S and Torralba, Antonio and Isola, Phillip and Zhang, Amy and Tian, Yuandong},
  journal={arXiv preprint arXiv:2206.15477},
  year={2022}
}

@inproceedings{zhang2020invariant,
  title={Invariant causal prediction for block mdps},
  author={Zhang, Amy and Lyle, Clare and Sodhani, Shagun and Filos, Angelos and Kwiatkowska, Marta and Pineau, Joelle and Gal, Yarin and Precup, Doina},
  booktitle={International Conference on Machine Learning},
  pages={11214--11224},
  year={2020},
  organization={PMLR}
}

@InProceedings{ICML22-wang,
  author = {Zizhao Wang and Xuesu Xiao and Zifan Xu and Yuke Zhu and Peter Stone},
  title = {Causal Dynamics Learning for Task-Independent State Abstraction},
  booktitle = {Proceedings of the 39th International Conference on Machine Learning (ICML2022)},
  location = {Baltimore, USA},
  month = {July},
  year = {2022},
  wwwnote={<a href="https://slideslive.com/38983083">recorded presentation</a>},
}

@inproceedings{successorfeature,
author = {Barreto, Andr\'{e} and Dabney, Will and Munos, R\'{e}mi and Hunt, Jonathan J. and Schaul, Tom and van Hasselt, Hado and Silver, David},
title = {Successor features for transfer in reinforcement learning},
year = {2017},
isbn = {9781510860964},
publisher = {Curran Associates Inc.},
address = {Red Hook, NY, USA},
pages = {4058–4068},
numpages = {11},
location = {Long Beach, California, USA},
series = {NIPS'17}
}

@INPROCEEDINGS{Klyubin05empowerment,
  author={Klyubin, A.S. and Polani, D. and Nehaniv, C.L.},
  booktitle={2005 IEEE Congress on Evolutionary Computation}, 
  title={Empowerment: a universal agent-centric measure of control}, 
  year={2005},
  volume={1},
  number={},
  pages={128-135 Vol.1},
  doi={10.1109/CEC.2005.1554676}}

@misc{eysenbach2018diversityneedlearningskills,
      title={Diversity is All You Need: Learning Skills without a Reward Function}, 
      author={Benjamin Eysenbach and Abhishek Gupta and Julian Ibarz and Sergey Levine},
      year={2018},
      eprint={1802.06070},
      archivePrefix={arXiv},
      primaryClass={cs.AI},
      url={https://arxiv.org/abs/1802.06070}, 
}

@inproceedings{ICLR2025_MISLFly,
 author = {Zheng, Chongyi and Tuyls, Jens and Peng, Joanne and Eysenbach, Benjamin},
 booktitle = {International Conference on Representation Learning},
 editor = {Y. Yue and A. Garg and N. Peng and F. Sha and R. Yu},
 pages = {31876--31903},
 title = {Can a MISL Fly? Analysis and Ingredients for Mutual Information Skill Learning},
 url = {https://proceedings.iclr.cc/paper_files/paper/2025/file/4f0396df41b7ddcd10cafe7c16e0db85-Paper-Conference.pdf},
 volume = {2025},
 year = {2025}
}

@article{
lamb2023guaranteed,
title={Guaranteed Discovery of Control-Endogenous Latent States with Multi-Step Inverse Models},
author={Alex Lamb and Riashat Islam and Yonathan Efroni and Aniket Rajiv Didolkar and Dipendra Misra and Dylan J Foster and Lekan P Molu and Rajan Chari and Akshay Krishnamurthy and John Langford},
journal={Transactions on Machine Learning Research},
issn={2835-8856},
year={2023},
url={https://openreview.net/forum?id=TNocbXm5MZ},
note={}
}

@inproceedings{
efroni2022provably,
title={Provably Filtering Exogenous Distractors using Multistep Inverse Dynamics},
author={Yonathan Efroni and Dipendra Misra and Akshay Krishnamurthy and Alekh Agarwal and John Langford},
booktitle={International Conference on Learning Representations},
year={2022},
url={https://openreview.net/forum?id=RQLLzMCefQu}
}

@misc{reizinger2025skilllearningpolicydiversity,
      title={Skill Learning via Policy Diversity Yields Identifiable Representations for Reinforcement Learning}, 
      author={Patrik Reizinger and Bálint Mucsányi and Siyuan Guo and Benjamin Eysenbach and Bernhard Schölkopf and Wieland Brendel},
      year={2025},
      eprint={2507.14748},
      archivePrefix={arXiv},
      primaryClass={cs.LG},
      url={https://arxiv.org/abs/2507.14748}, 
}

@misc{park2024metrascalableunsupervisedrl,
      title={METRA: Scalable Unsupervised RL with Metric-Aware Abstraction}, 
      author={Seohong Park and Oleh Rybkin and Sergey Levine},
      year={2024},
      eprint={2310.08887},
      archivePrefix={arXiv},
      primaryClass={cs.LG},
      url={https://arxiv.org/abs/2310.08887}, 
}

@inproceedings{
park2022lipschitzconstrained,
title={Lipschitz-constrained Unsupervised Skill Discovery},
author={Seohong Park and Jongwook Choi and Jaekyeom Kim and Honglak Lee and Gunhee Kim},
booktitle={International Conference on Learning Representations},
year={2022},
url={https://openreview.net/forum?id=BGvt0ghNgA}
}

@inproceedings{
hansen2021entropic,
title={Entropic Desired Dynamics for Intrinsic Control},
author={Steven Stenberg Hansen and Guillaume Desjardins and Kate Baumli and David Warde-Farley and Nicolas Heess and Simon Osindero and Volodymyr Mnih},
booktitle={Advances in Neural Information Processing Systems},
editor={A. Beygelzimer and Y. Dauphin and P. Liang and J. Wortman Vaughan},
year={2021},
url={https://openreview.net/forum?id=Juk1LKbFvd}
}

@inproceedings{
eysenbach2022the,
title={The Information Geometry of Unsupervised Reinforcement Learning},
author={Benjamin Eysenbach and Ruslan Salakhutdinov and Sergey Levine},
booktitle={International Conference on Learning Representations},
year={2022},
url={https://openreview.net/forum?id=3wU2UX0voE}
}

@article{achiam2018variational,
  title={Variational option discovery algorithms},
  author={Achiam, Joshua and Edwards, Harrison and Amodei, Dario and Abbeel, Pieter},
  journal={arXiv preprint arXiv:1807.10299},
  year={2018}
}

@article{sharma2019dynamics,
  title={Dynamics-aware unsupervised discovery of skills},
  author={Sharma, Archit and Gu, Shixiang and Levine, Sergey and Kumar, Vikash and Hausman, Karol},
  journal={arXiv preprint arXiv:1907.01657},
  year={2019}
}

@article{barber2004algorithm,
  title={The im algorithm: a variational approach to information maximization},
  author={Barber, David and Agakov, Felix},
  journal={Advances in neural information processing systems},
  volume={16},
  number={320},
  pages={201},
  year={2004}
}

@article{mohamed2015variational,
  title={Variational information maximisation for intrinsically motivated reinforcement learning},
  author={Mohamed, Shakir and Jimenez Rezende, Danilo},
  journal={Advances in neural information processing systems},
  volume={28},
  year={2015}
}

@inproceedings{
levy2025representation,
title={Representation Learning and Skill Discovery with Empowerment},
author={Andrew Levy and Alessandro G Allievi and George Konidaris},
booktitle={Reinforcement Learning Conference},
year={2025},
url={https://openreview.net/forum?id=w61h2RK8ni}
}

@book{Altman1999CMDP,
  author    = {Altman, Eitan},
  title     = {Constrained Markov Decision Processes},
  year      = {1999},
  edition   = {1},
  publisher = {Routledge},
  doi       = {10.1201/9781315140223}
}

@inproceedings{tishby99information,
  author = {Tishby, Naftali and Pereira, Fernando C. and Bialek, William},
  booktitle = {Proc. of the 37-th Annual Allerton Conference on Communication, Control and Computing},
  comment = {cte: information bottleneck},
  pages = {368-377},
  title = {The information bottleneck method},
  url = {https://arxiv.org/abs/physics/0004057},
  year = 1999
}

@inproceedings{
alemi2017deep,
title={Deep Variational Information Bottleneck},
author={Alexander A. Alemi and Ian Fischer and Joshua V. Dillon and Kevin Murphy},
booktitle={International Conference on Learning Representations},
year={2017},
url={https://openreview.net/forum?id=HyxQzBceg}
}

@inproceedings{hansen2022bisimulation,
  title={Bisimulation makes analogies in goal-conditioned reinforcement learning},
  author={Hansen-Estruch, Philippe and Zhang, Amy and Nair, Ashvin and Yin, Patrick and Levine, Sergey},
  booktitle={International Conference on Machine Learning},
  pages={8407--8426},
  year={2022},
  organization={PMLR}
}

@article{ilyas2019adversarial,
  title={Adversarial examples are not bugs, they are features},
  author={Ilyas, Andrew and Santurkar, Shibani and Tsipras, Dimitris and Engstrom, Logan and Tran, Brandon and Madry, Aleksander},
  journal={Advances in neural information processing systems},
  volume={32},
  year={2019}
}

@article{tesauro1994td,
  title={TD-Gammon, a self-teaching backgammon program, achieves master-level play},
  author={Tesauro, Gerald},
  journal={Neural computation},
  volume={6},
  number={2},
  pages={215--219},
  year={1994},
  publisher={MIT Press One Rogers Street, Cambridge, MA 02142-1209, USA journals-info~…}
}

@article{schrittwieser2020mastering,
  title={Mastering atari, go, chess and shogi by planning with a learned model},
  author={Schrittwieser, Julian and Antonoglou, Ioannis and Hubert, Thomas and Simonyan, Karen and Sifre, Laurent and Schmitt, Simon and Guez, Arthur and Lockhart, Edward and Hassabis, Demis and Graepel, Thore and others},
  journal={Nature},
  volume={588},
  number={7839},
  pages={604--609},
  year={2020},
  publisher={Nature Publishing Group UK London}
}

@article{wurman2022outracing,
  title={Outracing champion Gran Turismo drivers with deep reinforcement learning},
  author={Wurman, Peter R and Barrett, Samuel and Kawamoto, Kenta and MacGlashan, James and Subramanian, Kaushik and Walsh, Thomas J and Capobianco, Roberto and Devlic, Alisa and Eckert, Franziska and Fuchs, Florian and others},
  journal={Nature},
  volume={602},
  number={7896},
  pages={223--228},
  year={2022},
  publisher={Nature Publishing Group UK London}
}

@book{Pearl_2009, place={Cambridge}, edition={2}, title={Causality}, publisher={Cambridge University Press}, author={Pearl, Judea}, year={2009}}

@book{10.5555/3202377,
author = {Peters, Jonas and Janzing, Dominik and Schlkopf, Bernhard},
title = {Elements of Causal Inference: Foundations and Learning Algorithms},
year = {2017},
isbn = {0262037319},
publisher = {The MIT Press}
}

@article{ferns2011bisimulation,
  title={Bisimulation metrics for continuous Markov decision processes},
  author={Ferns, Norm and Panangaden, Prakash and Precup, Doina},
  journal={SIAM Journal on Computing},
  volume={40},
  number={6},
  pages={1662--1714},
  year={2011},
  publisher={SIAM}
}

@article{touati2021learning,
  title={Learning one representation to optimize all rewards},
  author={Touati, Ahmed and Ollivier, Yann},
  journal={Advances in Neural Information Processing Systems},
  volume={34},
  pages={13--23},
  year={2021}
}

@misc{lamb_controllablelatentstate,
  author       = {Alex Lamb},
  title        = {ControllableLatentState: Repo of Public Code for the AC-State Paper},
  howpublished = {\url{https://github.com/alexmlamb/ControllableLatentState}},
  note         = {GitHub repository, accessed 2026-03-30},
  year         = {2026}
}

@misc{pointmaze_farama,
  author       = {{Gymnasium Robotics}},
  title        = {Point Maze Environment},
  year         = {2026},
  url          = {https://robotics.farama.org/envs/maze/point_maze/},
  note         = {Accessed: 2026-05-04}
}
\bibliographystyle{icml2026}
}

\newpage
\appendix
\onecolumn

\section{Background Theoretical Derivations}

\subsection{Empowerment variational lower bound}
\label{app:empowerment-variational-lower-bound}
\begin{align*}
&\max_{p(z \mid s_0)} I(Z; S_n \mid s_0)
\\&= \max_{p(z \mid s_0)}
\mathbb{E}_{z,s_n}
\left[
\log \frac{p(z \mid s_0, s_n)}{p(z \mid s_0)}
\right] \\
&= \max_{p(z \mid s_0)}
\mathbb{E}_{z,s_n}
\left[
\log \frac{q_{\theta}(z \mid s_0, s_n)}{p(z \mid s_0)}
\right]
+
\mathbb{E}_{z,s_n}
\left[
\log \frac{p(z \mid s_0, s_n)}{q_{\theta}(z \mid s_0, s_n)}
\right] \\&= \max_{p(z \mid s_0)}
\mathbb{E}_{z,s_n}
\left[
\log \frac{q_{\theta}(z \mid s_0, s_n)}{p(z \mid s_0)}
\right]
+
\mathbb{E}_{s_n}
\left[D_{KL}(p(z \mid s_0, s_n) \| q_{\theta}(z \mid s_0, s_n)) 
\right] \\&\geq \max_{p(z \mid s_0)}
\mathbb{E}_{z,s_n}
\left[
\log \frac{q_{\theta}(z \mid s_0, s_n)}{p(z \mid s_0)}
\right]
\end{align*}
\subsection{The plausibility of minimal representation assumption in practice}\label{app:repr-minimality-without-regularization}
A track of prior work on invariant representation learning uses inverse dynamics models: a representation of the state is learned and used to predict the action that caused a transition between two states. The bottleneck regularization is required to prevent the representation from capturing task-irrelevant information \citep{lamb2022guaranteed}.

In contrast, most MISL implementations learn a posterior over latent skills rather than actions. Specifically, the posterior network outputs a distribution over skills, typically parameterized as a Gaussian, and has the form
$q: \mathcal{S} \times \mathcal{S} \rightarrow \Delta(\mathcal{Z})$ and usually the mean of this distribution is directly used as the state representation (current MISL methods learn just one representation for each state as opposed to distinct forward and backward)\citep{eysenbach2018diversityneedlearningskills}. From Theorem~\ref{theorem:policy-empowerment-e-invariance} and Corollary~\ref{corollary:invariance-repr} we know that the posterior over skill is in fact invariant to noise; therefore, since these representations are directly the skill distribution, they are invariant to noise too. In fact, as opposed to the prior work~\citep{lamb2022guaranteed} that looks at the middle layer of the neural network, hence requires information bottleneck regularization, MISL methods look at the final layer directly and therefore don't need the regularization.
 Moreover, since the skill distribution is highly expressive in practice, representing the states directly with the skill doesn't limit the expressivity of state representations.

\section{The Geometric Perspective on Empowerment Optimization}\label{app:subsec:geometric-intuition}
\citet{eysenbach2022the} provides a geometric view of the empowerment optimization problem: each achievable state distribution corresponds to a point in $\Delta^{|\mathcal{S}|-1}$, and the set of all achievable distributions forms a convex polytope.

\begin{lemma}\label{lemma:vertices-empowerment}
The empowerment objective assigns nonzero probability only to vertices of the polytope formed by achievable state distributions.
\end{lemma}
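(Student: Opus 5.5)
Our plan is to view the empowerment problem at a fixed $s_0$ as a channel-capacity problem over a continuum of ``inputs'': each skill $z$ is identified with the point $d_z := d^{\pi_z}_\gamma(\cdot\mid s_0)$ in the simplex $\Delta^{|\mathcal{S}|-1}$. The mutual information then depends on $p(z\mid s_0)$ only through the induced mixing measure $\mu$ on the achievable set $\mathcal{D}$. Indeed, $I(Z;S^+\mid s_0)=\int D_{KL}(d\,\|\,\bar d)\,d\mu(d)$ with $\bar d=\int d\,d\mu$. We take $\mathcal{D}$ to be the convex polytope of \citet{eysenbach2022the}; its convexity follows from the standard occupancy-measure characterization (Theorem~3.1 of \citet{Altman1999CMDP}). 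Its vertices are occupancies of deterministic stationary policies.

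\textbf{Step 1 (replacement argument).} Let $p^*$ be optimal and suppose it assigns positive mass to a skill $z$ whose $d_z$ is not a vertex. Write $d_z=\sum_i \lambda_i v_i$ as a strict convex combination of at least two distinct vertices, which is possible because $d_z$ is not extreme. Split the mass $p^*(z)$ into masses $p^*(z)\lambda_i$ placed on new skills realizing $v_i$. This construction leaves the marginal $\bar d$ unchanged, since the mixture is preserved.

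\textbf{Step 2 (strict improvement).} With $\bar d$ fixed, the objective is linear in $\mu$ with integrand $f(d)=D_{KL}(d\,\|\,\bar d)$. The function $f$ is strictly convex in $d$ on the simplex, because $d\mapsto\sum_s d(s)\log d(s)$ is strictly convex and the remaining term is linear. Jensen's inequality therefore gives $\sum_i\lambda_i f(v_i) > f(d_z)$. Hence the new distribution attains strictly larger mutual information, which contradicts optimality. We conclude that every skill in the support of $p^*$ has occupancy at a vertex.

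\textbf{Main obstacle.} The key subtlety is that splitting must keep $\bar d$ fixed, so that we compare $f$ evaluated against the same reference distribution. Step 1 guarantees exactly this, and the rest is Jensen. A secondary issue is whether the construction respects the paper's setup of a fixed skill-conditioned family. We handle this by noting that the maximization in Eq.~\ref{eq:empowerment_def_occupancy} is over all distributions on policies, so adding skills realizing $v_i$ is permitted. Finally, if $\mathcal{D}$ is infinite-dimensional or non-polytopal, as with continuous states, we would replace ``vertex'' with ``extreme point'' and use Choquet/Krein–Milman in place of finite decomposition. For the finite MDP setting assumed here, the finite decomposition suffices.
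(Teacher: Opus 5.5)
Your proof is correct. The paper does not prove this lemma itself; it only cites Lemma~6.1 of \citet{eysenbach2022the}. That geometric result rests on the same fact you use: $D_{\mathrm{KL}}(\cdot\,\|\,\bar d)$ is convex on the achievable polytope when the average occupancy $\bar d$ is held fixed.

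Your mass-splitting argument gains two things over the citation:
\begin{enumerate}
\item It is self-contained. It does not rely on the capacity saddle-point (KKT or information-radius) characterization, under which every supported skill maximizes $D_{\mathrm{KL}}(\cdot\,\|\,\bar d)$.
\item Because you use \emph{strict} convexity, it shows directly that \emph{every} optimal $p^*$, not just some optimizer, puts zero mass on non-vertex occupancies. This is the form the paper relies on when it later asserts that convex-combination (asymmetric) skills receive zero probability under the optimal prior.
\end{enumerate}

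One small detail is worth stating explicitly: each $f(v_i)$ is finite. If $\bar d(s)=0$, then $d_z(s)=0$ because $p^*(z)>0$. Since $d_z=\sum_i\lambda_i v_i$ with $\lambda_i>0$ and $v_i\ge 0$, this forces $v_i(s)=0$ for every $i$. So strict Jensen is applied on the face where $f$ is finite and strictly convex.
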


\begin{proof}
This follows directly from Lemma 6.1 in \citet{eysenbach2022the}.
\end{proof}

\begin{lemma}\label{lemma:deterministic-empowerment}
Under the empowerment objective, only deterministic policies are selected.
\end{lemma}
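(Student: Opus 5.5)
The plan is to derive the lemma from Lemma~\ref{lemma:vertices-empowerment}: the empowerment objective places mass only on vertices of the polytope of achievable state distributions. It then suffices to show that every vertex of this polytope is realized by a deterministic policy, and that a stochastic policy can only produce a vertex in a degenerate way that admits a deterministic representative.

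First I fix $s_0$ and write the achievable set $\mathcal{D}(s_0)=\{d^\pi_\gamma(\cdot\mid s_0):\pi\in\Pi\}$, or the $n$-step analogue $\{p^\pi(S_n=\cdot\mid s_0)\}$. Next I invoke the standard occupancy-measure characterization. Theorem~3.1 of \citet{Altman1999CMDP}, already used in the proof sketch of Theorem~\ref{theorem:policy-empowerment-e-invariance}, states that the set of discounted state(-action) occupancy measures is a convex polytope. It is cut out by the linear Bellman flow constraints
\[
\sum_a \mu(s,a) = (1-\gamma)\,p(s\mid s_0,\cdot) + \gamma \sum_{s',a'} p(s\mid s',a')\,\mu(s',a'), \qquad \mu\ge 0 .
\]
Its extreme points correspond exactly to stationary deterministic policies, because a basic feasible solution has at most one positive action per state. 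The state occupancy $d_\gamma^\pi$ is a linear projection of the state-action occupancy $\mu$. Every vertex $v$ of the projected polytope $\mathcal{D}(s_0)$ is therefore the image of some vertex of the state-action polytope, namely a vertex of the face $\{\mu:\text{proj}(\mu)=v\}$, and that vertex is a deterministic policy. For the $n$-step objective I would argue the same way by backward induction: $p^\pi(S_n\mid s_0)$ is multilinear in the per-timestep action distributions, so the set of achievable distributions is the convex hull of those produced by deterministic nonstationary policies, and its vertices are attained by deterministic ones.

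Then I combine the two steps. By Lemma~\ref{lemma:vertices-empowerment}, each skill $z$ with $p^*(z\mid s_0)>0$ induces an occupancy that is a vertex $v_z$. Suppose $\pi_z$ were genuinely stochastic at some state $s$ reachable from $s_0$. Write $\pi_z$ as a mixture of the policies obtained by fixing each action at $s$. The occupancy of $\pi_z$ is then a convex combination of their occupancies, which is nonlinear in the mixing weights but still lies on a segment or curve within the polytope passing through those points. Because $v_z$ is extreme, all these deterministic-at-$s$ variants must produce the same $v_z$. Replacing $\pi_z$ by any one of them leaves the channel $z\mapsto S^+$, and hence the mutual information, unchanged. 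Iterating over states yields a deterministic policy with identical occupancy. Under the policy minimality assumption (Definition~\ref{def:minimal-policy}), which picks the representative of $\Pi_d$ with least $I(S;A)$, the selected policy is deterministic on the relevant states.

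The main obstacle is the third step: justifying that the occupancy of a policy randomizing at a single state lies in the convex hull of the corresponding per-action variants, with positive weight on each. This holds because the occupancy is a linear-fractional function of the randomization at $s$, via the renewal decomposition at visits to $s$. Hence it is a convex combination with state-dependent but strictly positive weights, and extremality forces all the endpoints to coincide. I would prove this carefully using the renewal identity $d^\pi = \lambda\, d^{\pi_{a}} + (1-\lambda)\, d^{\pi_{a'}}$ with $\lambda\in(0,1)$. I would also state the honest version of the lemma: an optimal solution always exists with deterministic policies, and any stochastic optimal policy is occupancy-equivalent to a deterministic one.
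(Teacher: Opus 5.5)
Your argument uses the same two ingredients as the paper's proof. The occupancy of a stochastic policy is a convex combination of occupancies of deterministic policies, and Lemma~\ref{lemma:vertices-empowerment} applies. The paper concludes in one line that stochastic policies therefore sit at non-vertex (``interior'') points. You correctly notice that this inference fails when the deterministic variants share the same occupancy, e.g.\ two actions with identical transition kernels at a reachable state. In that case the stochastic policy sits at the same vertex with the same mutual information, so the literal claim ``only deterministic policies are selected'' is false. The provable content is your honest version, and it is also all the paper actually uses: the ``it suffices to consider deterministic skill-policies'' step in Theorem~\ref{theorem:forward-alias-similar-future-control} and the footnote to Proposition~\ref{proposition:deterministic-identifiability}.

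Your Step 2 together with Lemma~\ref{lemma:vertices-empowerment} already gives this version directly. Each selected occupancy is a vertex, hence attained by a deterministic policy, and swapping in a policy with the same occupancy from $s_0$ leaves the channel $z\mapsto S^+$ unchanged. The renewal identity you flag as the main obstacle is needed only for the stronger claim that every action in the support of $\pi_z$ gives the same occupancy. For the $n$-step objective, that splitting must be done per pair $(t,s)$ with nonstationary policies, where the law of $S_n$ is linear in $\pi_t(\cdot\mid s)$. For a stationary policy it is polynomial in the randomization at a revisited state, and it need not lie in the hull of the deterministic variants.

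The step that fails is the appeal to policy minimality (Definition~\ref{def:minimal-policy}) to conclude that the selected representative is deterministic. Minimizing $I(S;A)=H(A)-H(A\mid S)$ rewards making the action \emph{independent} of the state, and randomizing among interchangeable actions can lower it strictly. Take three states of equal mass and actions $a,b$. Suppose membership in $\Pi_d$ pins the action to $a$ at $s_2$ and to $b$ at $s_3$, because the two actions have different effects there, while $a$ and $b$ have identical transition kernels at $s_1$. Playing $a$ with probability $q$ at $s_1$ gives $I(S;A)=h\bigl(\tfrac{1+q}{3}\bigr)-\tfrac13 h(q)$, where $h$ is the binary entropy. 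This equals $h(1/3)\approx 0.64$ nats for $q\in\{0,1\}$, but $\tfrac23\log 2\approx 0.46$ nats at $q=\tfrac12$, which is its unique minimizer. So the minimal policy in this class is strictly stochastic, and your argument does not deliver determinism. Drop this step and state the honest version. If you want a selection rule that returns deterministic representatives, it should penalize $H(A\mid S)$ among policies with the same occupancy from $s_0$, not $I(S;A)$; your Step 2 shows zero is attainable.
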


\begin{proof}
Any stochastic policy induces a state distribution that is a convex combination of the distributions induced by deterministic policies that select the same actions. Consequently, stochastic policies correspond to interior points of the achievable polytope. By Lemma~\ref{lemma:vertices-empowerment}, only vertices are selected, so stochastic policies are not chosen.
\end{proof}

\begin{lemma}\label{lemma:coverage}
For any state $s^+$, if there exists at least one policy $\pi_{z}$ such that
$P^{\pi_{z}}(S_n=s^+\mid s_0)>0$, then under an optimal skill prior $p^*(z\mid s_0)$ we have
\[
\mathbb{E}_{z\sim p^*(z\mid s_0)}\,P^{\pi_z}(S_n=s^+\mid s_0)\;>\;0.
\]
\end{lemma}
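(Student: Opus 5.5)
We argue by contradiction, using the channel-capacity characterization of the optimal skill distribution. Fix $s_0$ and write $P_z(\cdot) \coloneqq P^{\pi_z}(S_n=\cdot\mid s_0)$ for the $n$-step outcome distribution of skill $z$. Let $p^*(z\mid s_0)$ be optimal and let $p^*(s) = \mathbb{E}_{z\sim p^*}P_z(s)$ be the induced outcome marginal. Suppose some reachable state $s^+$ has $p^*(s^+)=0$, while a policy $\pi_{\bar z}$ with $P_{\bar z}(s^+)>0$ exists. The plan is to construct a perturbed skill distribution that puts a small mass $\epsilon$ on $\bar z$,
\[
p_\epsilon = (1-\epsilon)\,p^* + \epsilon\,\delta_{\bar z},
\]
and show that $I_\epsilon(Z;S_n\mid s_0) > I(Z;S_n\mid s_0)$ under $p^*$ for small $\epsilon>0$. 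This contradicts optimality.

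The key computation is the one-sided derivative of mutual information at $\epsilon=0$. The standard identity (as in the Blahut--Arimoto / Kuhn--Tucker conditions for channel capacity) gives
\[
\frac{d}{d\epsilon} I_\epsilon\Big|_{\epsilon=0^+} = D_{KL}\bigl(P_{\bar z}\,\|\,p^*\bigr) - I^*,
\]
where $I^*$ is the optimal value. Because $P_{\bar z}(s^+)>0$ while $p^*(s^+)=0$, the term $D_{KL}(P_{\bar z}\|p^*)$ is $+\infty$. Hence the derivative is $+\infty$, and $I_\epsilon > I^*$ for all sufficiently small $\epsilon$.

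To avoid manipulating infinite derivatives, I would make this step concrete. Decompose
\[
I_\epsilon = H(S_n\mid s_0) - H(S_n\mid Z, s_0).
\]
The conditional entropy term is linear in $\epsilon$: it equals $(1-\epsilon)H_0+\epsilon H(P_{\bar z})$. For the marginal entropy, the new mass at $s^+$ is $\epsilon P_{\bar z}(s^+)$, which contributes
\[
-\epsilon P_{\bar z}(s^+)\log\bigl(\epsilon P_{\bar z}(s^+)\bigr) = \Theta(\epsilon\log(1/\epsilon)).
\]
The remaining states, together with the linear terms, change by at most $O(\epsilon)$, using continuity of $x\log x$ on the states where $p^*>0$. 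Since $\epsilon\log(1/\epsilon)$ dominates $\epsilon$, the gain is positive for small $\epsilon$.

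The main obstacle is being careful about what "a policy $\pi_z$" ranges over, so that $\bar z$ is an admissible skill. I would take the skill space to index all (deterministic, by Lemma~\ref{lemma:deterministic-empowerment}) policies, so that $\delta_{\bar z}$ is a legitimate mixture component; any stochastic $\pi_{\bar z}$ reaching $s^+$ can be replaced by a deterministic policy that also reaches $s^+$ with positive probability. A second subtlety is bounding the $O(\epsilon)$ terms uniformly. The state space is assumed finite, as in the tabular geometric picture of Lemma~\ref{lemma:vertices-empowerment}, so the bound is a finite sum of Lipschitz-type estimates on the states where $p^*>0$. Finally, $\mathbb{E}_{z\sim p^*}P_z(s^+)>0$ is exactly $p^*(s^+)>0$, which gives the conclusion.
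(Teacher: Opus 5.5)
Your proposal is correct and follows essentially the same route as the paper. The paper writes the KKT conditions for the skill prior, computes $\partial \mathcal{L}/\partial p(z\mid s_0) = D_{\mathrm{KL}}(P^{\pi_z}\,\|\,p) - 1 + \lambda$, and notes that any skill reaching an uncovered $s^+$ would have infinite KL, which violates the requirement $D_{\mathrm{KL}} \le C$ for inactive skills. Your mixture perturbation $p_\epsilon$ is this same first-order argument, and your explicit $\epsilon\log(1/\epsilon)$ versus $O(\epsilon)$ comparison is a useful tightening, since it avoids invoking KKT at a boundary point where the relevant derivative is $+\infty$.
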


\begin{proof}
We write the empowerment objective as
\[
I(Z;S_n\mid s_0)
=\sum_{z,s_n} p(z\mid s_0)\,P^{\pi_z}(S_n=s_n\mid s_0)\,
\log\frac{P^{\pi_z}(S_n=s_n\mid s_0)}{\sum_{\bar z} p(\bar z\mid s_0)\,P^{\pi_{\bar z}}(S_n=s_n\mid s_0)}.
\]
Define the induced mixture
\[
p(s_n)\;\coloneqq\;\sum_{\bar z} p(\bar z\mid s_0)\,P^{\pi_{\bar z}}(S_n=s_n\mid s_0),
\]
and form the Lagrangian (enforcing $\sum_z p(z\mid s_0)=1$):
\[
\mathcal{L}
\;=\;
I(Z;S_n\mid s_0)\;+\;\lambda\Big(\sum_z p(z\mid s_0)-1\Big).
\]

Fix a particular $z$ and compute $\frac{\partial \mathcal{L}}{\partial p(z\mid s_0)}$ using the product rule. For clarity, note that
\[
\log\frac{P^{\pi_{\hat z}}(S_n=s_n\mid s_0)}{p(s_n)}
=\log P^{\pi_{\hat z}}(S_n=s_n\mid s_0)-\log p(s_n),
\qquad
\frac{\partial p(s_n)}{\partial p(z\mid s_0)}=P^{\pi_z}(S_n=s_n\mid s_0).
\]
Therefore,
\begin{align*}
\frac{\partial \mathcal{L}}{\partial p(z\mid s_0)}
&=
\sum_{s_n}
P^{\pi_z}(S_n=s_n\mid s_0)\,
\log\frac{P^{\pi_z}(S_n=s_n\mid s_0)}{p(s_n)}
\\
&\quad+
\sum_{\hat z,s_n} p(\hat z\mid s_0)\,P^{\pi_{\hat z}}(S_n=s_n\mid s_0)\,
\frac{\partial}{\partial p(z\mid s_0)}
\Bigl(\log\frac{P^{\pi_{\hat z}}(S_n=s_n\mid s_0)}{p(s_n)}\Bigr)
\;+\;\lambda.
\end{align*}
The first line comes from differentiating the coefficient $p(z\mid s_0)$ in the $z$-summand. For the second line, only the term $-\log p(s_n)$ depends on $p(z\mid s_0)$, hence
\[
\frac{\partial}{\partial p(z\mid s_0)}
\Bigl(\log\frac{P^{\pi_{\hat z}}(S_n=s_n\mid s_0)}{p(s_n)}\Bigr)
=
-\frac{1}{p(s_n)}\frac{\partial p(s_n)}{\partial p(z\mid s_0)}
=
-\frac{P^{\pi_z}(S_n=s_n\mid s_0)}{p(s_n)}.
\]
Substituting this back,
\begin{align*}
\frac{\partial \mathcal{L}}{\partial p(z\mid s_0)}
&=
\sum_{s_n}
P^{\pi_z}(S_n=s_n\mid s_0)\,
\log\frac{P^{\pi_z}(S_n=s_n\mid s_0)}{p(s_n)}
-
\sum_{\hat z,s_n} p(\hat z\mid s_0)\,P^{\pi_{\hat z}}(S_n=s_n\mid s_0)\,
\frac{P^{\pi_z}(S_n=s_n\mid s_0)}{p(s_n)}
\;+\;\lambda
\\
&=
\sum_{s_n}
P^{\pi_z}(S_n=s_n\mid s_0)\,
\log\frac{P^{\pi_z}(S_n=s_n\mid s_0)}{p(s_n)}
-
\sum_{s_n}\frac{P^{\pi_z}(S_n=s_n\mid s_0)}{p(s_n)}
\Big(\sum_{\hat z} p(\hat z\mid s_0)\,P^{\pi_{\hat z}}(S_n=s_n\mid s_0)\Big)
\;+\;\lambda
\\
&=
\sum_{s_n}
P^{\pi_z}(S_n=s_n\mid s_0)\,
\log\frac{P^{\pi_z}(S_n=s_n\mid s_0)}{p(s_n)}
-
\sum_{s_n} P^{\pi_z}(S_n=s_n\mid s_0)
\;+\;\lambda
\\
&=
D_{\mathrm{KL}}\!\big(P^{\pi_z}(S_n=\cdot\mid s_0)\,\|\,p(\cdot)\big)
-1+\lambda.
\end{align*}

At an optimum $p^*(z\mid s_0)$, the KKT conditions imply that there exists a constant $C$ such that
\[
D_{\mathrm{KL}}\!\big(P^{\pi_z}(S_n=\cdot\mid s_0)\,\|\,p(\cdot)\big)=C
\quad \text{for all $z$ with } p^*(z\mid s_0)>0,
\]
and for any $z$ with $p^*(z\mid s_0)=0$ we must have
\[
D_{\mathrm{KL}}\!\big(P^{\pi_z}(S_n=\cdot\mid s_0)\,\|\,p(\cdot)\big)\le C,
\]
otherwise increasing $p(z\mid s_0)$ slightly would improve $\mathcal{L}$.

Now suppose for contradiction that there exists $s^+$ and a policy $\pi_{z_1}$ such that
$P^{\pi_{z_1}}(S_n=s^+\mid s_0)>0$, but
\[
\mathbb{E}_{z\sim p^*(z\mid s_0)}\,P^{\pi_z}(S_n=s^+\mid s_0)=0.
\]
The latter implies $p(s^+)=0$. Then
\begin{align*}
&D_{\mathrm{KL}}\!\big(P^{\pi_{z_1}}(S_n=\cdot\mid s_0)\,\|\,p(\cdot)\big)
\\\;&\ge\;
\sum_{s\neq s^+} P^{\pi_{z_1}}(S_n=s^+\mid s_0)\,\log\frac{P^{\pi_{z_1}}(S_n=s^+\mid s_0)}{p(s^+)} + \cancelto{\infty}{P^{\pi_{z_1}}(S_n=s^+\mid s_0)\,\log\frac{P^{\pi_{z_1}}(S_n=s^+\mid s_0)}{p(s^+)}}
\\&=\infty,
\end{align*}
so $D_{\mathrm{KL}}(\cdot\|\cdot)>C$, contradicting the necessary condition that any excluded $z$ must satisfy
$D_{\mathrm{KL}}(\cdot\|\cdot)\le C$. Hence $p(s^+)>0$ and at least one policy that reaches $s^+$ is selected by the optimal prior.
\end{proof}

\section{Proofs of Forward and Backward State Aliasing in MISL}\label{app:forward-backward-aliasing-proof}

\paragraph{Proof of Theorem~\ref{theorem:forward-alias-similar-future-control}}
\begin{proof}

Consider the states $s_0$ and $\hat s_0$ described in the theorem statement. We show that any policy that
takes \emph{non-corresponding} actions in $s_0$ and $\hat s_0$ induces an occupancy measure that is a convex
combination of two \emph{interface-consistent} (symmetric) policies. Hence, by
Lemma~\ref{lemma:vertices-empowerment}, such asymmetric policies are never chosen at the empowerment optimum.

Pick two actions $a_1,a_2\in\mathcal A(s_0)$ and their corresponding actions
$\hat a_1,\hat a_2\in\mathcal A(\hat s_0)$. By Lemma~\ref{lemma:deterministic-empowerment}, it suffices to
consider deterministic skill-policies. Define the symmetric skill-policy for latent $z_1$:
\[
\pi(a\mid s_0,z_1)=\delta(a_1),\qquad
\pi(a\mid \hat s_0,z_1)=\delta(\hat a_1),
\]
and the symmetric skill-policy for latent $z_2$:
\[
\pi(a\mid s_0,z_2)=\delta(a_2),\qquad
\pi(a\mid \hat s_0,z_2)=\delta(\hat a_2),
\]
where $\delta(\cdot)$ denotes the Dirac delta. $\pi_{z_1}$ and $\pi_{z_2}$ take the same actions everywhere else.

\paragraph{Truncated discounted occupancy.}
For a fixed horizon $T$, define the truncated discounted occupancy under $\pi_{z_1}$ as
\begin{align*}
G^{T}_{\gamma,1}(s)
&\coloneqq
(1-\gamma)\sum_{t=1}^{T-1}\gamma^{t-1}\,P^{\pi_{z_1}}(S_t=s \mid S_0=s_0),
\end{align*}
and analogously under $\pi_{z_2}$,
\begin{align*}
G^{T}_{\gamma,2}(s)
&\coloneqq
(1-\gamma)\sum_{t=1}^{T-1}\gamma^{t-1}\,P^{\pi_{z_2}}(S_t=s \mid S_0=s_0).
\end{align*}
By the theorem assumption (action-interface relabeling), starting from $s_0$ or from $\hat s_0$ yields the
same distribution over future trajectories under the corresponding symmetric actions. Therefore the same
definitions hold if we replace the initial state $S_0=s_0$ by $S_0=\hat s_0$, i.e.,
\[
G^{T}_{\gamma,i}(s)
=
(1-\gamma)\sum_{t=1}^{T-1}\gamma^{t-1}\,P^{\pi_{z_i}}(S_t=s \mid S_0=\hat s_0),
\qquad i\in\{1,2\}.
\]

\paragraph{Hitting-time notation.}
Define the (random) first hitting times under $\pi_{z_1}$:
\[
T_1 \coloneqq \inf\{t\ge 1:\ S_t=s_0\},\qquad
\hat T_1 \coloneqq \inf\{t\ge 1:\ S_t=\hat s_0\},
\]
where the process starts from $S_0\in\{s_0,\hat s_0\}$. By the theorem assumption and the fact that $\pi_{z_1}$
takes corresponding actions in $s_0$ and $\hat s_0$, the distribution of future trajectories (and thus of $T_1,\hat T_1$
and truncated occupancies) is the same whether we start from $s_0$ or from $\hat s_0$.

\paragraph{Occupancy recursion for symmetric skills.}
Expanding the discounted occupancy of $\pi_{z_1}$ starting from $s_0$ by conditioning on which of $s_0$ or
$\hat s_0$ is hit first yields
\begin{align*}
d^{\pi_{z_1}}_\gamma(s\mid s_0)
&=
\E\!\Big[
\mathbf 1\{T_1\le \hat T_1\}\big(G_{\gamma,1}^{T_1}(s)+\gamma^{T_1-1}(1-\gamma)\delta(s_0)+\gamma^{T_1}d^{\pi_{z_1}}_\gamma(s\mid s_0)\big)
\Big] \\
&\quad+
\E\!\Big[
\mathbf 1\{T_1> \hat T_1\}\big(G_{\gamma,1}^{\hat T_1}(s)+\gamma^{\hat T_1-1}(1-\gamma)\delta(\hat s_0)+\gamma^{\hat T_1}d^{\pi_{z_1}}_\gamma(s\mid \hat s_0)\big)
\Big].
\end{align*}
Since $\pi_{z_1}$ is symmetric, we have
$d^{\pi_{z_1}}_\gamma(s\mid s_0)=d^{\pi_{z_1}}_\gamma(s\mid \hat s_0)$; denote this common quantity by $d^1(s)$.
Then the recursion simplifies to
\begin{align}\label{eq:app-forward-symmetric1}
d^1(s)
&=
\E\!\Big[
\mathbf 1\{T_1\le \hat T_1\}\big(G_{\gamma,1}^{T_1}(s)+\gamma^{T_1-1}(1-\gamma)\delta(s_0)+\gamma^{T_1}d^{1}(s)\big)
\Big] \nonumber\\
&\quad+
\E\!\Big[
\mathbf 1\{T_1> \hat T_1\}\big(G_{\gamma,1}^{\hat T_1}(s)+\gamma^{\hat T_1-1}(1-\gamma)\delta(\hat s_0)+\gamma^{\hat T_1}d^{1}(s)\big)
\Big].
\end{align}

\begin{align*}
    d^1(s) = \frac{\E[G^{\min\{T_1, \hat T_1\}}_{\gamma , 1}(s)] + \E_{T_1 , \hat T_1}[\mathbf 1\{T_1\leq \hat T_1\} \gamma^{T_1 -1}(1-\gamma)\delta( s_0) + \mathbf 1\{T_1> \hat T_1\} \gamma^{\hat T_1 -1}(1-\gamma)\delta(\hat s_0)]}{\big(1-\E[\gamma^{\min\{T_1, \hat T_1\}}]\big)}
\end{align*}

Similarly, we denote $d^2(s) \coloneqq d^{\pi_{z_2}}_\gamma(s\mid s_0) = d^{\pi_{z_2}}_\gamma(s\mid \hat{s}_0)$ and simplify the occupancy measure:
\begin{align*}
    d^2(s) = \frac{\E[G^{\min\{T_2, \hat T_2\}}_{\gamma , 2}(s)] + \E_{T_2 , \hat T_2}[\mathbf 1\{T_2\leq \hat T_2\} \gamma^{T_2 -1}(1-\gamma)\delta( s_0) + \mathbf 1\{T_2> \hat T_2\} \gamma^{\hat T_2 -1}(1-\gamma)\delta(\hat s_0)]}{\big(1-\E[\gamma^{\min\{T_2, \hat T_2\}}]\big)}
\end{align*}

Let's denote:
\[\alpha(s) \coloneqq \E[G^{\min\{T_1, \hat T_1\}}_{\gamma , 1}(s)] + \E_{T_1 , \hat T_1}[\mathbf 1\{T_1\leq \hat T_1\} \gamma^{T_1 -1}(1-\gamma)\delta( s_0) + \mathbf 1\{T_1> \hat T_1\} \gamma^{\hat T_1 -1}(1-\gamma)\delta(\hat s_0)]\]

\[\beta(s) \coloneqq \E[G^{\min\{T_2, \hat T_2\}}_{\gamma , 2}(s)] + \E_{T_2 , \hat T_2}[\mathbf 1\{T_2\leq \hat T_2\} \gamma^{T_2 -1}(1-\gamma)\delta( s_0) + \mathbf 1\{T_2> \hat T_2\} \gamma^{\hat T_2 -1}(1-\gamma)(1-\gamma)\delta(\hat s_0)]\]

\[\lambda_1 \coloneqq \E[\gamma^{\min\{T_1, \hat T_1\}}]\]

\[\lambda_2 \coloneqq \E[\gamma^{\min\{T_2, \hat T_2\}}]\]

With the new notation $d^1(s)=\frac{\alpha(s)}{\lambda_1}$ and $d^1(s)=\frac{\beta(s)}{\lambda_2}$.

Now define the \emph{asymmetric} skill-policy $\pi_{z_{1,2}}$ by
\[
\pi(a\mid s_0,z_{1,2})=\delta(a_1),\qquad
\pi(a\mid \hat s_0,z_{1,2})=\delta(\hat a_2),
\]
and let $\pi_{z_{1,2}}$ agree with $\pi_{z_1}$ and $\pi_{z_2}$ on every other state (i.e., the three policies
differ only in their choice of action at $s_0$ and $\hat s_0$).

Starting from $s_0$, the trajectory repeatedly \emph{returns} to the set $C\coloneqq\{s_0,\hat s_0\}$.
Between two consecutive returns to $C$, the policy behaves like $\pi_{z_1}$ if the return state is $s_0$
(because it will take $a_1$ next), and like $\pi_{z_2}$ if the return state is $\hat s_0$
(because it will take $\hat a_2$ next). Therefore the full discounted occupancy can be written as a
discounted sum of ``excursions'' of two types, which yields a weighted mixture of the two symmetric occupancies.

Define rounds indexed by $i=1,2,\dots$, where round $i$ starts when the process enters $C$ and ends at the
next entrance to $C$. Let
\[
\tau_i \coloneqq \text{the length (hitting time) of round $i$ until the next visit to $C$},
\qquad
m_i \coloneqq
\begin{cases}
1,& \text{if round $i$ ends in $s_0$},\\
2,& \text{if round $i$ ends in $\hat s_0$}.
\end{cases}
\]
We generate $(\tau_i,m_i)$ sequentially as follows.
\begin{itemize}
    \item \textbf{Round $1$.} Starting from $s_0$, sample the first hitting times $(T_1,\hat T_1)$ of
    $(s_0,\hat s_0)$ under the action $a_1$.  
    If $T_1\le \hat T_1$ set $(m_1,\tau_1)=(1,T_1)$; otherwise set $(m_1,\tau_1)=(2,\hat T_1)$.
    \item \textbf{Round $i\ge 2$.} If $m_{i-1}=1$, then the process is at $s_0$ and the policy will take $a_1$,
    so sample $(T_1,\hat T_1)$ again and set $(m_i,\tau_i)$ by the same rule as above.
    If $m_{i-1}=2$, then the process is at $\hat s_0$ and the policy will take $\hat a_2$, so sample the
    corresponding hitting times $(T_2,\hat T_2)$ under $\pi_{z_2}$ and set
    $(m_i,\tau_i)=(1,T_2)$ if $T_2\le \hat T_2$ and $(m_i,\tau_i)=(2,\hat T_2)$ otherwise.
\end{itemize}
This defines a random ``tree'' of modes and durations $(m_i,\tau_i)_{i\ge 1}$, which keeps track of which
element of $C$ is reached first at each return.

Starting from $s_0$,
\begin{align*}
d^{\pi_{z_{1,2}}}_\gamma(s\mid s_0)
&= \alpha(s)
+ \E\!\left[
\sum_{i=1}^{\infty}
\gamma^{\sum_{j=1}^{i}\tau_j}
\Big(
\mathbf 1\{m_i=1\}\alpha(s)+\mathbf 1\{m_i=2\}\beta(s)
\Big)
\right].
\end{align*}
In particular, the occupancy under $\pi_{z_{1,2}}$ is a weighted sum of the two excursion types, and hence
can be written as a convex combination of the symmetric occupancies $d^1(s)$ and $d^2(s)$ (with weights given
by the discounted frequency of $m_i=1$ versus $m_i=2$). By Lemma~\ref{lemma:vertices-empowerment}, empowerment (and hence MISL) can be optimized using
only vertex occupancies, so such asymmetric skills receive zero probability under the optimal prior
$p^*(z\mid s_0)$. The same conditioning argument applies when starting from $\hat s_0$, showing that
$d^{\pi_{z_{1,2}}}_\gamma(\cdot\mid \hat s_0)$ is also a convex combination of $d^1$ and $d^2$. Therefore,
only symmetric (interface-consistent) policies can have nonzero probability under MISL.

therefore we can only look at the set of symmetric policies for thoe polciies $\mathcal{Z}_{\text{symm}}$.
\[
d^{\pi_z}_\gamma(s^+\mid s_0) = d^{\pi_z}_\gamma(s^+\mid \hat{s}_0) \qquad \forall z \in \mathcal{Z}_{\text{symm}}.
\]
Therefore the empwoemrnt objective $\max_{p(z\mid s_0)\in \Delta^{|\mathcal{Z}_{\text{symm}}|-1}} \E_{z} \sum_{s^+} d^{\pi_z}_\gamma(s^+\mid s_0) \log\frac{d^{\pi_z}_\gamma(s^+\mid s_0)}{\sum_{\bar{z}}p(\bar{z}\mid s_0) d^{\pi_{\bar{z}}}_\gamma(s^+\mid s_0)}$
 is teh same across $s_0$ , $\hat{s}_0$ and has the same optimal skill prior
 Consequently,
\[
p^*(z\mid s_0) = p^*(z\mid s_0').
\]

note that if the set of skills were not all symmetric skills then the occupancy of asymmetric skills starting from $s_0$ and $\hat{s}_0$ were different and we coun't donbtain this result anymore.

since both the prior and the likelihoods are symmetric, Using Bayes’ rule, the corresponding optimal posteriors is also identical, for any future state
$s^+$,
\begin{align*}
q^*(z\mid s_0,s^+)
&=
\frac{p^*(z\mid s_0)\,d^{\pi_z}_\gamma(s^+\mid s_0)}
{\sum_{\bar z} p^*(\bar z\mid s_0)\,d^{\pi_{\bar{z}}}_\gamma(s^+\mid s_0)} \\
&=
\frac{p^*(z\mid \hat{s}_0)\,d^{\pi_z}_\gamma(s^+\mid \hat{s}_0)}
{\sum_{\bar z} p^*(\bar z\mid s_0)\,d^{\pi_{\bar{z}}}_\gamma(s^+\mid \hat{s}_0)} \\&=
q^*(z\mid \hat{s}_0,s^+),
\end{align*}

Finally, by the minimal representation assumption and
Remark~\ref{remark:posterior-invariance-implies-representational-invariance}, equality of
the Bayes-optimal posteriors implies equality of the forward representations. Hence,
\[
\phi(s_0) = \phi(s_0').
\]
\end{proof}

\paragraph{Proof of Proposition~\ref{proposition:backward-aliasing}}
\begin{proof}

Fix an initial state $s_0$ and consider the Bayes-optimal posterior over skills
given a future state $s^+$:
\begin{align*}
q^*(z \mid s_0, s^+)
&=
\frac{p^*(z \mid s_0)\,d^{z}_\gamma(s^+ \mid s_0)}
{\sum_{\bar z} p^*(\bar z \mid s_0)\,d^{{\bar{z}}}_\gamma(s^+ \mid s_0)} .
\end{align*}

If both $s^+$ and $\hat{s}^+$ are unreachable from $s_0$, then
$d^{z}_\gamma(s^+ \mid s_0)= d^{z}_\gamma(\hat{s}^+ \mid s_0) = 0$ for all $z$, and the posterior is
undefined on the same support for both states. 
However if both $s^+$ and $\hat{s}^+$ are reachable from $s_0$ and satisfy the condition
that for this $s_0$ there exists a constant $\alpha(s_0) > 0$ such that
\[
d^{\pi_z}(s^+ \mid  s_0) = \alpha(s_0)\, d^{\pi_z}(\hat s^+ \mid  s_0)
\quad \forall\, z .
\]
Substituting into the posterior, the factor $\alpha(s_0)$ cancels from both the
numerator and denominator, yielding
\[
q^*(z \mid s_0, s^+) = q^*(z \mid s_0, \hat s^+) \quad \forall\, z .
\]
Thus, the two terminal states induce identical posteriors over skills for every
initial state $s_0$. By the minimal representation assumption, this implies
\[
\psi(s^+) = \psi(\hat s^+) .
\]
\end{proof}

\section{MISL Backward Representations Have Interpretable $\ell1$ Distance}\label{app:backward-norm1}
We define the reachability vector as:
\begin{align}\label{eq:reachability}
\rho_{s_0}(s^+) 
\;\coloneqq\;
\Big[ \tfrac{d^\pi_\gamma(s^+\mid s_0)}{\sum_{\bar\pi} d^{\bar\pi}_\gamma(s^+\mid s_0)} \Big]_{\pi}.
\end{align}
We show that closeness of the normalized reachability vectors is equivalent to closeness of the backward representations, and vice versa.


\begin{proposition}\label{proposition:lipschitz-backward-bound}
Assume the posterior $q(\cdot\mid\phi(s_0),\psi(s^+))$ denoted for simplicity by $q_{s_0}(s^+)$ is bi-Lipschitz in $\psi$, i.e., there exist
$0<\ell\le L$ such that for all $s_0,s^+,\hat{s}^+$,
\[
\begin{aligned}
\ell\|\psi(s^+)-\psi(\hat{s}^+)\|_1
\le
\|q_{s_0}(s^+) - q_{s_0}(\hat{s}^+)\|_1 
\le
L\|\psi(s^+)-\psi(\hat{s}^+)\|_1 ,
\end{aligned}
\]
Then there exist constants $c,C>0$ such that
\[
\begin{aligned}\label{eq:lipstich-result}
\frac{c}{L}\min_{s_0}\|\rho_{s_0}(s^+)-\rho_{s_0}(\hat{s}^+)\|_1
\le
\|\psi(s^+)-\psi(\hat{s}^+)\|_1 
\le
\frac{C}{\ell}\max_{s_0}\|\rho_{s_0}(s^+)-\rho_{s_0}(\hat{s}^+)\|_1 .
\end{aligned}
\]

\end{proposition}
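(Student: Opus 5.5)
The idea is to chain the bi-Lipschitz assumption with a two-sided comparison between posterior vectors and reachability vectors, holding $s_0$ fixed. By the Bayes-optimal form (Eq.~\ref{eq:bayes-discriminator}),
\[
q^*(z\mid s_0,s^+)=\frac{p^*(z\mid s_0)\,\rho_{s_0}(s^+)_z}{\sum_{\bar z}p^*(\bar z\mid s_0)\,\rho_{s_0}(s^+)_{\bar z}},
\]
because the normalizer $\sum_{\bar\pi}d^{\bar\pi}_\gamma(s^+\mid s_0)$ in $\rho$ cancels. So $q_{s_0}(s^+)=F_{s_0}(\rho_{s_0}(s^+))$, where $F_{s_0}(r)=(p^*\odot r)/\langle p^*,r\rangle$ is a reweight-and-renormalize map on the simplex, restricted to the skills in the support of $p^*(\cdot\mid s_0)$. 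The main lemma is that $F_{s_0}$ is bi-Lipschitz in $\ell_1$ on the relevant domain. That is, there are constants $c_{s_0},C_{s_0}>0$ with
\[
c_{s_0}\|r-\hat r\|_1\le\|F_{s_0}(r)-F_{s_0}(\hat r)\|_1\le C_{s_0}\|r-\hat r\|_1 .
\]

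\textbf{Upper Lipschitz bound.} Differentiate $F$. On the region where $\langle p^*,r\rangle\ge\kappa>0$, the Jacobian entries are bounded by $O(\max_z p^*_z/\kappa)$. This gives $C_{s_0}$.

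\textbf{Lower bound.} The inverse map is $F^{-1}(q)=(q/p^*)/\langle \1,q/p^*\rangle$, which is again a reweighting and renormalization, now with weights $1/p^*$. Applying the same derivative estimate to $F^{-1}$ gives $c_{s_0}^{-1}=O\big(\max_z p^*_z/\min_{z\in\mathrm{supp}}p^*_z\big)$.

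\textbf{Assembling.} Set $c=\min_{s_0}c_{s_0}$ and $C=\max_{s_0}C_{s_0}$; these are positive constants when the state and skill sets are finite. For the lower bound on $\|\psi(s^+)-\psi(\hat s^+)\|_1$, pick any $s_0$ and use the right half of the bi-Lipschitz assumption:
\[
L\|\psi(s^+)-\psi(\hat s^+)\|_1\ge\|q_{s_0}(s^+)-q_{s_0}(\hat s^+)\|_1\ge c\,\|\rho_{s_0}(s^+)-\rho_{s_0}(\hat s^+)\|_1\ge c\min_{s_0}\|\rho_{s_0}(s^+)-\rho_{s_0}(\hat s^+)\|_1 .
\]
For the upper bound, use the left half:
\[
\ell\|\psi(s^+)-\psi(\hat s^+)\|_1\le\|q_{s_0}(s^+)-q_{s_0}(\hat s^+)\|_1\le C\|\rho_{s_0}(s^+)-\rho_{s_0}(\hat s^+)\|_1\le C\max_{s_0}\|\rho_{s_0}(s^+)-\rho_{s_0}(\hat s^+)\|_1 .
\]
Dividing by $L$ and $\ell$ respectively gives the statement.

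\textbf{Main obstacle: degeneracies.} Two issues must be handled before the bi-Lipschitz lemma applies.
\begin{enumerate}
\item \emph{Skills with $p^*(z\mid s_0)=0$.} The vector $\rho$ is indexed by all policies, but $q$ sees only the skills in the support of $p^*$. The lower bound then fails for directions of $\rho$ that only involve unselected policies. I would restrict $\rho$ to the optimal skill set, and invoke Lemma~\ref{lemma:vertices-empowerment} and Lemma~\ref{lemma:coverage} to argue that the selected vertex policies are the only relevant ones.
\item \emph{Vanishing denominators.} $\langle p^*,r\rangle$ can approach zero when $s^+$ is unreachable from $s_0$ under the selected skills. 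Here I would exclude such $s_0$, as in case (i) of Proposition~\ref{proposition:backward-aliasing}. For the remaining $s_0$, Lemma~\ref{lemma:coverage} gives strictly positive support mass, and finiteness then yields a uniform $\kappa>0$.
\end{enumerate}
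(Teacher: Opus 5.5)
Your proposal is correct and follows essentially the paper's route. You write the Bayes-optimal posterior as a prior-reweighted renormalization of $\rho_{s_0}$, prove a two-sided $\ell_1$ Lipschitz bound for that map at fixed $s_0$, then chain with the bi-Lipschitz assumption and take the min/max over $s_0$; the paper does the middle step by direct triangle-inequality algebra in Lemma~\ref{lemma:prior-posterior-stuff}, while you use Jacobian bounds and the inverse reweighting, which is the same identity $k_i=(c\cdot k)\,q_1(i)/c_i$ the paper's lower bound relies on. Your handling of zero-prior skills, unreachable states, and constants that are uniform in $s_0$ makes explicit assumptions the paper's proof uses silently: $c_{\min}>0$, well-defined $\rho_{s_0}$, and finiteness.
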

\textbf{Intuition.} This proposition establishes a bidirectional correspondence between reachability and backward representations. In particular, states that are reachable by similar sets of policies—i.e., that have similar reachability vectors—induce similar backward representations, and conversely. As a consequence, backward representations provide a principled notion of goal similarity. In goal-conditioned reinforcement learning, this implies that if a policy is trained to reach a target state $s^+$, then the same policy can be expected to reach another state $\hat{s}^+$ whenever $s^+$ and $\hat{s}^+$ have similar backward representations $\psi(\cdot)$, enabling generalization across the goal space.

In order to prove Proposition~\ref{proposition:lipschitz-backward-bound}, we first prove an auxiliary Lemma.

\begin{lemma}\label{lemma:prior-posterior-stuff}
Let $c , k , b\in\Delta^{n-1}$ with $c_{\min}:=\min_i c_i$ and $c_{\max}:=\max_i c_i$. Define
\[
q_1(i) := \frac{c_i k_i}{c\cdot k},\qquad q_2(i) := \frac{c_i b_i}{c\cdot b}.
\]
Let
\[
m:=\min\{c\cdot k,\;c\cdot b\},\qquad M:=\max\{c\cdot k,\;c\cdot b\}.
\]
Then the following two-sided bounds hold:
\[
\frac{1}{ \big(\frac{M}{c_{\text{min}}} + \frac{M^2}{c^2_{\text{min}}} \big)}\,\|k-b\|_1
\;\le\;
\|q_1-q_2\|_1
\;\le\;
\frac{2c_{\max}}{m}\,\|k-b\|_1.
\]

\end{lemma}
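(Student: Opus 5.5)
The plan is a direct algebraic argument on the explicit formulas for $q_1$ and $q_2$, with one inequality in each direction. Throughout, write $S_k:=c\cdot k$ and $S_b:=c\cdot b$, so that $m\le S_k,S_b\le M$.

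\textbf{Upper bound.} I would start from the decomposition
\[
q_1(i)-q_2(i)=\frac{c_i(k_i-b_i)}{S_k}+c_ib_i\Big(\frac{1}{S_k}-\frac{1}{S_b}\Big).
\]
Summing absolute values over $i$, the first term contributes at most $c_{\max}\|k-b\|_1/S_k$. For the second term I would use two facts:
\[
\sum_i c_ib_i=S_b,
\qquad
\Big|\frac{1}{S_k}-\frac{1}{S_b}\Big|=\frac{|c\cdot(k-b)|}{S_kS_b}.
\]
With these, the second term contributes
\[
\frac{|c\cdot(k-b)|}{S_k}\le\frac{c_{\max}\|k-b\|_1}{S_k}.
\]
Adding the two contributions and using $S_k\ge m$ gives the claimed constant $2c_{\max}/m$.

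\textbf{Lower bound.} The idea is to invert the map $k\mapsto q_1$ using the normalization of $k$. Since $k_i=S_k\,q_1(i)/c_i$ and $\sum_ik_i=1$, we get
\[
S_k=\frac{1}{R_1},\qquad R_1:=\sum_i\frac{q_1(i)}{c_i},
\]
and analogously $S_b=1/R_2$ with $R_2:=\sum_i q_2(i)/c_i$. Then I would decompose
\[
k_i-b_i=\frac{S_k\,(q_1(i)-q_2(i))}{c_i}+\frac{q_2(i)\,(S_k-S_b)}{c_i}.
\]
The first piece sums to at most $(M/c_{\min})\|q_1-q_2\|_1$.

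For the second piece, the sum equals $R_2|S_k-S_b|=|S_k-S_b|/S_b$. I would then write $|S_k-S_b|=S_kS_b|R_1-R_2|$, together with the bound $|R_1-R_2|\le\|q_1-q_2\|_1/c_{\min}$. These yield a bound of $(M/c_{\min})\|q_1-q_2\|_1$ for the second piece as well.

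Hence $\|k-b\|_1\le (2M/c_{\min})\|q_1-q_2\|_1$. Since $S_k\ge c_{\min}\sum_ik_i=c_{\min}$, we have $M\ge c_{\min}$, so $M/c_{\min}\le M^2/c_{\min}^2$. This gives $2M/c_{\min}\le M/c_{\min}+M^2/c_{\min}^2$, which is the stated lower bound after rearranging.

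\textbf{Main obstacle.} The only delicate step is controlling the difference of the normalizers $S_k-S_b$ purely in terms of $q_1-q_2$ in the lower bound. Expressing it through the reciprocal sums $R_1,R_2$ is what makes this work. Everything else is a triangle inequality combined with the bounds $c_{\min}\le c_i\le c_{\max}$.
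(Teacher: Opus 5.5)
Your proof is correct and follows the paper's argument. For the upper bound you use the same add-and-subtract decomposition, and for the lower bound the same inversion $k_i=(c\cdot k)\,q_1(i)/c_i$, with the normalizer gap controlled through $1/(c\cdot k)-1/(c\cdot b)=\sum_i (q_1(i)-q_2(i))/c_i$. The one difference is that you evaluate $\sum_i q_2(i)/c_i = 1/(c\cdot b)$ exactly, where the paper only bounds it by $1/c_{\min}$. This gives you the sharper constant $2M/c_{\min}$ before you relax it to the stated one using $M\ge c_{\min}$; note that both arguments tacitly require $c_{\min}>0$.
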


\begin{proof}

\textbf{Upper bound}

\begin{align*}
    \|q_1-q_2\|_1 &= \sum_i |\frac{c_ik_i}{c.k}- \frac{c_ib_i}{c.b}| \\&=
     \sum_i |\frac{c_ik_i}{c.k}- \frac{c_ib_i}{c.b} + \frac{c_ib_i}{c.k} - \frac{c_ib_i}{c.k}| \\&\leq
     \sum_i |\frac{c_ik_i}{c.k} - \frac{c_ib_i}{c.k}| + |\frac{c_ib_i}{c.k}- \frac{c_ib_i}{c.b}  | \\&\leq \frac{c_{\text{max}}}{m} \|k-b\|_1+ |\frac{c.b}{c.k} - 1|\\&= \frac{c_{\text{max}}}{m} \|k-b\|_1+ |\frac{c.(b-k)}{c.k} | \\&= \frac{c_{\text{max}}}{m} \|k-b\|_1 + \frac{|\sum_i c_i (b_i-k_i)|}{c.k} \\&\leq \frac{c_{\text{max}}}{m} \|k-b\|_1 + \frac{\sum_i |c_i(b_i-k_i)|}{c.k}
     \\&\leq \frac{c_{\text{max}}}{m} \|k-b\|_1 + \frac{c_{\text{max}}\|k-b\|_1}{c.k}
     \\&\leq \frac{2c_{\text{max}}}{m} \|k-b\|_1 
\end{align*}

\textbf{Lower bound}

\begin{align*}
    \|k-b\|_1 &= \sum_i |\frac{c.k\,\, q_1(i)}{c_i}- \frac{c.b\,\, q_2(i)}{c_i}| \\&=
    \sum_i |\frac{c.k\,\, q_1(i)}{c_i}- \frac{c.b\,\, q_2(i)}{c_i}+\frac{c.k\,\, q_2(i)}{c_i} - \frac{c.k\,\, q_2(i)}{c_i}| \\&\leq
    \sum_i |\frac{c.k\,\, q_1(i)}{c_i} - \frac{c.k\,\, q_2(i)}{c_i} | + |\frac{c.k\,\, q_2(i)}{c_i}- \frac{c.b\,\, q_2(i)}{c_i}| \\&\leq
    \frac{M}{c_{\text{min}}} \|q_1-q_2\|_1 + \sum_i |\frac{q_2(i)\big(c.k - c.b\big)}{c_i}|
    \\&\leq
    \frac{M}{c_{\text{min}}} \|q_1-q_2\|_1 + \frac{1}{c_{\text{min}}}\sum_i |q_2(i)\big(c.k - c.b\big)|
    \\&\leq
    \frac{M}{c_{\text{min}}} \|q_1-q_2\|_1 + \frac{1}{c_{\text{min}}} \cancelto{1}{\sum_i q_2(i) }\times |c.k - c.b|
    \\&=
    \frac{M}{c_{\text{min}}} \|q_1-q_2\|_1 + \frac{1}{c_{\text{min}}} (c.k)(c.b) \;|\frac{1}{c.k} - \frac{1}{c.b}|
    \\&=
    \frac{M}{c_{\text{min}}} \|q_1-q_2\|_1 + \frac{1}{c_{\text{min}}} (c.k)(c.b) \;|\sum_i \frac{q_1(i)}{c_i} - \sum_i \frac{q_2(i)}{c_i}|\tag{1}
    \\&\leq
    \frac{M}{c_{\text{min}}} \|q_1-q_2\|_1 + \frac{1}{c_{\text{min}}} (c.k)(c.b) \;\sum_i |\frac{q_1(i)}{c_i} - \frac{q_2(i)}{c_i}|
    \\&\leq
    \frac{M}{c_{\text{min}}} \|q_1-q_2\|_1 + \frac{1}{c^2_{\text{min}}} (c.k)(c.b) \;\|q_1-q_2\|_1
    \\&\leq
    \big(\frac{M}{c_{\text{min}}} + \frac{M^2}{c^2_{\text{min}}} \big)\|q_1-q_2\|_1  \implies  \|q_1-q_2\|_1  \geq \frac{1}{ \big(\frac{M}{c_{\text{min}}} + \frac{M^2}{c^2_{\text{min}}} \big)}\|k-b\|_1 
\end{align*}

Note that in (1) we used the fact that since all vectors are normalized:
\[\sum_i \frac{q_1(i)}{c_i} - \sum_i \frac{q_2(i)}{c_i} = \sum_i \frac{k_i}{c.k} - \sum_i \frac{b_i}{c.b} = \frac{1}{c.k} - \frac{1}{c.b}\]

Moreover note that $M , m$ both are expected value of $c$ therefore $ c_{\text{min}}\leq m \leq M \geq c_{\text{max}}$ and $\frac{2c_{\text{max}}}{m} \geq 2$ and $\frac{1}{ \big(\frac{M}{c_{\text{min}}} + \frac{M^2}{c^2_{\text{min}}} \big)} \leq \frac12$
\end{proof}

\paragraph{Proof of proposition~\ref{proposition:lipschitz-backward-bound}}
\begin{proof}
Lemma~\ref{lemma:prior-posterior-stuff} shows that for any fixed initial state $s_0$, there exist constants $c, C > 0$ such that
\[
c \, \lVert \rho_{s_0}(s^+) - \rho_{s_0}(\hat s^+) \rVert_1
\;\le\;
\lVert q_{s_0}(s^+) - q_{s_0}(\hat s^+) \rVert
\;\le\;
C \, \lVert \rho_{s_0}(s^+) - \rho_{s_0}(\hat s^+) \rVert_1 .
\]
By the definition of a bi-Lipschitz posterior, taking the minimum over $s_0$ in the lower bound and the maximum over $s_0$ in the upper bound immediately yields the result of Proposition~\ref{proposition:lipschitz-backward-bound}.
\end{proof}

\section{Control-Irrelevant Features Invariance results}\label{app:subsec:invariant-results}
\paragraph{Proof of Theorem~\ref{theorem:policy-empowerment-e-invariance}}
\begin{proof}\label{proof:main-theorem}

We begin by simplifying the empowerment objective under the state factorization $s=(x,e)$ (Definition~\ref{def:control-features-categories}). For any policy—whether or not it conditions on $e$—the future noise component satisfies $E^+ \perp Z \mid e_0$. Consequently,
\[
I(E^+; Z \mid s_0=(x_0,e_0)) = 0.
\]
This follows directly from the graphical model in Figure~\ref{fig:gm1}. Although both the action selection and the skill choice $z$ may depend on $e_0$, the future noise variable $E^+$ is generated solely from $e_0$. Thus, conditioned on $e_0$, $E^+$ and $Z$ are independent.

\begin{figure}[ht]
    \centering
    \begin{subfigure}[t]{0.3\linewidth}
        \centering
        \includegraphics[width=\linewidth]{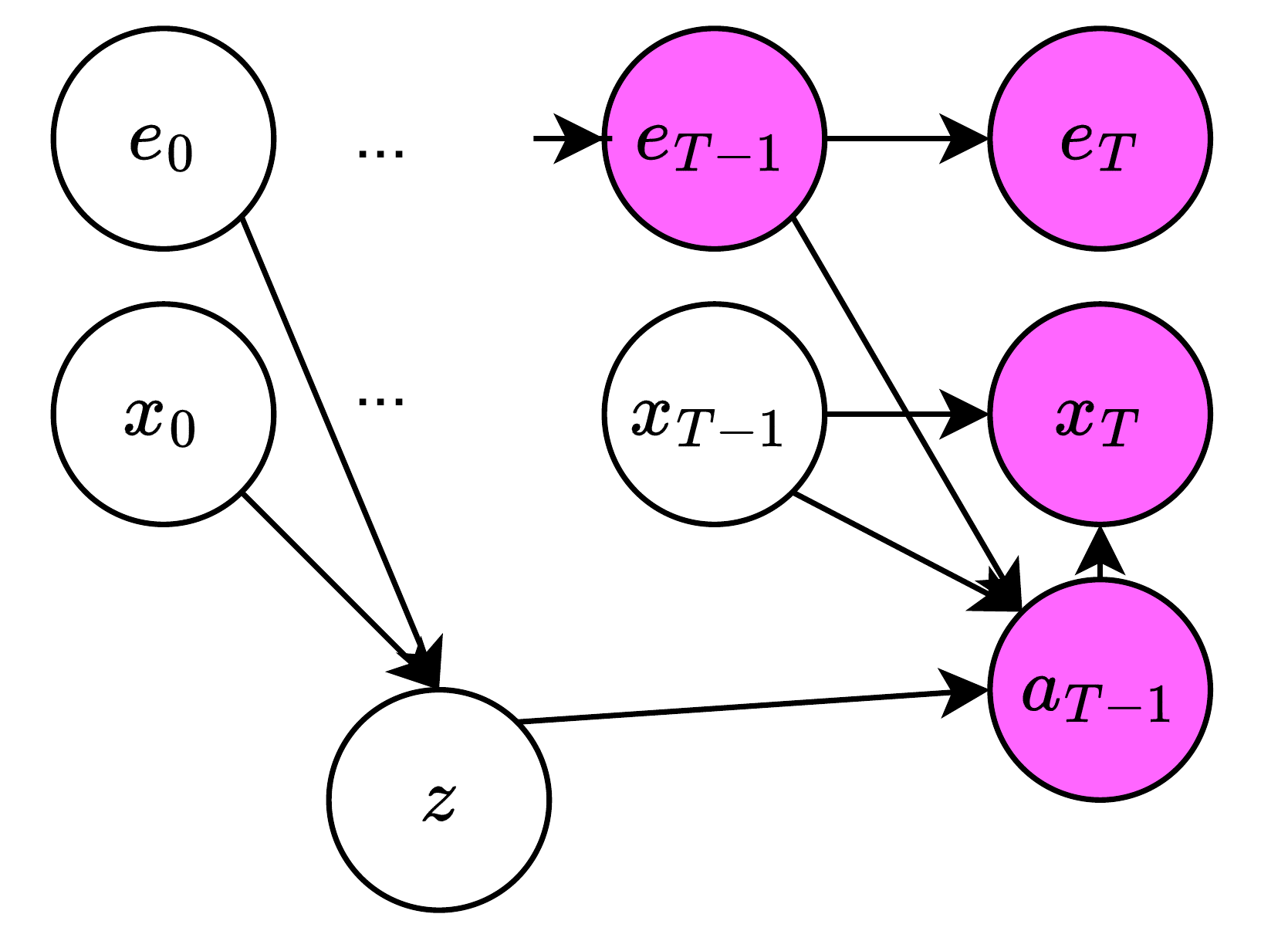}
        \caption{General case, If the policy uses $e$ to make decisions, $X_T \not\perp E_T \mid s_0, z$.}
        \label{fig:gm1}
    \end{subfigure}
     \hspace{3em}
    \begin{subfigure}[t]{0.3\linewidth}
        \centering
        \includegraphics[width=\linewidth]{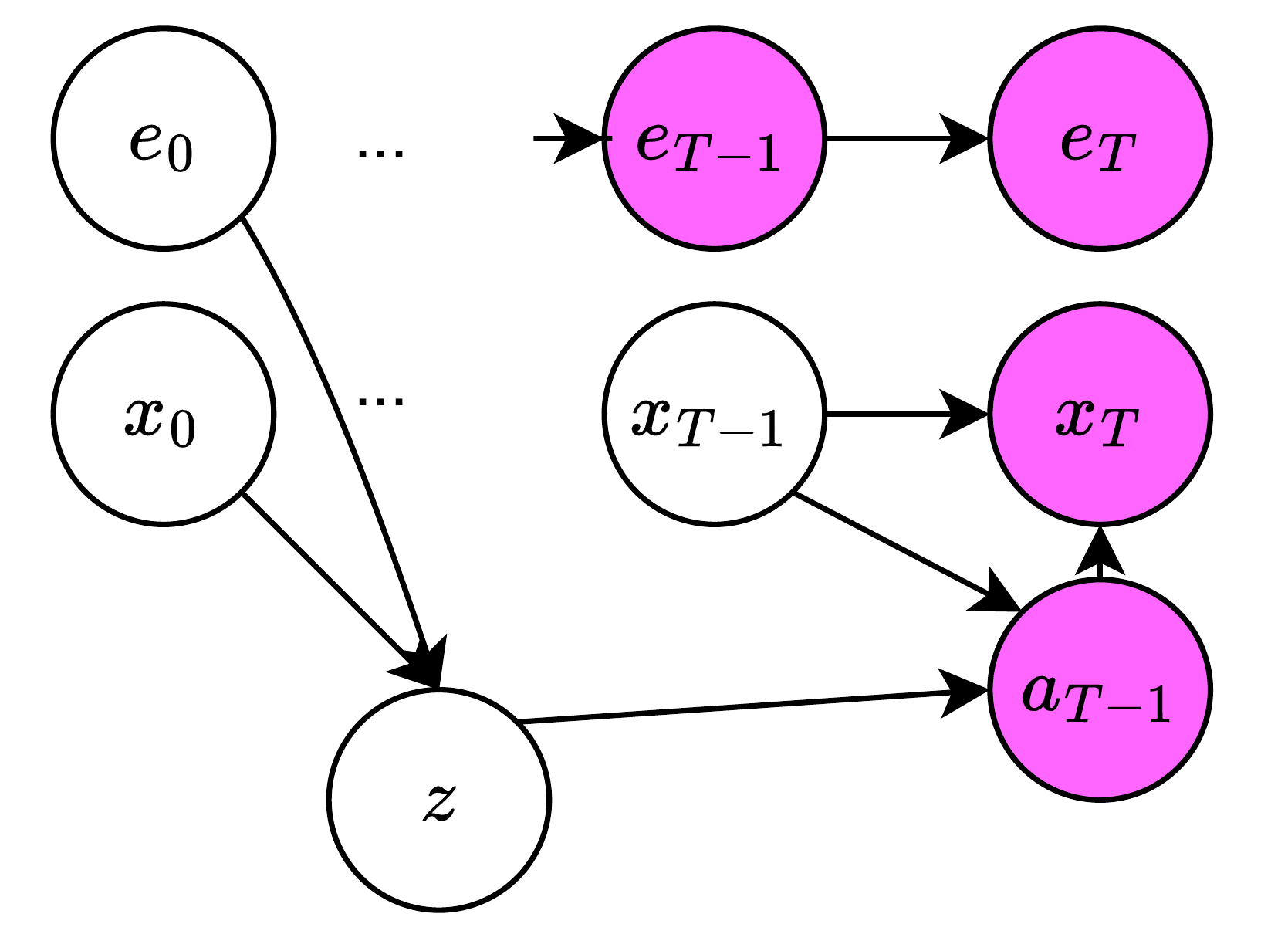}
        \caption{When the policy is $e$-invariant, $X_T \perp E_T \mid s_0, z$.}
        \label{fig:gm2}
    \end{subfigure}
    \caption{Graphical models illustrating the effect of policy invariance on conditional independence. $T$ is sampled from $\text{Geom}(1-\gamma)$}
    \label{fig:gm-comparison}
\end{figure}
Therefore, we can simplify the mutual information using the chain rule:
\begin{align}\label{eq:app-main-proof-conditional-simplification}
I(S^+;Z\mid s_0) &= I(X^+,E^+;Z \mid x_0,e_0)\notag
\\&= \cancelto{0}{I(E^+;Z \mid x_0,e_0)}
 + I(X^+;Z \mid E^+,x_0,e_0) 
\end{align}

In general, from writing both sides of the chain rule we have:
\begin{align}\label{eq:app-main-proof-other-side-of-the-ineq}
I(X^+,E^+;Z \mid x_0,e_0)
&= \cancelto{0}{I(E^+;Z \mid x_0,e_0)}
 + I(X^+;Z \mid E^+,x_0,e_0) \notag \\&= I(X^+;Z \mid x_0,e_0)
 + I(E^+;Z \mid X^+,x_0,e_0)\notag \\&\overset{\text{MI} \geq 0}{\implies} I(X^+;Z \mid x_0,e_0) \leq I(X^+;Z \mid E^+,x_0,e_0)
\end{align}
However, we show that since $e$ doesn't change the dynamics of $x$, conditioning on $E^+$ doesn't increase the MI. Or in simple words, for any skill distribution that supports $e$-variant policies, there is another skill distribution that only supports $e$-invariant policies that achieves the same mutual information.

Before proceeding with the proof, we define some notation:
Recall that each skill indexes a Markovian policy. Let $\mathcal{Z}$ denote the set of all skills. We define
$\mathcal{Z}_{\mathrm{inv}}$ as the subset of skills corresponding to $e$-invariant policies (possibly nonstationary in $x$), and
$\mathcal{Z}_{\mathrm{stat}}$ as the subset corresponding to policies that are both $e$-invariant and stationary in $x$. By construction,
\[
\mathcal{Z}_{\mathrm{stat}} \subset \mathcal{Z}_{\mathrm{inv}} \subset \mathcal{Z}.
\]

\begin{enumerate}
    \item \textbf{Step 1} We first show that for any distribution over skills
    $p(z\mid s_0)\in\Delta^{|\mathcal{Z}|-1}$ that assigns nonzero probability to
    policies that take actions based on $e$, we can construct a new distribution
    $p^{\mathrm{inv}}(z\mid s_0)\in\Delta^{|\mathcal{Z}_{\mathrm{inv}}|-1}$ whose support
    consists only of $e$-invariant policies (which may be nonstationary in $x$),
    without decreasing the mutual information. Formally,
    \begin{align}\label{eq:app-main-proof-inv-is-enough}
    I_{p}(X^+;Z \mid E^+,s_0)
    \;\le\;
    I_{p^{\mathrm{inv}}}(X^+;Z \mid x_0),
\; \exists \,p^{\mathrm{inv}}(z\mid s_0) \in \Delta^{|\mathcal{Z}_{\mathrm{inv}}|-1}    
    \end{align}
    where $I_{p}(\cdot)$ and $I_{p^{\mathrm{inv}}}(\cdot)$ denote mutual information
    computed under the joint distributions induced by $p(z\mid s_0)$ and
    $p^{\mathrm{inv}}(z\mid x_0)$, respectively.

    \item \textbf{Step 2} In the second step of the proof, we show that restricting the skills further to non-stationary policies in $x$ does not further increase the mutual information. In particular,
    \begin{align}\label{eq:app-main-proof-stat-is-enough}
    I_{p^{\mathrm{inv}}}(X^+; Z \mid x_0) \;\le\;  \max_{p(z\mid s_0) \in \Delta^{|\mathcal{Z}_\mathrm{inv}|-1}}I(X^+; Z \mid x_0)
    =
    \max_{p(z\mid s_0)\in \Delta^{|\mathcal{Z}_{\mathrm{stat}}|-1}}
    I(X^+; Z \mid x_0),
        \end{align}
    where $\mathcal{Z}_{\mathrm{stat}}$ denotes the set of skills corresponding to $e$-invariant, Markovian, and stationary policies in $x$. This completes the proof because with step 1 together, it shows the empowerment objective in the presence of a control-irrelevant feature $e$, can be reduced to the empowerment objective only over the control-relevant part $x$ and only policies that take decisions based on $x$. To put all the results together:
    \begin{align}\label{eq:app-main-proof-first-side1}
        \max_{p(z|s_0)}I(S^+;Z\mid s_0) &\overset{\ref{eq:app-main-proof-conditional-simplification}}{=} \max_{p(z|s_0) \in \Delta^{|\mathcal{Z}|-1}}I(X^+;Z\mid E^+,  s_0) \notag\\&\overset{\ref{eq:app-main-proof-stat-is-enough}}{\leq }  \max_{p(z|s_0) \in \Delta^{|\mathcal{Z}_{\mathrm{stat}}|-1}}I(X^+;Z\mid  x_0)
    \end{align}
    On the other hand, the other side of the inequality holds as well, i.e.,
    \begin{align}\label{eq:app-main-proof-other-side2}
    \max_{p(z|s_0)}I(S^+;Z\mid s_0) &\overset{\ref{eq:app-main-proof-conditional-simplification}}{=} \max_{p(z|s_0) \in \Delta^{|\mathcal{Z}|-1}}I(X^+;Z\mid E^+,  s_0) \notag
        \\&\overset{\ref{eq:app-main-proof-other-side-of-the-ineq}}{\geq}  \max_{p(z|s_0) \in \Delta^{|\mathcal{Z}_{\mathrm{stat}}|-1}}I(X^+;Z\mid  s_0)
    \end{align}
    Therefore from Equation~\ref{eq:app-main-proof-first-side1} and Equation~\ref{eq:app-main-proof-other-side2}: \[\max_{p(z|s_0)}I(S^+;Z\mid s_0) =\max_{p(z|s_0) \in \Delta^{|\mathcal{Z}_{\mathrm{stat}}|-1}}I(X^+;Z\mid  x_0) \]
Moreover, we note that from Equation~\ref{eq:app-main-proof-conditional-simplification} and graphical model~\ref{fig:gm2}, we note that when the policies are $e$-invariant, $X^+ \perp E^+ \mid Z , X_0$ and in that case we have:
\[I(S^+;Z\mid s_0) = I(X^+;Z\mid x_0)\] therefore the empowerment optimization problem over the full state boils down to the optimization problem over the $x$ feature and we know that since $e$-invariant policy achieve the channel capacity according to the conclusion of step 1 , 2 above, we can instead solve the empowerment optimization problem over $x$.
\end{enumerate}
Now, we provide the proof for steps 1 and 2:

\paragraph{Step 1.}

We expand the conditional MI:
\begin{align}\label{eq:proof-main-theorem-conditional-MI}
    I(X^+;Z \mid E^+,x_0,e_0)  &= \E_{e^+ \sim d_\gamma(e^+\mid e_0 )} I(X^+;Z\mid e^+ , x_0 , e_0)\notag\\&=\E_{e^+ \sim d_\gamma(e^+\mid e_0 )}  \E_{z\sim p(z|s_0) , x^+ \sim d^{\pi_z}_\gamma(x^+\mid e^+,s_0)} [\log \frac{d_{\gamma}^{\pi_z}(x^+\mid e^+ , s_0 )}{\sum_{\bar{z}} p(\bar{z}\mid s_0)d_{\gamma}^{\pi_{\bar{z}}}(x^+\mid e^+ , s_0 )}]
\end{align}

Note that when the policy depends on $e$, $X^+$
is generally \emph{not} conditionally independent of the future noise $E^+$:
\[
X^+ \not\!\perp\!\!\!\perp E^+ \mid Z, s_0 .
\]
This follows from the causal structure in Figure~\ref{fig:gm1}: if actions
depend on $e$, then there exists an active path
\[
E_{T-1} \;\rightarrow\; A_{T-1} \;\rightarrow\; X_T,
\qquad T \sim \mathrm{Geom}(1-\gamma),
\]
so conditioning on $(Z,s_0)$ alone does not block the dependence between
$E^+$ and $X^+$.

As a result, the conditional distribution $d_{\gamma}^{\pi_z}(x^+ \mid s_0, z, e^+)$
generally depends on $e^+$. Intuitively, each value of $e^+$ can be viewed
as defining a different communication channel between the skill $Z$ and
the outcome $X^+$. If the policies (the transmission method) take actions based on the noise $e$, then these channels have different sender to receiver mutual information; however, we show that since $e$ doesn't change the dynamics of $x$, its value does not change the capacity of each channel index, meaning that no matter $e^+$ is, the capacity of the $Z, X^+ $ is the same and there are $e$-invariant policies (transmission methods that ignore the noise) that achieve this capacity. We show this fact by constructing such policies.

Fix an arbitrary skill distribution $p(z \mid s_0)$,
which may assign positive probability to policies that depend on $e$.
Consider the mutual information conditioned on a particular realization
of $e^+$,
\[
I(X^+; Z \mid e^+, s_0),
\]
since the policy in general could depend on $e$, the conditional MI differs based on $e$. Define the noise realization that maximizes this quantity as
\begin{equation}\label{eq:optimal-e}
e^* \;\coloneqq\; \arg\max_{e^+}
I(X^+; Z \mid e^+, s_0).
\end{equation}

We now construct a new distribution over skills that removes dependence on the noise
process and always achieves $\max_{e^+}
I(X^+; Z \mid e^+, s_0)$ no matter what the noise value is.

For each skill $z$ with policy $\pi(\cdot \mid (x,e),z)$, we construct a new skill $z'$
that indexes a policy $\pi'(\cdot \mid x,t,z')$ ($e$-invariant but non stationary) defined by
\begin{equation}\label{eq:constructed-policy}
\pi'(a \mid x,t,z')
\;\coloneqq\;
\mathbb{E}_{e_t \sim p(e_t \mid e^+=e^*, e_0)}\!\left[\pi(a \mid x,e_t,z)\right].
\end{equation}
Intuitively, $\pi'$ behaves as if the future noise realization were $e^*$, averaging
over the noise trajectory consistent with that endpoint. Let $p^{\mathrm{inv}}(z'\mid s_0)$ assign
the same probability mass to $z'$ as $p(z\mid s_0)$ assigns to $z$.

Conditioned on $e^+=e^*$, the discounted occupancy under $\pi_z$ is
\[
d^{\pi_z}_\gamma(x^+ \mid e^*, s_0)
=
(1-\gamma)\sum_{t=0}^{\infty}\gamma^t
\Pr_{\pi_z}(X_t=x^+ \mid e^*, s_0).
\]
Using the law of total probability and the fact that $e$ does not affect the dynamics
of $x$, we can write
\begin{equation}\label{eq:factorized-dynamics-next-state}
\Pr_{\pi_z}(X_t=x^+ \mid e^+, s_0)
=
\sum_{x,a}
\Pr(X_{t-1}=x \mid e^*, s_0)\,
\mathbb{E}_{e_{t-1}\mid e^+=e^*, e_0 }\!\left[\pi(a\mid x,e_{t-1},z)\right]
p(x^+\mid x,a).
\end{equation}
By construction of $\pi'$, the expectation over $e_{t-1}$ equals
$\pi'(a\mid x,t-1,z')$, yielding
\[
d^{\pi_z}_\gamma(x^+ \mid e^*, s_0)
=
d^{\pi'_{z'}}_\gamma(x^+ \mid s_0) = d^{\pi'_{z'}}_\gamma(x^+ \mid x_0).
\]

Since the induced distribution over $X^+$ is identical,
\[
I_{p}(X^+;Z \mid e^*, s_0)
=
I_{p^{\mathrm{inv}}}(X^+;Z \mid x_0).
\]
Moreover, because $e^*$ maximizes the conditional mutual information,
\[
I_{p^{\mathrm{inv}}}(X^+;Z \mid x_0)
\;\ge\;
I_{p}(X^+;Z \mid E^+, s_0).
\]
This concludes the proof of step 1.

\paragraph{Step 2.}
Note that the new policy we constructed in step 1 is $e$-invariant but non-stationary, however in order to show the second claim of the theorem statement, i.e, the fact that $e$-invariant stationary Markovian policies are enough for optimizing the MI, we need to show that the same MI could be achieved by restricting to the stationary policies.

We directly use the result of  Theorem~3.1 of \citet{Altman1999CMDP}, any discounted occupancy measure
that is achievable by a (possibly non-Markovian or non-stationary) policy can also be achieved by a
Markovian and stationary policy.
Therefore, the set of distributions $\{d_\gamma^{\pi_z}(x^+\mid s_0)\}$ induced by
$e$-invariant Markovian policies is identical whether we restrict to stationary or
nonstationary policies. This completes Step~2 of the proof.

\paragraph{Proof of Corollary~\ref{corollary:invariance-repr}}
\begin{proof}
Theorem~\ref{theorem:policy-empowerment-e-invariance} shows that, under the state factorization in
Definition~\ref{def:control-features-categories}, the empowerment objective is equivalent to
\[
\max_{p(z\mid s_0)} I(X^+;Z\mid x_0),
\]
i.e., only the control-relevant component $x$ matters for channel capacity.

By the policy minimality assumption (Definition~\ref{def:minimal-policy}), among all optimal solutions we select
an optimal set of skills whose induced policies are $e$-invariant. In particular, for any fixed $x_0$ there exists
an optimal skill prior supported only on $e$-invariant policies, and the policy minimality assumption~\ref{def:minimal-policy} selects such a prior
independently of the nuisance realization $e_0$. Hence the optimal prior satisfies
\begin{equation}\label{eq:app-equal-prior}
    p^*(z\mid (x_0,e_0)) = p^*(z\mid (x_0,\hat e_0))
    \qquad \forall\, x_0,\ e_0,\ \hat e_0 .
\end{equation}

Now fix any skill $z$ in the support of $p^*$ and any successor state $s^+=(x^+,e^+)$. Since $\pi_z$ is
$e$-invariant, its discounted occupancy factorizes as
\[
d^{\pi_z}(s^+\mid s_0)=d^{\pi_z}(x^+\mid x_0)\, d_\gamma(e^+\mid e_0),
\]
where the $e$-term does not depend on $z$. Plugging this factorization into the optimal posterior gives
\begin{align}\label{eq:app-posterior-assess}
q(z\mid (x_0,e_0), s^+)
&= \frac{p^*(z\mid (x_0,e_0))\, d^{\pi_z}(s^+\mid s_0)}
{\sum_{\bar z} p^*(\bar z\mid (x_0,e_0))\, d^{\pi_{\bar z}}(s^+\mid s_0)} \\
&= \frac{p^*(z\mid x_0)\, d^{\pi_z}(x^+\mid x_0)\, \cancel{d_\gamma(e^+\mid e_0)}}
{\sum_{\bar z} p^*(\bar z\mid x_0)\, d^{\pi_{\bar z}}(x^+\mid x_0)\, \cancel{d_\gamma(e^+\mid e_0)}} \\
&\overset{\eqref{eq:app-equal-prior}}{=} q(z\mid (x_0,\hat e_0), s^+)
\qquad \forall\, x_0,e_0,\hat e_0,z,s^+ .
\end{align}
Therefore the posterior (and thus the forward representation induced by $q$) is invariant to the nuisance
component $e_0$.

Note that since we assume the noise process is irreducible $d_\gamma(e^+\mid e_0)$ is always positive.

The same argument applies when conditioning on the terminal state: for any $s^+=(x^+,e^+)$ and $\hat s^+=(x^+,\hat e^+)$,
the factorization and cancellation above imply
\[
q(z\mid s_0,(x^+,e^+)) = q(z\mid s_0,(x^+,\hat e^+))
\qquad \forall\, s_0,x^+,e^+,\hat e^+,
\]
which yields $e$-invariant backward representations as well.

Importantly, note that this invariance relies on $e$-invariance of the selected optimal policies; if optimal
policies were allowed to depend on $e$, the occupancy would generally not factorize and the cancellation in
\eqref{eq:app-posterior-assess} would fail.
\end{proof}

\section{Control-Relevant Features Identifiability Results}\label{app:subsec:identifiability}

\paragraph{Proof of Proposition~\ref{proposition:counter-example-identifiability}}
In general, in stochastic MDPs backward representations may cluster distinct states, since the same action can lead to different next states due to stochasticity (see Figure~\ref{fig:backward-mdp}). Unfortunately, forward representations do not necessarily resolve this ambiguity. In particular, there exist cases where states that share the same backward representation also induce identical forward representations, despite having different forward dynamics.

Consider two states $H$ and $K$ in Figure~\ref{fig:backward-mdp} with identical reachability, i.e.,
\[
\forall s,a,\qquad p(H\mid s,a)=p(K\mid s,a),
\]
so that $H$ and $K$ have the same backward representation. We now construct their forward dynamics so that their forward representations are also identical.

Suppose that in both $H$ and $K$ there are three available actions $a_0,a_1,a_2$, and that actions $a_1$ and $a_2$ induce identical dynamics from both states:
\[
p(J\mid H,a_1)=p(J\mid K,a_1)=1,
\qquad
p(L\mid H,a_2)=p(L\mid K,a_2)=1.
\]
Action $a_0$, however, induces different transitions:
\[
p(J\mid H,a_0)=0.2,\;\; p(L\mid H,a_0)=0.8,
\qquad
p(J\mid K,a_0)=0.5,\;\; p(L\mid K,a_0)=0.5.
\]

Under the MISL objective, the optimal action distributions in both states place all mass on the extreme actions:
\[
p^*(a_1\mid H)=p^*(a_2\mid H)=\tfrac12,
\qquad
p^*(a_1\mid K)=p^*(a_2\mid K)=\tfrac12,
\]
with $p^*(a_0\mid H)=p^*(a_0\mid K)=0$, since $a_0$ induces a less extreme next-state distribution.

Because only $a_1$ and $a_2$ are selected and their dynamics are identical from $H$ and $K$, the resulting posteriors coincide:
\[
q(a\mid H,J)=q(a\mid K,J),
\qquad
q(a\mid H,L)=q(a\mid K,L).
\]
Thus, $H$ and $K$ have identical forward representations as well, even though their forward dynamics differ for action $a_0$. This shows that forward and backward representations together may still fail to capture all control-relevant features in stochastic MDPs.

\paragraph{Proof of Proposition~\ref{proposition:deterministic-identifiability}}
Since Theorem~\ref{theorem:policy-empowerment-e-invariance} and Corollary~\ref{corollary:invariance-repr} show that control-irrelevant components are neither captured by the representation nor used by the policy, we ignore them in the analysis.

Consider two distinct states $x^+$ and $\hat x^+$ that are both reachable from some state $x_0$ in $n$ steps via policies $z$ and $\hat z$, respectively. By Lemma~\ref{lemma:coverage}, at least one policy that reaches $x^+$ and one policy that reaches $\hat x^+$ with nonzero probability must receive positive mass under the optimal empowerment distribution. Hence there are skills $z , \hat z$ such that,
\[
p^*(z\mid x_0)>0
\quad\text{and}\quad
p^*(\hat z\mid x_0)>0.
\]

Because the dynamics of $x$ are deterministic, we have
\[
P^{\pi_z}(X_n=x^+\mid x_0)=1
\quad\text{and}\quad
P^{\pi_{\hat z}}(X_n=\hat x^+\mid x_0)=1.
\]
Therefore, the backward posterior satisfies
\[
q(z\mid x_0,x^+)
=\frac{p^*(z\mid x_0)}{\mathbb{E}_{\bar z}\!\left[P^{\pi_{\bar z}}(X_n=x^+\mid x_0)\right]}
>0,
\]
while
\[
q(z\mid x_0,\hat x^+)
=\frac{p^*(z\mid x_0)\cdot 0}{\mathbb{E}_{\bar z}\!\left[P^{\pi_{\bar z}}(X_n=\hat x^+\mid x_0)\right]}
=0.
\]
Note that the denominator is well defined since $\hat x^+$ is reachable under some policy. An analogous argument shows
\[
q(\hat z\mid x_0,x^+)=0
\quad\text{and}\quad
q(\hat z\mid x_0,\hat x^+)>0.
\]

Thus, $x^+$ and $\hat x^+$ induce different posterior distributions over policies for the same initial state $x_0$, and therefore cannot share the same backward representation.
\paragraph{Discussion of Remark~\ref{remark:bisim-not-generaliable}}
We first show that any bisimulation defined with respect to a reward function that depends only on control-relevant features, i.e., $R((x,e)) = R(x)$, must ignore the control-irrelevant component $e$. Intuitively, if two states share the same control-relevant feature $x$ but differ only in $e$, then merging them does not change either the reward or the dynamics relevant for bisimulation. As a result, any bisimulation can be made coarser by grouping together all states with the same $x$, and the coarsest bisimulation necessarily clusters states that differ only in $e$; therefore, such a bisimulation is invariant to $e$.

\begin{lemma}
Let $\Pi=\{C_1,\dots,C_m\}$ be any bisimulation partition of $\mathcal S=\mathcal X\times\mathcal E$.
Define a relation on the blocks of $\Pi$ as follows:
\[
C \sim D 
\quad \Longleftrightarrow \quad
\exists x\in\mathcal X \text{ such that } (x,e)\in C \text{ and } (x,\bar e)\in D
\text{ for some } e,\bar e\in\mathcal U.
\]
and merge all blocks that are connected under $\sim$.
Denote the resulting partition by $\Pi'=\{C'_1,\dots,C'_k\}$, where $k\le m$.
Then:
\begin{enumerate}
\item $\Pi'$ is still a bisimulation.
\item By construction, if $(x,e)\in C'_j$, then $(x,\bar e)\in C'_j$ for all $\bar e\in\mathcal E$.
\end{enumerate}

\end{lemma}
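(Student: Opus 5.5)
The plan is to verify the bisimulation conditions for the merged partition $\Pi'$ directly. Item 2 should then follow from the construction. I take a bisimulation partition to mean a partition such that any two states in the same block have equal reward and, for every action, assign equal transition probability to every block. Since $R((x,e))=R(x)$ and the $x$-dynamics do not depend on $e$ (Definition~\ref{def:control-features-categories}), the main tool is a statement about two states that differ only in $e$. I want to show that their reward and their $x$-marginal of the next-state distribution coincide. From that I aim to conclude that these two states assign equal mass to every block of $\Pi'$.

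\textbf{Item 2 first.} Fix $(x,e)\in C'_j$ and any $\bar e$. The state $(x,\bar e)$ lies in some block $D$ of $\Pi$. The block $C\ni(x,e)$ satisfies $C\sim D$ by definition of $\sim$. Hence $C$ and $D$ lie in the same connected component, so $(x,\bar e)\in C'_j$. Thus every block of $\Pi'$ has the form $\mathcal X_j\times\mathcal E$ for some $\mathcal X_j\subseteq\mathcal X$. This is the structural fact that drives item 1.

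\textbf{Item 1, reward.} Consider two states $s,s'$ in the same merged block $C'$. Then there is a chain of blocks $C=D_0\sim D_1\sim\dots\sim D_r=D$ in $\Pi$ with $s\in C$ and $s'\in D$. Inside each $D_i$, rewards agree because $\Pi$ is a bisimulation. Across each link $D_i\sim D_{i+1}$ there are witnesses $(x,e)$ and $(x,\bar e)$ with equal reward, since $R$ depends only on $x$. Chaining these equalities gives $R(s)=R(s')$.

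\textbf{Item 1, transitions.} For transitions, it suffices by transitivity to handle the same two kinds of link.

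\emph{Pairs inside a block of $\Pi$.} Such states assign equal mass to every $\Pi$-block. Each $\Pi'$-block is a union of $\Pi$-blocks, so they also assign equal mass to every $\Pi'$-block.

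\emph{Pairs $(x,e)$, $(x,\bar e)$.} Every $\Pi'$-block has the form $\mathcal X_j\times\mathcal E$, so
\[
P\bigl(\mathcal X_j\times\mathcal E\mid (x,e),a\bigr)=\sum_{x'\in\mathcal X_j}p(x'\mid x,a).
\]
The right-hand side does not depend on $e$, because the $e$-factor integrates to one under the factorization in Equation~\ref{eq:factorization}. So the two states agree on every $\Pi'$-block.

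Chaining along the path then shows that all states in $C'$ agree on rewards and on every block transition probability. Hence $\Pi'$ is a bisimulation.

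\textbf{Main obstacle.} The delicate step is the second kind of link. Equality of transition mass holds only for sets that are cylinders in $e$, so the argument needs item 2 to be established before item 1. This forces the order of the proof. It also requires that $p(x'\mid x,a)$ not depend on $e$. That holds in the main factorization, and under the confounded variant of Appendix~\ref{app:theory-confounding-w} it still holds, since $w$ is part of $x$. A secondary point is finiteness: $\Pi$ has $m$ blocks, so the merging terminates with $k\le m$.
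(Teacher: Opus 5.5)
Your proof is correct and takes essentially the same route as the paper. Both use the $e$-cylinder structure of the merged blocks (item 2) to show that states sharing $x$ put mass $\sum_{x'\in\mathcal X_j}p(x'\mid x,a)$ on each merged block, and both use additivity over unions of original blocks for states already bisimilar under $\Pi$; your explicit chaining along $\sim$-paths, for rewards and for transitions, simply spells out the transitivity step that the paper leaves implicit.
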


\begin{proof}
We argue that merging blocks in this way does not violate either the reward or transition conditions of bisimulation.

\paragraph{Rewards.}
Because rewards depend only on $x$, all states of the form $(x,e)$ have the same reward for any action.
Within each original block of $\Pi$, rewards were already equal by the bisimulation property.
Therefore, when we merge blocks that contain states with the same $x$, the reward remains constant within each new block $C'_j$.

\paragraph{Transitions.}
The key observation is that every new block $C'_j$ is \emph{complete with respect to $e$}.
That is, if $(x,e)\in C'_j$, then $(x,\bar e)\in C'_j$ for all $\bar e\in\mathcal E$.
As a result, when we compute the probability of transitioning into $C'_j$, all uncontrollable components are summed out.

Fix a state $s=(x,e)$ and an action $a$.
Then the probability of transitioning into a new block $C'_j$ can be written as
\[
P_{\Pi'}(C'_j\mid s,a)
= \sum_{(x',e')\in C'_j} p(x',e'\mid x,e,a)
= \sum_{x'\in X_j} p(x'\mid x,a),
\]
where $X_j=\{x' : (x',e')\in C'_j \text{ for some } e'\}$.
The sum over $e'$ disappears because each block contains \emph{all} $e'$ for a given $x'$, and
\[
\sum_{e'} p(e'\mid e)=1.
\]

Crucially, this expression depends only on $x$ and not on $e$.
Therefore, any two states that share the same $x$ have identical transition probabilities into every block $C'_j$.
Now Take any two states $s,t$ that were in the same original block $C_r\in\Pi$.
Because $\Pi$ is a bisimulation, for every original block $C_i\in\Pi$,
\[
P_\Pi(C_i\mid s,a)=P_\Pi(C_i\mid t,a)\quad \forall a.
\]
Now consider any merged block $C'=\bigcup_{i\in I} C_i \in \Pi'$.
Then by additivity over disjoint blocks,
\[
P_{\Pi'}(C'\mid s,a)
=\sum_{s'\in C'} p(s'\mid s,a)
=\sum_{i\in I}\sum_{s'\in C_i} p(s'\mid s,a)
=\sum_{i\in I} P_\Pi(C_i\mid s,a).
\]
Applying the equality blockwise for $\Pi$,
\[
\sum_{i\in I} P_\Pi(C_i\mid s,a)=\sum_{i\in I} P_\Pi(C_i\mid t,a)
= P_{\Pi'}(C'\mid t,a).
\]
So $P_{\Pi'}(C'\mid s,a)=P_{\Pi'}(C'\mid t,a)$ for all merged blocks $C'\in\Pi'$ and all $a$.
Thus $\Pi'$ satisfies the transition condition and is a bisimulation.
Or in simpler words, consider $t$ that was originally bisimilar to $s$ under $\Pi$.
Because $\Pi$ was a bisimulation, $s$ and $t$ had the same transition probabilities into each original block.
Since each new block $C'_j$ is just a union of original blocks, the transition probability into $C'_j$ is simply the sum of the probabilities into those original blocks.
Thus, $s$ and $t$ still have identical transition probabilities into every block of $\Pi'$.

\end{proof}

\section{Theoretical Results for Coupled \(e\) and \(y\) Dynamics Through \(w\)}\label{app:theory-confounding-w}

Here we discuss how our theoretical results extend to the more general state factorization
\begin{align*}
p(s_{t+1}\mid s_t,a_t)
&= p(y_{t+1}\mid y_t,w_t,a_t)\notag \\
&\quad \times p(w_{t+1}\mid w_t)\,
         p(e_{t+1}\mid e_t, w_t).
\end{align*}
In this setting, we require the following additional assumption.

\begin{assumption}\label{assumption:conditonal-indep}
    We assume \(W_{\tau} \perp E_t, Y_t \mid W_t \) and $E_{\tau} \perp W_t \mid E_t $ for any $\tau < t$.
\end{assumption}

This assumption means both $E$ and $W$ are more informative about their own history than the other variables about them. While this assumption might sound restrictive but it makes sense to assume that a variable effects its own future more than the future of others therefore given its own future the past variables become independent of the other variables.

\begin{corollary}
    From assumption~\ref{assumption:conditonal-indep}, we can immediately infer $Y_t \perp E_t \mid W_t , A_{t-1}$ for any policy.
\end{corollary}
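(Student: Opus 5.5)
We prove $Y_t \perp E_t \mid W_t, A_{t-1}$ by d-separation along the unrolled trajectory graph. We use the factorization $p(y_{t+1}\mid y_t,w_t,a_t)\,p(w_{t+1}\mid w_t)\,p(e_{t+1}\mid e_t,w_t)$ together with an arbitrary (possibly $e$-dependent) policy $\pi(a_t\mid y_t,w_t,e_t,z)$. The plan is to write the joint of $(Y_t,E_t,W_t,A_{t-1})$ by marginalizing over the history $H_{t-1}=(Y_{t-1},W_{t-1},E_{t-1})$:
\begin{align*}
p(y_t,e_t,w_t,a_{t-1}) = \sum_{y_{t-1},w_{t-1},e_{t-1}} p(y_{t-1},w_{t-1},e_{t-1})\,\pi(a_{t-1}\mid y_{t-1},w_{t-1},e_{t-1})\\
\times\, p(y_t\mid y_{t-1},w_{t-1},a_{t-1})\,p(w_t\mid w_{t-1})\,p(e_t\mid e_{t-1},w_{t-1}).
\end{align*}
The goal is to show that, conditioned on $(w_t,a_{t-1})$, this expression factorizes into a term depending on $y_t$ and a term depending on $e_t$.

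Assumption~\ref{assumption:conditonal-indep} is used for this, applied with $\tau=t-1$. First, $W_{t-1}\perp (E_t,Y_t)\mid W_t$ lets us replace the dependence of $y_t$ and $e_t$ on $w_{t-1}$ by dependence on $w_t$ alone. Second, $E_{t-1}\perp W_t\mid E_t$ says that the past noise carries no information about $w$ beyond what $e_t$ already provides. Combining the two, the only path from $E_t$ to $Y_t$, namely
\[
E_t \leftarrow E_{t-1}\rightarrow A_{t-1}\rightarrow Y_t \quad\text{or}\quad E_t\leftarrow W_{t-1}\rightarrow Y_t,
\]
must pass either through $W$ (blocked by conditioning on $W_t$ via the first independence) or through the action. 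The latter path $E_{t-1}\to A_{t-1}\to Y_t$ is a chain through $A_{t-1}$, which is conditioned on, so it is blocked. The one remaining route, $E_{t-1}\to A_{t-1}\leftarrow Y_{t-1}\to Y_t$, opens a collider at $A_{t-1}$.

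That collider is the main obstacle: conditioning on $A_{t-1}$ activates it whenever the policy reads both $e$ and $y$. To handle it, we first try to use $E_{\tau}\perp (W_t,\cdot)\mid E_t$ to argue that $E_{t-1}$ is screened off by $E_t$. If that is insufficient, the fallback is to note that the statement genuinely needs the assumption read as $E_{t-1}\perp Y_t\mid E_t,W_t$, i.e.\ to strengthen the use of Assumption~\ref{assumption:conditonal-indep} by including $Y$ among the screened variables. With that reading, summing out $e_{t-1}$ against $p(e_{t-1}\mid e_t,w_t)$ leaves a $y_t$-factor depending only on $(w_t,a_{t-1})$, and the factorization follows. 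Since the policy enters only through $A_{t-1}$, which is conditioned on, the argument holds for any policy.
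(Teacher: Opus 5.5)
Your starting point matches the paper's one-line justification: the fork $E_t\leftarrow W_{t-1}\to Y_t$ is the connection that the first part of the Assumption, $W_{t-1}\perp(E_t,Y_t)\mid W_t$, is meant to cut. But the proposal does not prove the statement.

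For an $e$-dependent policy you correctly spot the collider path $E_t\leftarrow E_{t-1}\to A_{t-1}\leftarrow Y_{t-1}\to Y_t$, which conditioning on $A_{t-1}$ opens. You also overlook open chains through earlier actions, such as $E_t\leftarrow E_{t-1}\leftarrow E_{t-2}\to A_{t-2}\to Y_{t-1}\to Y_t$, which contain no conditioned node. So your closing claim that the policy ``enters only through $A_{t-1}$'' is false. Nothing in the Assumption touches these paths: its second part screens $E_\tau$ only from $W_t$, not from $Y_t$ or $A_{t-1}$. Your fallback replaces the hypothesis with a stronger one, which is not a proof of the stated result. Even under that reading, summing out $e_{t-1}$ against $p(e_{t-1}\mid e_t,w_t)$ is not legitimate once $a_{t-1}$ is conditioned on, because $a_{t-1}$ is itself informative about $e_{t-1}$.

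The obstacle you found is genuine: for $e$-dependent policies the conclusion does not follow from the Assumption at all. Take the following example.
\begin{itemize}
\item $w$ is trivial, so both parts of the Assumption hold vacuously.
\item $e_0=\ast$, which jumps to $0$ or $1$ with probability $\tfrac12$ each and is frozen afterwards.
\item $y_0=0$ and $y_{t+1}=y_t+a_t$.
\item The policy is the stationary $a_t=b_t\mathbf{1}[e_t=1]$, with $b_t$ independent fair coins.
\end{itemize}
Then $Y_3=(b_1+b_2)E_3$. The event $\{A_2=0\}$ has positive probability under both values of $E_3$. On it, $Y_3=b_1$ when $E_3=1$ but $Y_3=0$ when $E_3=0$, so $Y_3\not\perp E_3\mid W_3,A_2$. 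Here each $E_\tau$ is a function of $E_t$, so your strengthened condition $E_\tau\perp Y_t\mid E_t,W_t$ also holds, and the fallback fails as well.

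The missing idea is to restrict to $e$-invariant policies. This is the setting in which the paper explicitly invokes the corollary (``under the Assumption and if the policy is $e$-invariant''), and the one its ``only the fork at $W_{t-1}$'' reasoning tacitly presupposes. In that setting the argument goes through as follows.
\begin{itemize}
\item No $E$-node has an action child. So, conditionally on $s_0$ and the whole path $W_{0:t}$, the joint law splits into a factor in $(e_{1:t},w_{0:t})$ and a factor in $(y_{1:t},a_{0:t-1},w_{0:t})$. This gives $E_t\perp(Y_t,A_{t-1})\mid W_{0:t}$.
\item Read the first part of the Assumption for the whole past as $E_t\perp W_{0:t-1}\mid W_t$. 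Contraction then yields $E_t\perp(W_{0:t-1},Y_t,A_{t-1})\mid W_t$.
\item Weak union and decomposition give $Y_t\perp E_t\mid W_t,A_{t-1}$.
\end{itemize}
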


Note that in general \(Y_{t} \perp E_t \mid W_t , A_{t-1}\), since \(W_{t-1}\) is a common parent of both \(E_t\) and \(Y_t\) (Figure~\ref{fig:conditional-3}). However, if \(W_t\) contains all the information from the history of itself that is relevant to \(Y_t\), then conditioning on \(E_t\) provides no additional information about $Y_{t}$. Intuitively, the assumption requires that \(E_t\) does not reveal information about \(W_{t-1}\) beyond what is already contained in \(W_t\). For example, if the dynamics of \(W\) are deterministic and invertible, then \(W_{t-1}\) can be recovered from \(W_t\), and the assumption holds.

\begin{figure}[h]
    \centering
    \begin{subfigure}[b]{0.3\linewidth}
        \centering
        \includegraphics[width=0.7\linewidth]{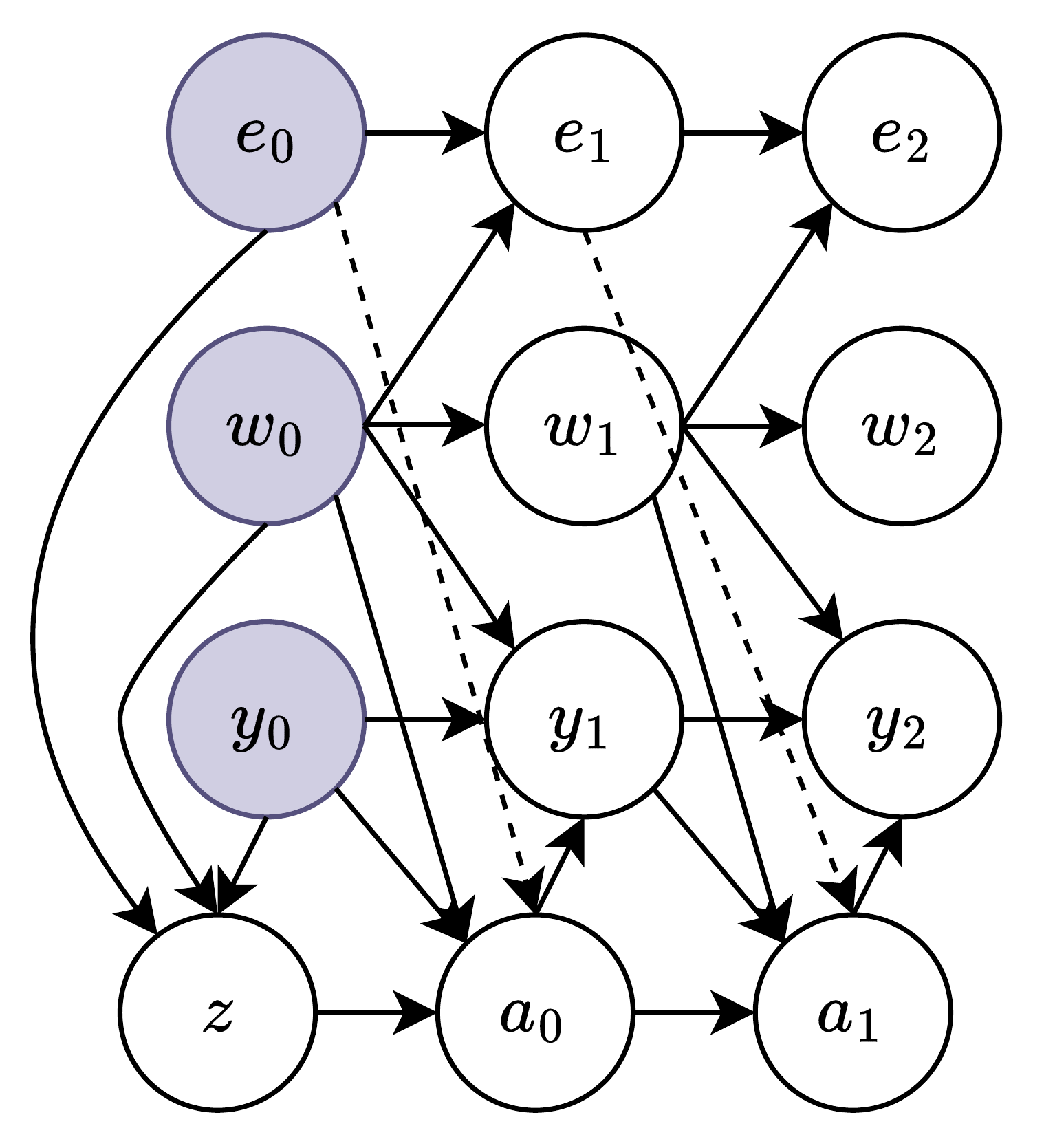}
        \caption{$Z \perp E^+ | s_0$.}
        \label{fig:bayesball-initialgievn}
        \vspace{2em}
    \end{subfigure}
    \begin{subfigure}[b]{0.3\linewidth}
        \centering
        \includegraphics[width=0.7\linewidth]{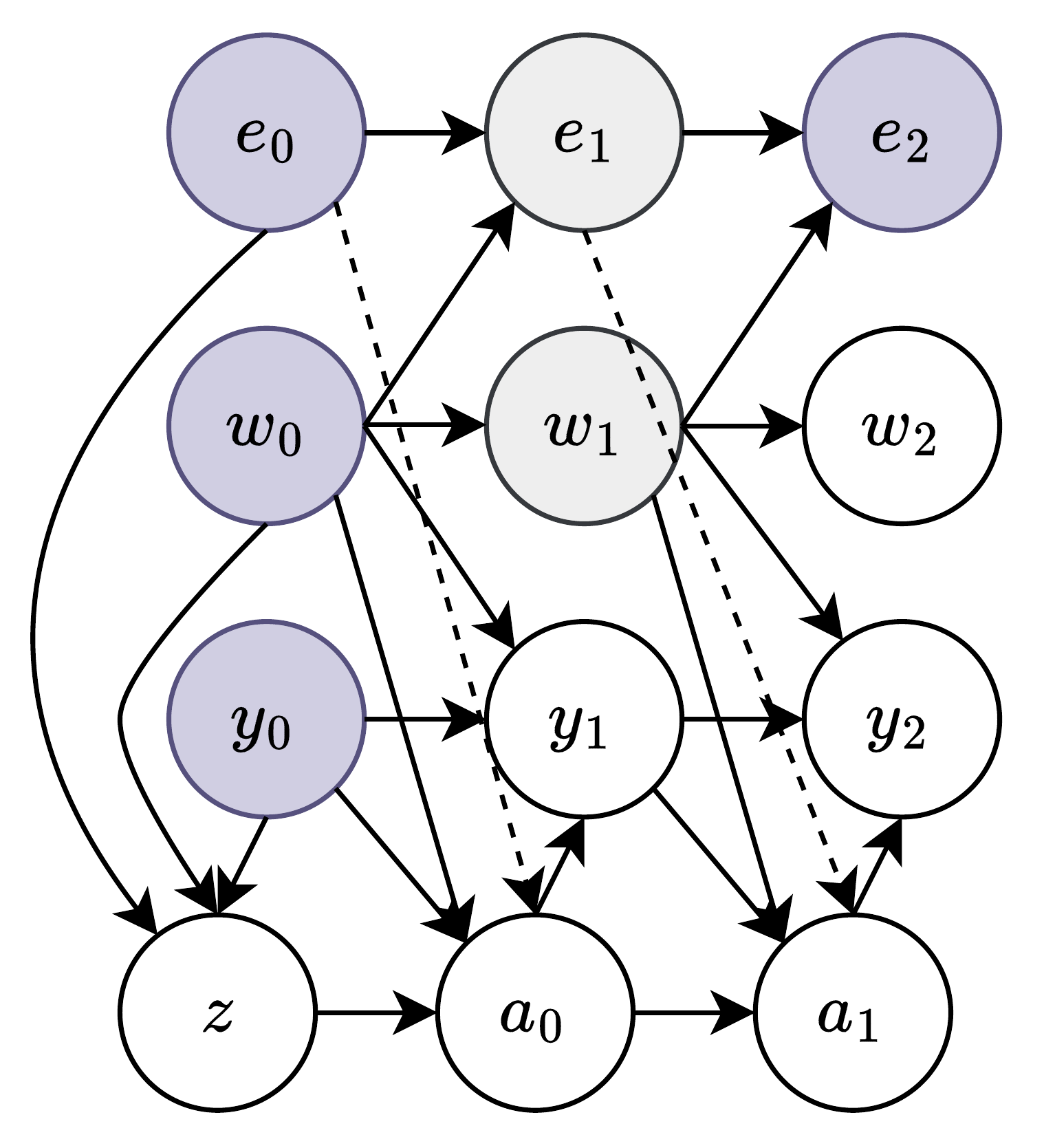}
        \caption{$Z \perp W^+ | E^+, s_0$.}
        \label{fig:bayesball-initial-and-e-given}
        \vspace{2em}
    \end{subfigure}
    \begin{subfigure}[b]{0.28\linewidth}
        \includegraphics[width=0.7\linewidth]{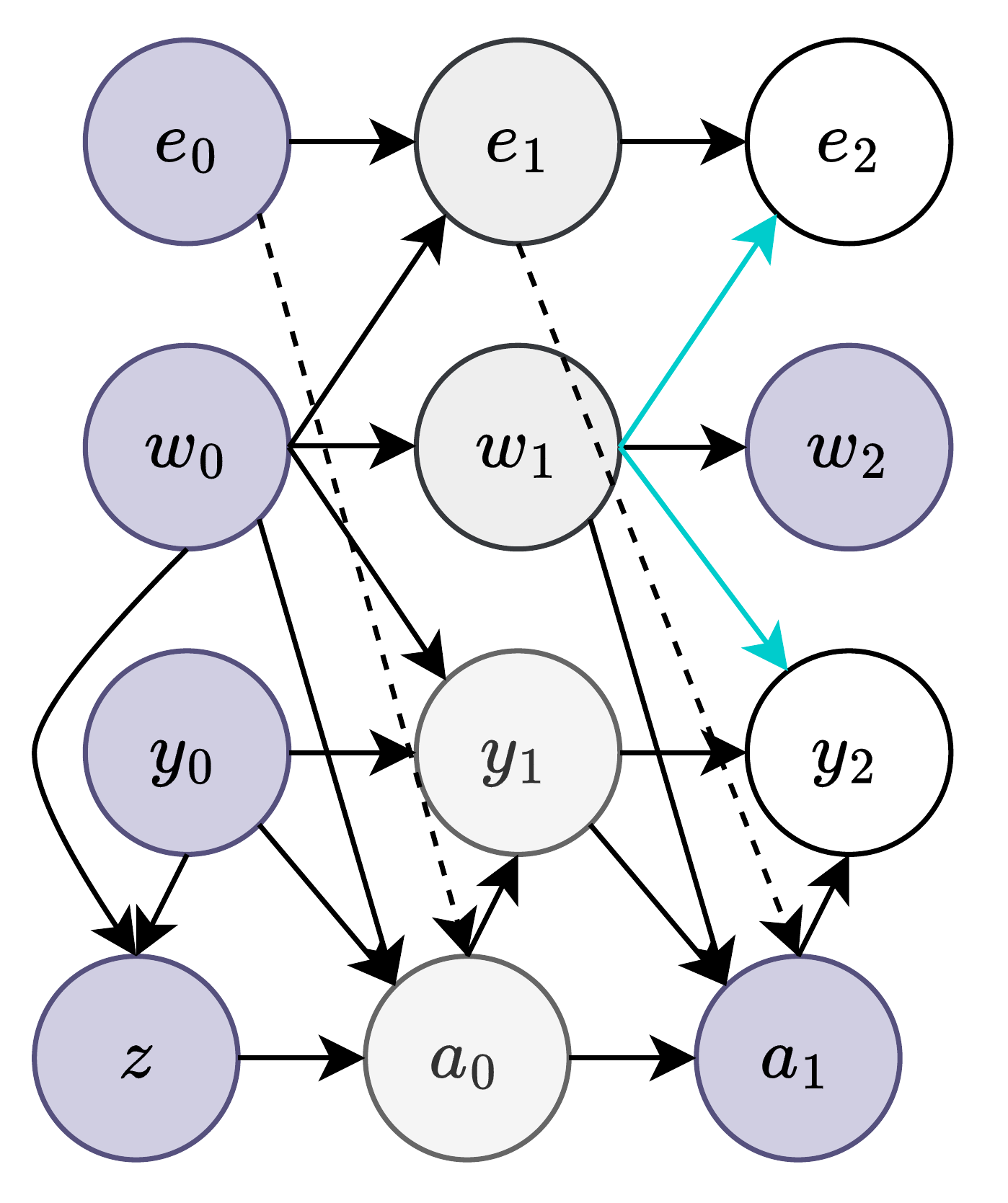}
        \caption{In general $Y_t \not\perp E_t \mid W_t, A_{t-1}$, because there is a non blocked path between them through $W_{t-1}$.}
        \label{fig:conditional-3}
    \end{subfigure}
    \caption{Graphical model in the presence of a \(w\) that is a confounding factor for both \(e\) and \(y\).  Purple and gray shades are, respectively, primary and secondary shades in the Bayesian ball algorithm for determining d-separation.}
    \label{fig:bayesball-complex-gm}
\end{figure}
\end{proof}

\subsection{Extending the proof of Theorem~\ref{theorem:policy-empowerment-e-invariance}}
We start by using  the chain rule to rewrite the MI objective.
\begin{align}\label{eq:mi-chainrule-hard}
    I(S^+;Z\mid s_0) &=I(Y^+, W^+, E^+;Z \mid y_0, w_0, e_0 )\notag \\&= \cancelto{0}{I(E^+ ; Z \mid y_0, w_0, e_0)} + \cancelto{0}{I(W^+ ; Z\mid E^+, y_0, w_0, e_0 )} + I(Y^+ ; Z \mid E^+, W^+, y_0, w_0, e_0 ) 
\end{align}
The first two mutual information terms are zero. This can be verified directly from the graphical model by applying the Bayes-ball algorithm to check the corresponding d-separation relations; see respectively Figure~\ref{fig:bayesball-initialgievn} and Figure~\ref{fig:bayesball-initial-and-e-given}.

The third term, however in general cannot be reduced to $I(Y^+;Z\mid W^+, s_0)$ even if the policy is $e$-invariant, because $Y_t \not\perp E_t \mid W_t , s_0, Z$ even if the policy is $e$-invariant due to the unblocked path through $W_{t-1}$ (Refer to Figure~\ref{fig:conditional-3}). However under assumption~\ref{assumption:conditonal-indep} and if the policy is $e$-invariant then:

\begin{align*}
    \text{Assumption~\ref{assumption:conditonal-indep} and \text{e-invariant policy }} &\implies I(Y^+ ; Z \mid E^+, W^+, y_0, w_0, e_0 ) = I(Y^+ ; Z \mid W^+, y_0, w_0 )
    \\&\underset{I(W^+ ; Z\mid E^+, y_0, w_0, e_0 )=0}{\implies} I(Y^+,W^+;Z\mid E^+,s_0) = I(Y^+,W^+;Z\mid y_0, w_0)
\end{align*}
Again remember the notation $X=(Y,W)$. So far we obtained that $I(S^+;Z\mid s_0) = I(X^+;Z \mid x_0)$ if the policy is $e$-invariant, but now we need to show that limiting ourselves to the set of $e$-invariant policies does not reduce the MI.

It suffices to show that for any distribution $p(z \mid s_0)$, there is another distribution of skills $p'(z'\mid x_0)$ that only has a support on  $e$-invariant policies and $I(S^+;Z \mid s_0) \leq I(X^+;Z' \mid x_0)$.

\begin{align}
    I(S^+;Z \mid s_0)  &\overset{\ref{eq:mi-chainrule-hard}}{=} I(Y^+ ; Z \mid E^+, W^+,s_0)\notag \\&= \E_{e^+ \sim d_\gamma(e^+\mid e_0,w_0 )}[I(Y^+;Z\mid W^+,e^+, s_0)]\notag\\&=  \E_{e^+ \sim d_\gamma(e^+\mid e_0,w_0 )} \E_{w^+ \sim d_\gamma(w^+\mid w_0 )}\E_{z\sim p(z|s_0) , y^+ \sim d^{\pi_z}_\gamma(y^+\mid e^+,w^+,s_0)} [\log \frac{d_{\gamma}^{\pi_z}(y^+\mid e^+,w^+, s_0 )}{\sum_{\bar{z}} p(\bar{z}\mid s_0)d_{\gamma}^{\pi_{\bar{z}}}(y^+\mid e^+ ,w^+, s_0 )}]
\end{align}

Similar to the proof outlined in~\ref{proof:main-theorem} for any policy $\pi$ that attends to the noise $e$ we construct a new policy $\pi'$ that treats $e$ as an inner randomness so the policy itself doesn't need to attend to $e$ in order to make decisions. For each $w_0,e_0$ there is a $e^+$ that maximizes $I(Y^+;Z\mid W^+, e^+, s_0)$, let's denote that best $e^+$ by $e^*$. We use conditional independence in the markov chain graphical model  and the assumption~\ref{assumption:conditonal-indep} to simplify $ p(y_t \mid e^+=e^* , w^+, s_0, z)$ for any $t > 0$.


\begin{align}\label{eq:comlex-opening-up-probability}
    p(y_{t} \mid e^+=e^* , w^+, s_0, z) &= \sum_{w_{t-1}, a_{t-1}, y_{t-1}} \textcolor{magenta}{p(y_{t-1} \mid e^* , w^+, s_0, z)}\,\textcolor{orange}{p(w_{t-1} \mid \cancel{e^*}, y_{t-1}, w^+ , s_0, \cancel{z} )} \,\, \notag\\& \times \textcolor{blue}{p(a_{t-1} \mid w_{t-1}, y_{t-1}, e^* , \cancel{w^+} , s_0, z)}\,\textcolor{red}{p(y_t \mid w_{t-1}, a_{t-1}, y_{t-1})}\, 
\end{align}
Where by definition:
\begin{equation}\label{eq:complex-opening-action}
    \textcolor{blue}{p(a_{t-1} \mid w_{t-1}, y_{t-1}, e^* ,  s_0, z) = \E_{e_{t-1} \sim p(e_{t-1} \mid e^+=e^*, s_0)}[ \pi(a_{t-1} \mid w_{t-1}, y_{t-1}, e_{t-1}, z)]}
\end{equation}For any policy $\pi$ that attends to $e$, we can construct a new policy $\pi'$ as follows:
\begin{equation}\label{eq:constructed-policy-new}
\pi'_t(a \mid y,w,z')
\;\coloneqq\;
\mathbb{E}_{e_t \sim p(e_t \mid e^+=e^*, e_0,w_0)}\!\left[\pi(a \mid y,w,e_t,z)\right].
\end{equation}
We assess $p(y_t \mid  w^+, s_0, z')$ for the new policy:
\begin{align}\label{eq:induction-proof-complex}
    p(y_t \mid w^+, s_0, z') &= \sum_{w_{t-1}, a_{t-1}, y_{t-1}}\textcolor{magenta}{p(y_{t-1} \mid  w^+,s_0, z')}\,\textcolor{orange}{p(w_{t-1} \mid y_{t-1}, w^+ , s_0 )}\notag\\&\times \underbrace{\textcolor{blue}{p(a_{t-1} \mid w_{t-1},  y_{t-1},s_0, z')}}_{\pi'_{t-1}(a \mid y_{t-1},w_{t-1},z')}  \,\textcolor{red}{p(y_t \mid w_{t-1}, a_{t-1}, y_{t-1})}\,
\end{align}
No we use an induction argument, since the initial state is the same for both $\pi$ and $\pi'$, if $ p(y_{t-1} \mid  w^+,s_0, z') =  p(y_{t-1} \mid  w^+,s_0, e^* , z)$ then we immediately conclude $ p(y_{t} \mid  w^+,s_0, z') =  p(y_{t} \mid  w^+,s_0, e^* , z)$ from Equations~\ref{eq:comlex-opening-up-probability},~\ref{eq:complex-opening-action},~\ref{eq:constructed-policy-new} and ~\ref{eq:induction-proof-complex}. Therefore $d^{\pi_z}(y^+ \mid e^* , w^+ , s_0) = d^{\pi'_{z'}}(y^+ \mid  w^+ , s_0)$ Therefore by replacing all $e$-dependent policies that have a non zero support under $p(z\mid s_0)$ with their corresponding $e$-invariant policy we will get the following: 
\begin{align*}
I(Y^+;Z' \mid W^+,s_0)
&=
\mathbb{E}_{w^+ \sim d_\gamma(w^+\mid w_0)}
\mathbb{E}_{z'\sim p(z'\mid s_0)}
\mathbb{E}_{y^+ \sim d^{\pi'_{z'}}_\gamma(y^+\mid w^+,s_0)}
\left[
\log
\frac{
d_{\gamma}^{\pi'_{z'}}(y^+\mid w^+,s_0)
}{
\sum_{\bar z} p(\bar z\mid s_0)
d_{\gamma}^{\pi'_{\bar z}}(y^+\mid w^+,s_0)
}
\right] \\
&=
I(Y^+;Z \mid W^+, e^*,s_0) \\
&\geq
I(Y^+;Z \mid W^+,E^*,s_0).
\end{align*}

Finally we note that, i) the construction of $\pi'$ depends on $e_0, e^*$ but for each $w_0$ there is a $e_0$ that $e^*$ that yields the maximum MI and we consider those as the inner parameters of the policy. The important fact is that the policy itself doesn't need to actively sense $e_t$ instead it is parameterized by these parameters and it means that if we search in the space of the $e$-invariant policies, the optimal policy is in that set. And ii) Similar to the argument discussed in~\ref{proof:main-theorem}, Theorem~3.1 of \citet{Altman1999CMDP} shows that the set of stationary Markovian policies is sufficient to produce any possible discounted occupancy measure therefore, although $\pi'$ is non-stationary by construction, there is a stationary policy (stationary in $w,y$) that produces the same discounted occupancy measure.

\subsection{Extending the proof of Corollary~\ref{corollary:invariance-repr} to the coupled $y$ and $e$ through $w$}
We write the Bayes optimal discriminator:
\begin{align}\label{eq:app-posterior-assess-complex}
q(z\mid (y_0,w_0,e_0), s^+)
&= \frac{p^*(z\mid s_0)\, d^{\pi_z}(s^+\mid s_0)}
{\sum_{\bar z} p^*(\bar z\mid s_0)\, d^{\pi_{\bar z}}(s^+\mid s_0)}\notag \\
&= \frac{p^*(z\mid (y_0, w_0))\, \cancel{d_\gamma(w^+ \mid w_0)}\,d^{\pi_z}(y^+\mid (y_0, w_0), w^+)\, d^{\pi_{z}}_\gamma(e^+\mid (e_0, w_0, y_0), w^+, y^+)}
{\sum_{\bar z} p^*(\bar z\mid (y_0 , w_0))\,\cancel{d_\gamma(w^+ \mid w_0)}\, d^{\pi_{\bar z}}(y^+\mid (y_0, w_0), w^+)\, d^{\pi_{\bar z}}_\gamma(e^+\mid (e_0, w_0, y_0), w^+, y^+)} \\
&\overset{W_{t-1} \perp Y_t \mid W_t}{=} \frac{p^*(z\mid (y_0, w_0))\, d^{\pi_z}(y^+\mid (y_0, w_0), w^+)\, \cancel{d_\gamma(e^+\mid (e_0, w_0), w^+)}}
{\sum_{\bar z} p^*(\bar z\mid (y_0 , w_0))\, d^{\pi_{\bar z}}(y^+\mid (y_0, w_0), w^+)\, \cancel{d_\gamma(e^+\mid (e_0, w_0), w^+)}}\notag \\&=
q(z \mid (y_0, w_0, \hat e_0) , s^+)\quad\quad \forall y_0, w_0, e_0, \hat{e}_0, s^+, z\notag
\end{align}
Where Equation~\ref{eq:app-posterior-assess-complex} comes from Theorem~\ref{theorem:policy-empowerment-e-invariance}, the fact that empowerment picks only $e$-invariant policies in the presence of information bottleneck regularizer therefore $p^*(z\mid s_0)$ and $d^{\pi_z}(.)$ are invariant to $e$. And the third equality is the direct implication of Assumption~\ref{assumption:conditonal-indep}, the fact that $W_{t-1}$ of the past is independent of $Y_t$ given $W_t$, so conditioned on $w^+$, the old $w$s are all independent of $y^+$ and the policy therefore $e^+$ is independent of them too since the only possible dependence of $e^+$ to the policy or $y^+$ is through $w_{t-1}$.

A very similar argument can be done to show $q(z \mid s_0 , (y^+, w^+, e^+)) = q(z \mid s_0 , (y^+, w^+, \hat e^+))\quad \forall y^+, w^+, e^+, \hat{e}^+, s_0, z$

\section{ Experiments Backing The Theoretical Results}\label{app:identifiability-exp}

%

\subsection{RQ5.Identifiability of control-relevant features}\label{app:identifiability}


Prior work shows that practical MISL algorithms, such as CSF~\citep{ICLR2025_MISLFly}, can recover identifiable state representations~\citep{reizinger2025skilllearningpolicydiversity}, but these results rely on specific algorithmic choices, including contrastive losses and inner-product parameterizations. Here, we ask whether backward MISL representations capture all control-relevant features in principle, independent of the algorithm.

To show this, we design a didactic experiment in which we construct 50 random deterministic MDPs with 5 states that satisfy the connectivity assumption of Proposition~\ref{proposition:deterministic-identifiability}. For each MDP, we solve the empowerment optimization at the most connected state (a state where every other state is reachable from it in two steps) and compute the resulting posterior over skills. We use the $\ell_1$ distance between posteriors as a proxy for backward representation distance and report the minimum distance across all state pairs and MDPs, which is always equal to or greater than 2, meaning that no two states have the same backward representation. The code is available \href{https://anonymous.4open.science/r/MISL_representations-20ED/Identifiability.ipynb}{here}.

\subsection{Representation invariance to action relabeling}\label{App:action-relabelling-exp}
In order to verify Theorem~\ref{theorem:forward-alias-similar-future-control}, we augment the pointMaze environment as follows: in the beginning of each episode we pick one of room 0 or 1 with probability $\frac12$, the id of the room will be indicated as part of the state i.e., $(x,y,\text{roomID})$. In room 0 the actions $a_x,a_y$ behave as expected i.e., shift the $x,y$ respectively by $a_x , a_y$; However in room 1, a scaling of 2x is applied to the actions while limiting the range of the actions by half (so the range of possible movement in room 0 and 1 is the same). States $(x,y,0)$ and $(x,y,1)$ are states with similar future outcomes but with different action labeling e.g., taking action $a_x$ in state $(x,y,0)$ will produce the same outcome as taking action $\frac{a_x}{2}$ in state $(x,y,1)$. 

We measure the relative normalized mean squared error of $\phi(s,y,0)$ and $\phi(x,y,1)$ for $x,y$ sampled from the trajectories as follows:
\[\text{RelNMSE} = \frac{\E_{x,y}\|\phi(x,y,0)-\phi(x,y,1)\|^2}{\E_{x,y}\|\phi(x,y,0)\|^2+\E_{x,y}\phi(x,y,1)\|^2}\]
Lower values of RelNMSE shows that the corresponding states have a similar representations. As indicated by Figure~\ref{fig:action-relablling}, RelNMSE quickly drops and later stabilizes as training progresses, verifying our theoretical prediction in Theorem~\ref{theorem:forward-alias-similar-future-control}.

\begin{figure}
    \centering
    \includegraphics[width=0.5\linewidth]{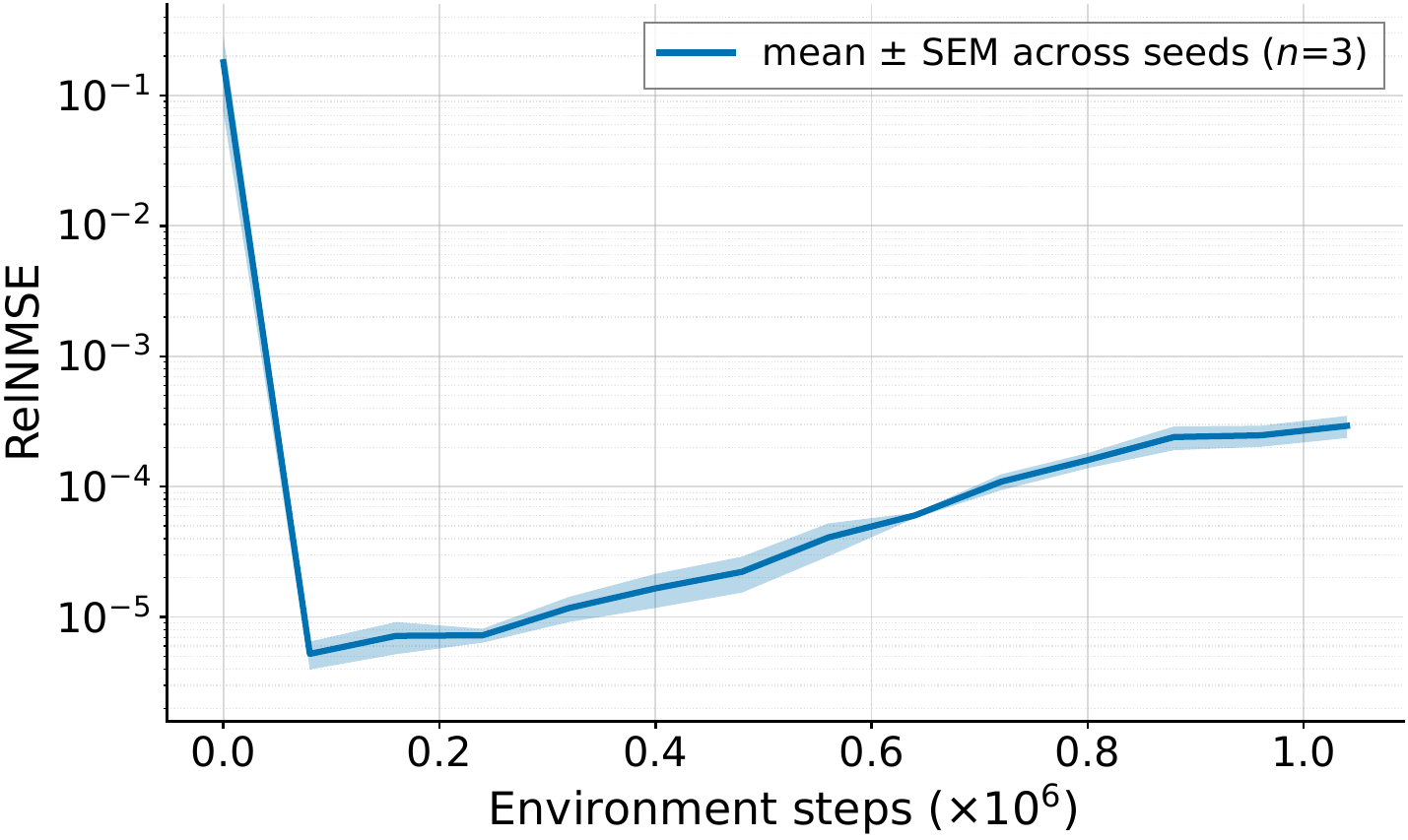}
    \caption{States with similar future state distribution but different action scaling have the same representation. }
    \label{fig:action-relablling}
\end{figure}

\section{Baseline Experiments}\label{app:baseline}
In this section, we compare MISL-based representation learning to prior methods---namely AC-state~\citep{lamb2022guaranteed, efroni2022provably} and deep bisimulation~\citep{zhang2020learning}---that also aim to learn representations invariant to uncontrollable features. Our goal is not to provide a pure performance comparison, since these methods may perform well, or even better, in certain settings and under their intended assumptions. Instead, we focus on the conceptual limitations and practical assumptions required for them to succeed, and contrast these with empowerment-based representation learning. This emphasis aligns with the theoretical focus of the paper. We empirically challenge the key assumptions---explicit or implicit---under which these methods operate, and show where they can fail. A broader empirical comparison is left to future work.
\subsection{Multi-step inverse dynamics (AC-State)}\label{app:ac-state}
\citet{efroni2022provably, lamb2022guaranteed} present a representation-learning framework in which an encoder $f(\cdot)$ and an action predictor network $P_{\theta}(a_t \mid f(x_t), f(x_{t+k}); k)$ are trained jointly, where the action predictor is trained to estimate the first action $a_t$ taken to move from observation $x_t$ to a future observation $x_{t+k}$.
Observations consist of a control-relevant (endogenous) part and a control-irrelevant (exogenous) part (BlockMDP setup). They show that if the training data is itself invariant to the exogenous noise, then the learned representation $f(x)$ is also noise invariant. Moreover, under additional assumptions such as bounded diameter of the endogenous dynamics and deterministic transitions, the representation provably recovers all control-relevant state features~\citep{efroni2022provably, lamb2022guaranteed}.

Conceptually, this framework is closely related to MISL, but there is an important difference: AC-state does not specify how the training data should be collected, yet its guarantees rely on the data already being noise invariant. In practice, this is a strong assumption. For example, the implementation of~\citet{lamb_controllablelatentstate} relies on pre-collected datasets for robotic manipulation and self-driving tasks. In the robotic setting, the dataset is generated using a small set of high-level actions such as ``Move North,'' ``Move South'', etc. This provides structured, high-coverage, and largely noise-invariant action selection. However, in many realistic settings the environment is unknown, expert-style data is unavailable, and it is unclear how to design a noise-invariant exploration policy, especially when the noise dimensions are not known in advance.

In contrast, MISL is fully unsupervised: it does not rely on expert datasets, and it prescribes both policy learning and data collection as part of the algorithm itself. As a result, policy noise invariance emerges naturally during training rather than being assumed beforehand (Theorem~\ref{theorem:policy-empowerment-e-invariance}).

To study the importance of noise-invariant data collection, we adapt the codebase of~\citet{lamb_controllablelatentstate} to a 2D point-maze environment augmented with one Gaussian noise dimension. We collect datasets using policies with different levels of noise dependence, and then measure the ratio of representation variance along the noise dimension to the variance along the true state dimensions. During data collection, the 2d actions are generated as a linear combination of a goal-directed component and a noise-driven component:
\[
a = w_{\text{state}}\, a_{(10,10)}(x,y) + w_{\text{noise}}\, \tanh(n)\,\vec{u},
\]
where $a_{(10,10)}(x,y)$ is the 2d action that moves toward the fixed goal at $(10,10)$ from $(x,y)$, $n$ is the scalar noise variable and $\vec{u}$ is a fixed 2d direction. The weights $w_{\text{state}}$ and $w_{\text{noise}}$ control how noise-invariant the behavior policy is: larger $w_{\text{state}}$ and smaller $w_{\text{noise}}$ produce more noise-invariant trajectories. Figure~\ref{fig:acstate_noise} shows that when the data collection policy is highly noise invariant (e.g., $95\%$), the learned representations largely ignore the noise dimension. However, as the behavior policy becomes more noise dependent (e.g., $60\%$ or $20\%$ noise invariance), the learned representations increasingly encode the noise variable.

\begin{figure}[t]
    \centering
    \includegraphics[width=0.55\linewidth]{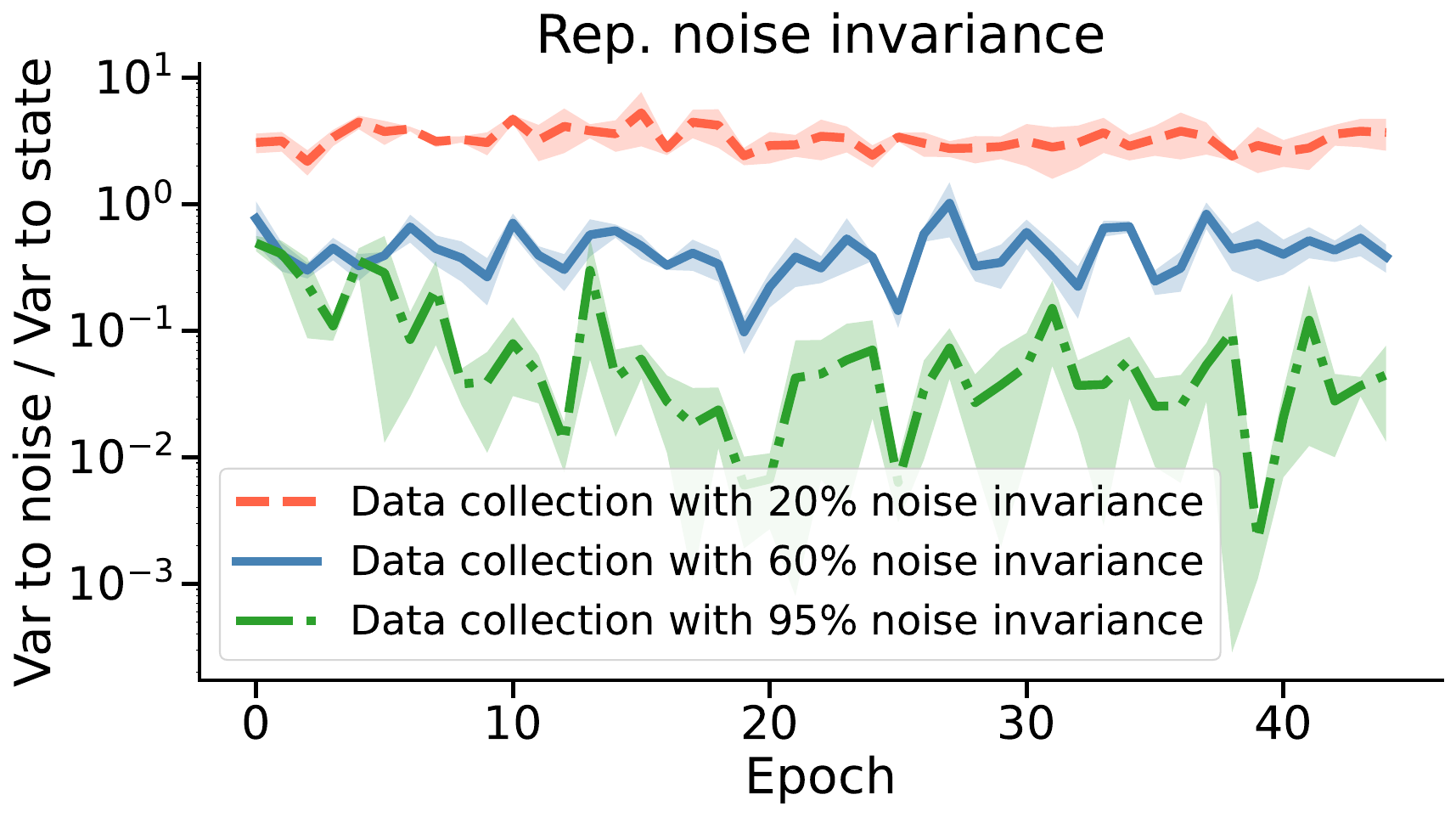}
    \caption{AC-state representations noise invariance (\textbf{Lower better}). This method are unable of learning noise-invariant representations if the data collection is not noise invariant.}
    \label{fig:acstate_noise}
\end{figure}

It is worth noting that, in tabular environments, the codebase of~\citep{lamb_controllablelatentstate} proposes a variant that jointly performs data collection and representation learning, without requiring a pre-collected dataset, and ensures noise-invariant data collection. However, this approach assumes a tabular setting with a finite set of latent states (e.g., 120 in their implementation), which limits its practicality in high-dimensional or continuous domains.

Their method builds a transition model in the learned latent space and uses Dijkstra-style planning to guide exploration. Since planning is performed entirely in the latent space, the resulting actions inherit the noise invariance of the representation. While effective in small, discrete settings, this approach does not readily scale to non-tabular environments.
\subsection{Bisimulation}\label{app:bisim}
Bisimulation is a widely used representation-learning principle in reinforcement learning. It aims to encode states according to how similar they are in terms of immediate reward and future outcomes. Intuitively, if two states under the current policy lead to similar rewards now and similar rewards in the future, then they should have similar representations~\citep{ferns2011bisimulation, hansen2022bisimulation, zhang2020learning}.

For example,~\citet{zhang2020learning} train representations using the loss
\[
J(\phi)
=
\left(
\lVert z_i - z_j \rVert_1
-
|r_i - r_j|
-
\gamma\, W_2\!\left(\hat{P}(\cdot \mid z_i, a_i),\; \hat{P}(\cdot \mid z_j, a_j)\right)
\right)^2,
\]
where $z_i=\phi(s_i)$ and $z_j=\phi(s_j)$ are the learned representations of states $s_i$ and $s_j$, and $\hat{P}(\cdot \mid z,a)$ is a learned transition model in the latent space. Here, $W_2$ denotes the 2-Wasserstein distance between the predicted next-state distributions.

This objective encourages states with similar reward and similar future behavior to be close in representation space. In practice,~\citet{zhang2020learning} show that this approach works well in high-dimensional noisy environments when the reward does not depend on the noise. In such cases, states that differ only in nuisance variables but share the same endogenous state have the same reward and future reward, and are therefore mapped to similar representations.

We evaluate the noise invariance of bisimulation representations in a 2D maze augmented with 5-dimensional Gaussian noise. We use the codebase of~\citet{zhang2020learning}. We consider two reward functions in two different settings. The first is a standard dense reward given by the negative distance to a fixed goal at $(10,10)$. The second is
\[
-\|x - x_{\text{target}}\|,
\]
which depends only on the $x$ coordinate and encourages the agent to reach any state satisfying $x = x_{\text{target}}$.

We test bisimulation under both temporally uncorrelated and temporally correlated noise. In Figure~\ref{fig:bisim-repr-inv} (left), where the noise is temporally uncorrelated, the representation variance along the noise dimensions is much smaller than along the true state dimensions $(x,y)$, as expected. However, interestingly, when the noise is made temporally correlated (Figure~\ref{fig:bisim-repr-inv}, right), the variance of the representation along the noise dimensions increases significantly.
To understand this, note that bisimulation relies on a learned transition model $\hat{P}(\cdot \mid z,a)$. When the noise is temporally correlated, the transition model can predict future noise from the current noise. As a result, even two states that differ only in their noise components induce different next-state distributions, leading to a large Wasserstein distance 
\(
W_2\!\left(\hat{P}(\cdot \mid z_i, a_i),\; \hat{P}(\cdot \mid z_j, a_j)\right),
\) and therefore different representations, even though their endogenous (control-relevant) features are identical. This is further supported by Figure~\ref{fig:bisim-trns-inv}, which shows the variance of the mean of the transition model $\hat P(\cdot \mid z,a)$ (modeled as a Gaussian in this codebase) along the noise dimensions relative to the true state dimensions. The transition model clearly captures the noise, and its trend closely matches that of the learned representation in the right panel of Figure~\ref{fig:bisim-repr-inv}. This strongly suggests that the transition model is the main source of the representation's sensitivity to temporally correlated noise.
\begin{figure}[ht]
        \centering
\includegraphics[width=0.8\linewidth]{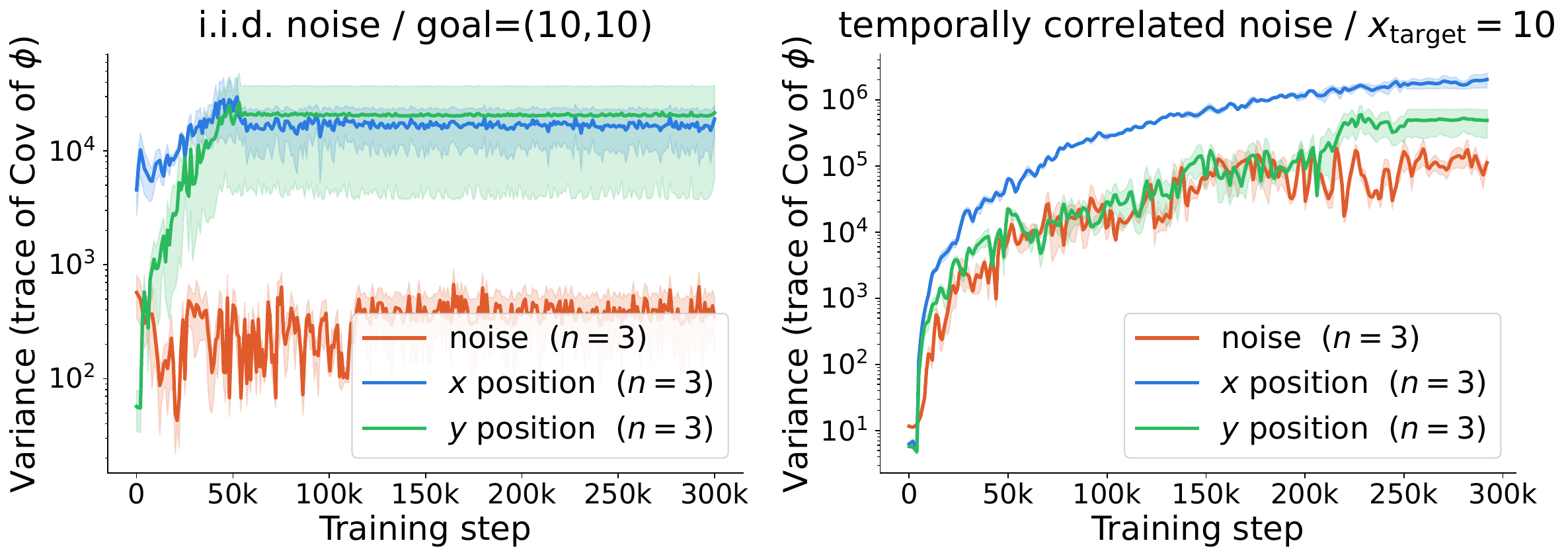}
    \caption{Bisimulation representation variance to noise and state dims. (\textcolor{red}{lower better} , \textcolor{blue}{higher better} , \textcolor{green}{higher better}).
    }
    \label{fig:bisim-repr-inv}    
\end{figure}

This result is surprising, since maximal bisimulation is often interpreted as enforcing noise invariance. However, in practice, methods such as~\citet{zhang2020learning} do not guarantee the maximal bisimulation relation. Instead, capturing temporally correlated noise arises as a byproduct of using a predictive transition model within the learning objective.

\begin{figure}[h]
    \centering
    \includegraphics[width=0.5\linewidth]{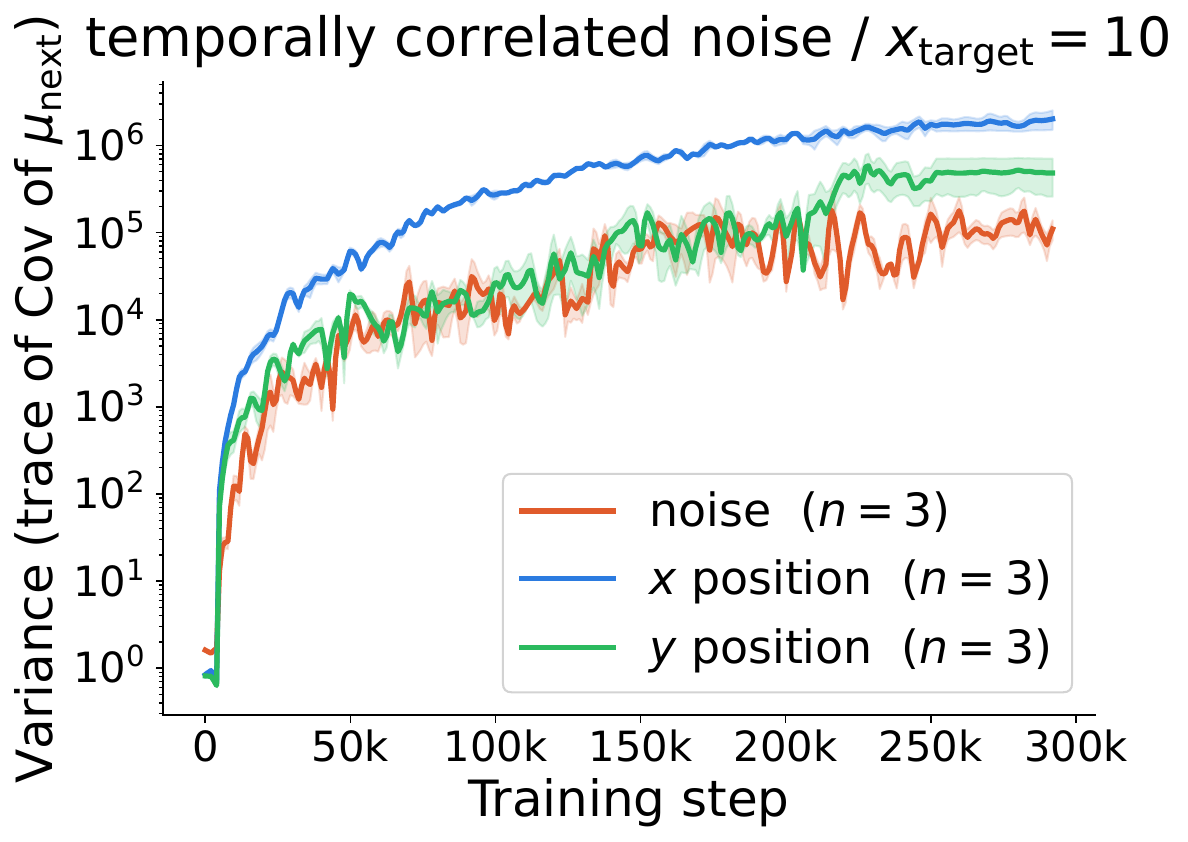}
     \caption{The variance of the \textbf{mean of the transition model} to noise and state dims. (\textcolor{red}{lower better} , \textcolor{blue}{higher better} , \textcolor{green}{higher better}).
    }
    \label{fig:bisim-trns-inv}
\end{figure}
In contrast, as shown earlier in Figure~\ref{fig:generalization-temporally}, temporal correlation in the noise does not break MISL; instead, it further strengthens its noise invariance, showing the conceptual superiority of MISL-based representation learning that doesn't use any predictive modeling.

The second conceptual limitation we observe in bisimulation is reward dependence. When we change the reward from the goal-based reward that depends on both $(x,y)$ (Figure~\ref{fig:bisim-repr-inv}, left) to the $x_{\text{target}}$ reward that depends only on $x$ (Figure~\ref{fig:inv-reps-size}, right), the learned representation becomes much less sensitive to $y$. This highlights that bisimulation representations primarily capture the state features that the reward depends on. As a result, if the reward is not sufficiently expressive, they may fail to encode all control-relevant features.

This makes bisimulation representations well-suited for a specific known reward, but less suitable as a general pretraining objective when the downstream task is unknown or may change. In contrast, MISL-based representations aim to capture all control-relevant features, rather than only those emphasized by a particular reward. This makes them more appropriate for adaptation to new reward functions at test time.

\newpage
\section{Additional figures}

We evaluate whether MISL representations trained with 5-dimensional i.i.d.\ Gaussian noise remain invariant when tested on Gaussian noise with cross-dimensional correlations in the \texttt{Point-Maze} environment. Figure~\ref{fig:cross-dim-generalization} shows that, even as the correlation strength increases, the representations maintain low variance along the noise dimensions and high variance along the state dimensions, demonstrating strong generalization to out-of-distribution noise distributions.

\begin{figure}[th]
    \centering
    \includegraphics[width=0.5\linewidth]{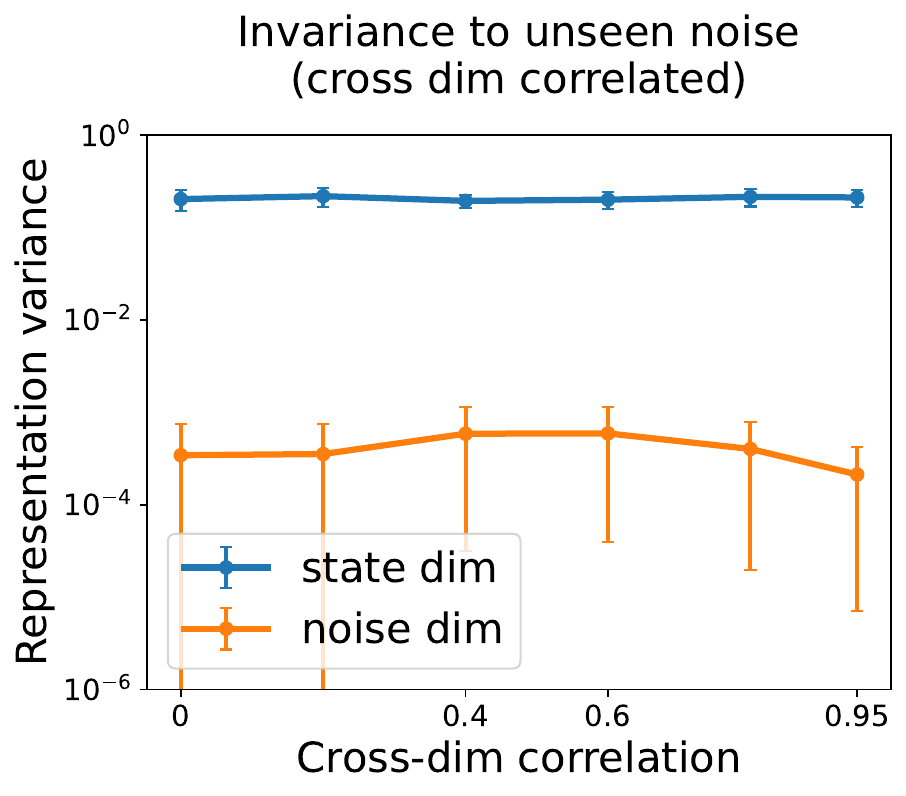}
    \caption{MISL representations trained with i.i.d.\ noise dimensions are also invariant to correlated noise dimensions.}
    \label{fig:cross-dim-generalization}
\end{figure}


\end{document}